\newcommand{\normmm}[1]{{\left\vert\kern-0.25ex\left\vert\kern-0.25ex\left\vert #1 
   \right\vert\kern-0.25ex\right\vert\kern-0.25ex\right\vert}}
\newcommand{\normvec}[1]{\Vert #1 \Vert}
\newcommand{\sroman}[1]{(\text{\romannumeral#1})}
\newcommand{\F}{{1,\infty}}
\newtheorem{assumption}[theorem]{Assumption}
\newenvironment{customlemma}[1]
  {\innercustomlemma}
  {\endinnercustomlemma}
\newenvironment{customthm}[1]
  {\innercustomthm}
  {\endinnercustomthm}
\newenvironment{custompro}[1]
  {\innercustompro}
  {\endinnercustompro}
\begin{document}

\title{Generalization Bounds for Adversarial Contrastive Learning}

\author{\name Xin Zou \email zouxin2021@gmail.com \\
       \addr School of Computer Science\\
       Wuhan University\\
       Wuhan, Hubei, China
       \AND
       \name Weiwei Liu\thanks{Corresponding Author.} \email liuweiwei863@gmail.com \\
       \addr School of Computer Science\\
       Wuhan University\\
       Wuhan, Hubei, China}

\editor{Joan Bruna}

\maketitle
\begin{abstract}
Deep networks are well-known to be fragile to adversarial attacks, and adversarial training is one of the most popular methods used to train a robust model. To take advantage of unlabeled data, recent works have applied adversarial training to contrastive learning (Adversarial Contrastive Learning; ACL for short) and obtain promising robust performance. However, the theory of ACL is not well understood. To fill this gap, we leverage the Rademacher
complexity to analyze the generalization performance of ACL, with a particular focus on linear models and multi-layer neural networks under $\ell_p$ attack ($p \ge 1$). Our theory shows that the average adversarial risk of the downstream tasks can be upper bounded by the adversarial unsupervised risk of the upstream task. The experimental results validate our theory.
\end{abstract}
\begin{keywords}
  Robustness, Adversarial learning, Contrastive learning, Rademacher complexity, Generalization bound.
\end{keywords}

\section{Introduction} \label{sec-introduction}
Deep neural networks (DNNs) have achieved state-of-the-art performance in many fields. However, several prior works \citep{DBLP:journals/corr/SzegedyZSBEGF13, DBLP:journals/corr/FGSM} have shown that DNNs may be vulnerable to imperceptibly changed adversarial examples, which causes a lot of focus on the robustness of the models \citep{DBLP:conf/iclr/PGD, DBLP:conf/eccv/MaoGNRSYV20, xiyuanli-2022}.

One of the most popular approaches to achieving adversarial robustness is adversarial training, which involves training the model with samples perturbed to maximize the loss on the target model \citep{DBLP:journals/corr/FGSM, DBLP:conf/iclr/PGD, DBLP:conf/icml/TRADES}. \citet{DBLP:conf/nips/SchmidtSTTM18} show that adversarial robust generalization requires a larger amount of data, while \citet{DBLP:conf/icml/YinRB19, DBLP:conf/icml/AwasthiFM20} show that the Rademacher Complexity of adversarial training is strictly larger in theory than that of natural training, which implies that we need more data for adversarial training.

Since the labeled data is limited and expensive to obtain, one option would be to use large-scale unlabeled data and apply self-supervised learning \citep{DBLP:conf/iclr/GidarisSK18, DBLP:conf/eccv/NorooziF16}, an approach that trains the model on unlabeled data in a supervised manner by utilizing self-generated labels from the data itself. Contrastive Learning (CL) \citep{DBLP:conf/icml/ChenK0H20-SimCLR,DBLP:conf/cvpr/He0WXG20-MoCo}, which aims to maximize the feature similarity of similar pairs and minimize the feature similarity of dissimilar ones, is a popular self-supervised learning technique.

Recently, \citet{DBLP:conf/nips/KimTH20, DBLP:conf/nips/HoN20, DBLP:conf/nips/JiangCCW20} apply adversarial training in CL and achieve state-of-the-art model robustness. They find that if adversarial training is conducted on the upstream contrastive task, the trained model will be robust on the downstream supervised task. However, their results are all empirical and lack theoretical analysis. To fill this gap, we here present a theoretical analysis of adversarial contrastive learning through the lens of Rademacher complexity, with a particular focus on linear models and multi-layer neural networks. Our theoretical results show that the average adversarial risk of the downstream tasks can be upper bounded by the adversarial unsupervised risk of the upstream task; this implies that if we train a robust feature extractor on the upstream task, we can obtain a model that is robust on downstream tasks.

The remainder of this article is structured as follows: \S \ref{sec-relatedwork} introduces some related works in this field. \S \ref{sec-setting-up} provides some basic definitions and settings that will be used in the following sections. \S \ref{sec-theoretical-ana} presents our first part's main results, which show a connection between the robust risk for the upstream task and the robust risk for the downstream tasks. \S \ref{sec-bin} shows our second part's results, it outlines our bounds for linear models and multi-layer neural networks by bounding the Rademacher complexity of each hypothesis class. \S \ref{sec-exp} shows some experimental results that verify our theory; finally, the conclusions are presented in the last section.

\section{Related Work} \label{sec-relatedwork}

\textbf{Adversarial Robustness.} \citet{DBLP:journals/corr/SzegedyZSBEGF13} show that DNNs are fragile to imperceptible distortions in the input space. Subsequently, \citet{DBLP:journals/corr/FGSM} propose the fast gradient sign method (FGSM), which perturbs a target sample towards its gradient direction to increase the loss and then uses the generated sample to train the model in order to improve the robustness. Following this line of research, \citet{DBLP:conf/iclr/PGD, DBLP:conf/cvpr/Moosavi-Dezfooli16, DBLP:conf/iclr/KurakinGB17a, DBLP:conf/sp/Carlini017} propose iterative variants of the gradient attack with improved adversarial learning frameworks. Besides, \citet{xingongma-2022} analyze the trade-off between robustness and fairness, \citet{https://doi.org/10.48550/arxiv.2302.04025} study the worst-class adversarial robustness in adversarial training. For the theoretical perspective, \citet{DBLP:conf/colt/MontasserHS19} study the PAC learnability of adversarial robust learning, \citet{jingyuanxu-2022} extend the work of \citet{DBLP:conf/colt/MontasserHS19} to multiclass case and \citet{DBLP:conf/icml/YinRB19, DBLP:conf/icml/AwasthiFM20} give theoretical analysises to adversarial training by standard uniform convergence argumentation and giving a bound of the Rademacher complexity. Our work is quite different from \citet{DBLP:conf/icml/YinRB19, DBLP:conf/icml/AwasthiFM20}, firstly, they just analyze the linear models and two-layer neural networks, but we consider linear models and multi-layer deep neural networks; secondly, they consider the classification loss, which is much easier to analyze than our contrastive loss.

\textbf{Contrastive Learning.} Contrastive Learning is a popular self-supervised learning paradigm that attempts to learn a good feature representation by minimizing the feature distance of similar pairs and maximizing the feature distance of dissimilar pairs \citep{DBLP:conf/icml/ChenK0H20-SimCLR, zekaiwang-2022}. SimCLR \citep{DBLP:conf/icml/ChenK0H20-SimCLR} learns representations by maximizing the agreement between differently augmented views of the same data, while MoCo \citep{DBLP:conf/cvpr/He0WXG20-MoCo} builds large and consistent dictionaries for unsupervised learning with a contrastive loss. From the theoretical perspective, \citet{DBLP:conf/icml/SaunshiPAKK19} first presents a framework to analyze CL. We generalize their framework to the adversarial CL. The key challenging issues in this work are how to define and rigorously analyze the adversarial CL losses. Moreover, we further analyze the Rademacher complexity of linear models and multi-layer deep neural networks composited with a complex adversarial contrastive loss. \citet{DBLP:conf/colt/NeyshaburTS15} show an upper bound of the Rademacher complexity for neural networks under group norm regularization. In fact, the Frobenius norm and the $\ell_{1, \infty}$-norm in our theoretical analysis are also group norms, while the settings and proof techniques between \citet{DBLP:conf/colt/NeyshaburTS15} and our work are quite different: (1) they consider the standard setting while we consider a more difficult adversarial setting; (2) they consider the Rademacher complexity of neural networks with size $1$ output layer, while we consider the case that the neural network is composited with a complex contrastive learning loss; (3) they prove their results by reduction with respect to the number of the layers, while motivated by the technique of \citet{gao2021theoretical}, we use the covering number of the neural network to upper bound the Rademacher complexity in the adversarial case.

\textbf{Adversarial Contrastive Learning.} Several recent works \citep{DBLP:conf/nips/KimTH20, DBLP:conf/nips/HoN20, DBLP:conf/nips/JiangCCW20} apply adversarial training in the contrastive pre-training stage to improve the robustness of the models on the downstream tasks, achieving good robust performance in their experiments. Our work attempts to provide a theoretical explanation as to why models robustly trained on the upstream task can be robust on downstream tasks.

\section{Problem Setup} \label{sec-setting-up}
We introduce the problem setups in this section.

\subsection{Basic Contrastive Learning Settings}

We first set up some notations and describe the contrastive learning framework.

Let $\mathcal{X} \in \mathbb{R}^m$ be the domain of all possible data points, this paper assumes that $\mathcal{X}$ is bounded. Contrastive learning assumes that we get the similar data in the form of pairs $(x, x^+)$, which is drawn from a distribution $\mathcal{D}_{sim}$ on $\mathcal{X}^2$, and $k$ independent and identically distributed (i.i.d.) negative samples $x_1^-, x_2^-, \dots, x_k^-$ drawn from a distribution $\mathcal{D}_{neg}$ on $\mathcal{X}$. Given the training set $\mathcal{S}=\{(x_i,x_i^+,x_{i1}^-,\dots,x_{ik}^-)\}_{i=1}^M$, we aim to learn a representation $f$ from $\mathcal{F}$ that maps similar pairs $(x, x^+)$ into similar points $(f(x), f(x^+))$, while at the same time keeping $f(x_i^-), \cdots, f(x_k^-)$ away from $f(x)$, where $\mathcal{F}$ is a class of representation functions $f : \mathcal{X} \to \mathbb{R}^n$.

\textbf{Latent Classes.} Let $\mathcal{C}$ denote the set of all latent classes \citep{DBLP:conf/icml/SaunshiPAKK19} that are all possible classes for points in $\mathcal{X}$; for each class $c \in \mathcal{C}$ , moreover, the probability $\mathcal{D}_{c}$ over $\mathcal{X}$ captures the probability that a point belongs to class $c$. The distribution on $\mathcal{C}$ is denoted by $\rho$.

To formalize the similarity among data in $\mathcal{X}$, assume we obtain i.i.d. similar data points $x, x^+$ from the same distribution $\mathcal{D}_c$, where c is randomly selected according to the distribution $\rho$ on latent classes $\mathcal{C}$. We can then define $\mathcal{D}_{sim}$ and $\mathcal{D}_{neg}$ as follows:

\begin{definition} [$\mathcal{D}_{sim}$ and $\mathcal{D}_{neg}$, \citealp{DBLP:conf/icml/SaunshiPAKK19}] \label{def-Dsim-Dneg}
For unsupervised tasks, we define the distribution of sampling similar samples $\mathcal{D}_{sim}(x,x^{+})$ and negative sample $\mathcal{D}_{neg}(x^-)$ as follows:
\begin{equation*}
    \mathcal{D}_{sim}(x,x^+) = \underset{c \sim \rho}{\mathbb{E}} \mathcal{D}_{c}(x)\mathcal{D}_{c}(x^+),\ \  \mathcal{D}_{neg}(x^-) = \underset{c \sim \rho}{\mathbb{E}} \mathcal{D}_{c}(x^-).
\end{equation*}
\end{definition}

\textbf{Supervised Tasks.} For supervised tasks, we focus on the tasks in which a representation function $f$ will be tested on a $(k+1)$-way supervised task $\mathcal{T}$ consisting of distinct classes $\{c_1, c_2, \dots , c_{k+1}\} \subseteq \mathcal{C}$, while the labeled data set of task $\mathcal{T}$ consists of $M$ i.i.d. examples drawn according to the following process: A label $c \in \{c_1, \dots , c_{k+1}\}$ is selected according to a distribution $\mathcal{D}_\mathcal{T}$, and a sample $x$ is drawn from $\mathcal{D}_c$. The distribution of the labeled pair $(x,c)$ is defined as: $\mathcal{D}_\mathcal{T}(x,c) = \mathcal{D}_\mathcal{T}(c) \mathcal{D}_c(x)$.

\subsection{Evaluation Metric for Representations}
We evaluate the quality of a representation function $f$ with reference to its performance on a multi-class classification task $\mathcal{T}$ using a linear classifier.

Consider a task $\mathcal{T} = \{c_1, \dots , c_{k+1}\}$. A multi-class classifier for $\mathcal{T}$ is a function $g : \mathcal{X} \xrightarrow[]{} \mathbb{R}^{k+1}$, the output coordinates of which are indexed by the classes $c$ in task $\mathcal{T}$.

Let $\{g(x)_y - g(x)_{y^{\prime}}\}_{y^{\prime}\neq y}$ be a $k$-dimensional vector of differences in the coordinates of the output of the classifier $g$. The loss function of $g$ on a point $(x,y) \in \mathcal{X} \times \mathcal{T}$ is defined as $\ell(\{g(x)_y - g(x)_{y^{\prime}}\}_{y^{\prime}\neq y})$. For example, one often considers the standard hinge loss $\ell(\boldsymbol{v}) = \max\{0, 1+\max_{i}\{-\boldsymbol{v}_i\}\}$ and the logistic loss $\ell(\boldsymbol{v}) = log_2 (1+\sum_i exp(-\boldsymbol{v}_i))$ for $\boldsymbol{v} \in \mathbb{R}^k$.

The supervised risk of classifier $g$ is defined as follows:
$$L_{sup}(\mathcal{T},g) \coloneqq \underset{(x,c) \sim \mathcal{D}_\mathcal{T}}{\mathbb{E}} \left[\ell\left(\left\{g(x)_{c} - g(x)_{c^{\prime}}\right\}_{c^{\prime} \neq c}\right)\right].$$

The risk $L_{sup}(\mathcal{T},g)$ of classifier $g$ on task $\mathcal{T}$ measures the quality of the outputs of $g$, take the hinge loss as an example, our goal is to get a classifier that has much higher confidence for the true class (i.e., the value $g(x)_c$) than others (i.e., the values $g(x)_{c^\prime}$ for $c^\prime \ne c$), so we want all the differences $g(x)_c-g(x)_{c^\prime}$ for $c \ne c^\prime$ to be as large as possible. If $g$ wrongly classifies $x$, i.e., $\underset{c^\prime}{\arg\max} g(x)_{c^\prime} \ne c$, of course, $\ell$ on $(x,c)$ is not smaller than $1$; however, even thought $g$ correctly classifies $x$, the loss value can not decrease to zero unless $g(x)_c-g(x)_{c^\prime} \ge 1$ for all $c^\prime \ne c$.

Let $\{c_i\}_{i=1}^{k+1} = \{c_1,\dots, c_{k+1}\}$ be a set of classes from $\mathcal{C}$. Given the matrix $W \in \mathbb{R}^{(k+1) \times n}$, we have $g(x) = W f(x)$ as a classifier that composite the feature extractor $g$ and linear classifier $W$. The supervised risk of $f$ is defined as the risk of $g$ when the best $W$ is chosen:
\begin{equation*}
    L_{sup}(\mathcal{T},f) \coloneqq \underset{W \in \mathbb{R}^{(k+1) \times n}}{\inf} L_{sup}(\mathcal{T},W f).
\end{equation*}

When training a feature extractor $f$ on the upstream task, we do not make predictions on the examples, so we can't define the risk of the feature extractor $f$ as we do for the classifier $g$. Our goal on the upstream task is to train a feature extractor that can perform well on the downstream tasks, so we consider the potential of the feature extractor here, i.e., the minimal possible classification risk of a linear classifier on the feature extracted by $f$. So we define the risk of the feature extractor $f$ as above by taking the infimum over all linear classifiers.

\begin{definition}[Mean Classifier, \citealp{DBLP:conf/icml/SaunshiPAKK19}] \label{def-mean-classifier}
For a function $f$ and task $\mathcal{T} = \{c_1, \dots, c_{k+1}\}$, the mean classifier is $W^\mu$, whose $c^{th}$ row is the mean $\mu_c \coloneqq \underset{x \sim \mathcal{D}_c}{\mathbb{E}} [f(x)]$ and we define $L_{sup}^{\mu}(\mathcal{T},f) \coloneqq L_{sup}(\mathcal{T},W^{\mu} f)$.
\end{definition}

Since $L_{sup}(\mathcal{T},f)$ involves taking infimum over all possible linear classifiers, it is difficult to analyze $L_{sup}(\mathcal{T},f)$ and establish connections between the risk of the unsupervised upstream task and $L_{sup}(\mathcal{T},f)$. We introduce the mean classifier to bridge them by bounding average $L_{sup}^{\mu}(\mathcal{T},f)$ over the tasks with the risk of the unsupervised upstream risk (for more details, please refer to \S\ref{sec-theoretical-ana}).

\begin{definition}[Average Supervised Risk] \label{def-avg-sup-loss}
The average supervised risk for a function $f$ on $(k+1)$-way tasks is defined as:
\begin{equation*}
    L_{sup}(f) \coloneqq \underset{\{c_i\}_{i=1}^{k+1} \sim \rho^{k+1}}{\mathbb{E}} [L_{sup}(\{c_i\}_{i=1}^{k+1}, f) | c_i \neq c_j]
\end{equation*}
Given the mean classifier, the average supervised risk for a function $f$ is defined as follows:
\begin{equation*}
    L_{sup}^{\mu}(f) \coloneqq \underset{\{c_i\}_{i=1}^{k+1} \sim \rho^{k+1}}{\mathbb{E}} [L_{sup}^{\mu}(\{c_i\}_{i=1}^{k+1}, f) | c_i \neq c_j].
\end{equation*}
\end{definition}

In contrastive learning, the feature extractor is trained as a pretrained model to be used on downstream tasks. There may be many different downstream tasks such as binary classification tasks with classes $\{c_1, c_2 \} \subseteq \mathcal{C}$ and many other multi-class classification tasks, so here we consider the average error of $L_{sup}^{\mu}(\mathcal{T},f)$ over all possible tasks $\mathcal{T}$ sampled from $\rho^{k+1}$ as the final performance measure of a feature extractor $f$.

\subsection{Contrastive Learning Algorithm}
We denote $k$ as the number of negative samples used for training and $(x,x^+)\!\!\sim\!\!\mathcal{D}_{sim}$, $(x_1^{-}, \dots , x_k^{-}\!)\!\!\sim\!\!\mathcal{D}_{neg}^k$.

\begin{definition}[Unsupervised Risk] \label{def-unsup-loss}
The unsupervised risk is defined as follows:
\begin{equation*}
    L_{un}(f) \coloneqq \mathbb{E}[\ell(\{f(x)^T(f(x^+)-f(x_i^-))\}_{i=1}^k)].
\end{equation*}
Given $M$ samples $\left\{(x_j,x_j^+,x_{j1}^-, \dots, x_{jk}^-)\right\}_{j=1}^M$ from $\mathcal{D}_{sim} \times \mathcal{D}_{neg}^k$, the empirical counterpart of unsupervised risk is defined as follows:
\begin{equation*}
    \widehat{L}_{un}(f) \coloneqq \frac{1}{M}\sum_{j=1}^M \ell(\{f(x_j)^T(f(x_j^+)-f(x_{ji}^-))\}_{i=1}^k).
\end{equation*}
\end{definition}

In the contrastive learning upstream task, we want to learn a feature extractor $f$ such that $f$ maps examples from the same class to similar features and makes the features of examples from different classes far away from each other. So for the examples $(x, x^+, x_1^-, \dots, x_k^-)$, we want $f(x)^T f(x^+)$ to be as large as possible while $f(x)^T f(x_i^-), i=1,\dots,k$ to be as small as possible. The loss $\ell$ we defined before elegantly captures the aim of contrastive learning, if we set $v_i = f(x)^T f(x^+) - f(x)^T f(x_i^-)$, minimizing $\ell(\{f(x)^T(f(x^+)-f(x_i^-))\}_{i=1}^k)$ will yield large $f(x)^T f(x^+)$ and small $f(x)^T f(x_i^-), i=1,\dots,k$. So optimizing over $L_{un}(f)$ can reach our goal for contrastive learning.

Following \cite{DBLP:conf/icml/SaunshiPAKK19}, the unsupervised risk can be described by the following equation:
\begin{equation*}
    L_{un}(f) = \underset{c^+,c_i^- \sim \rho^{k+1} }{\mathbb{E}}\  \underset{x,x^+ \sim \mathcal{D}_{c^+}^2, x_i^- \sim \mathcal{D}_{c_i^-}}{\mathbb{E}}[\ell(\{f(x)^T (f(x^+) - f(x_i^-))\}_{i=1}^k)].
\end{equation*}
The Empirical Risk Minimization (ERM) algorithm is used to find a function $\widehat{f}_{ERM} \!\! \in \!\! \underset{f \in \mathcal{F}}{\arg\min}\  \widehat{L}_{un}\!(f)$ that minimizes the empirical unsupervised risk. The function $\widehat{f}_{ERM}$ can be subsequently used for supervised linear classification tasks.

\subsection{Adversarial Setup}
A key question in adversarial contrastive learning is that of how to define the adversary sample in contrastive learning. From a representation perspective, one can find a point $\widetilde{x}$ that is close to $x$ and keeps the feature $f(\widetilde{x})$ both as far from $f(x^+)$ as possible and as close to the feature of some negative sample $f(x^-)$ as possible. Inspired by this intuition, we define the Contrastive Adversary Sample as follows:
\begin{definition}[$\mathcal{U}$-Contrastive Adversary Sample] \label{def-u-contra-adv}
Given a neighborhood $\mathcal{U}(x)$ of $x$, $x^+ \sim \mathcal{D}_{c^+}, x_i^- \sim \mathcal{D}_{c_i^-}$ for $i = 1,\dots,k$, we define the $\mathcal{U}$-Contrastive Adversary Sample of $x$ as follows:
\begin{equation*}
    \widetilde{x} \coloneqq \underset{x^\prime \in \mathcal{U}(x)}{\arg \sup}\  \underset{x^+ \sim \mathcal{D}_{c^+}, x_i^- \sim \mathcal{D}_{c_i^-}}{\mathbb{E}}\ \  [\ell(\{f(x^\prime)^T(f(x^+)-f(x_i^-))\}_{i=1}^k)].
\end{equation*}
\end{definition}

In this article, we suppose that the loss function $\ell$ is convex. By the subadditivity of $\sup$, we have, $\forall{f \in \mathcal{F}}$:
\begin{equation} \label{eq-attack-obj-leq-sur-obj}
    \begin{aligned}
        & \underset{x^\prime \in \mathcal{U}(x)}{\sup} \underset{x_i^- \sim \mathcal{D}_{c_i^-}}{\underset{x^+ \sim \mathcal{D}_{c^+}}{\mathbb{E}}}[\ell(\{f(x^\prime)^T\!(f(x^+)\!\!-\!\!f(x_i^-))\}_{i=1}^k)]\!\leq \!\!\underset{x_i^- \sim \mathcal{D}_{c_i^-}}{\underset{x^+ \sim \mathcal{D}_{c^+}}{\mathbb{E}}}\![\!\underset{x^\prime \in \mathcal{U}(x)}{\sup}\!\!\ell(\!\{f(x^\prime)^T\!(f(x^+)\!\!-\!\!f(x_i^-))\}_{i=1}^k\!)].
    \end{aligned}
\end{equation}

In the following sections, we analyze the theoretical properties of the right-hand side of \eqref{eq-attack-obj-leq-sur-obj}, which can be easily optimized in practice by Adversarial Empirical Risk Minimization (AERM).

The $\mathcal{U}$-Adversarial Unsupervised Risk and its surrogate risk can be defined as below. 

\begin{definition}[$\mathcal{U}$-Adversarial Unsupervised Risk] \label{def-u-adv-unsup-loss}
Given a neighborhood $\mathcal{U}(x)$ of $x$, the $\mathcal{U}$-Adversarial Unsupervised Risk of a presentation function $f$ is defined as follows:
\begin{equation} \label{eq-U-adv-Lun}
    \begin{aligned}
        \widetilde{L}_{un}(f) &\coloneqq \underset{\sim \rho^{k+1}}{\underset{c^+,c_i^- }{\mathbb{E}}} \underset{x \sim \mathcal{D}_{c^+}}{\mathbb{E}} [ \underset{x^\prime \in \mathcal{U}(x)}{\sup} \underset{x_i^- \sim \mathcal{D}_{c_i^-}}{\underset{x^+ \sim \mathcal{D}_{c^+}}{\mathbb{E}}}[\ell(\{f(x^\prime)^T(f(x^+)-f(x_i^-))\}_{i=1}^k)]].
    \end{aligned}
\end{equation}
Moreover, the surrogate risk of \eqref{eq-U-adv-Lun} is as follows:
\begin{equation*}
   \begin{aligned}
       \widetilde{L}_{sun}(f) &= \underset{\sim \rho^{k+1}}{\underset{c^+,c_i^- }{\mathbb{E}}} \underset{x_i^- \sim \mathcal{D}_{c_i^-}}{\underset{x,x^+ \sim \mathcal{D}_{c^+}^2}{\mathbb{E}}} \underset{x^\prime \in \mathcal{U}(x)}{\sup} \ell(\{f(x^\prime)^T(f(x^+)-f(x_i^-))\}_{i=1}^k).
   \end{aligned}
\end{equation*}
\end{definition}

By \eqref{eq-attack-obj-leq-sur-obj}, we have $\widetilde{L}_{un}(f) \leq \widetilde{L}_{sun}(f)$ for any $f \in \mathcal{F}$.
The Adversarial Supervised Risk of a classifier $g$ for a task $\mathcal{T}$ is defined as follows:
$$\widetilde{L}_{sup}(\mathcal{T},g) \coloneqq \underset{(x,c) \sim \mathcal{D}_\mathcal{T}}{\mathbb{E}} [\underset{x^\prime \in \mathcal{U}(x)}{\sup} \ell(\{g(x)_{c} - g(x)_{c^{\prime}}\}_{c^{\prime} \neq c})].$$
The Adversarial Supervised Risk of a representation function $f$ for a task $\mathcal{T}$ is defined as follows:
\begin{equation} \label{eq-adv-Lsup-T}
    \widetilde{L}_{sup}(\mathcal{T},f) \coloneqq \underset{W \in \mathbb{R}^{(k+1) \times n}}{\inf} \widetilde{L}_{sup}(\mathcal{T},W f).
\end{equation}
For the mean classifier, we define:
$$\widetilde{L}_{sup}^\mu(\mathcal{T},f) \coloneqq \widetilde{L}_{sup}(\mathcal{T},W^\mu f)$$

The Average Adversarial Supervised Risk for a representation function is as defined below.

\begin{definition}[Average $\mathcal{U}$-Adversarial Supervised Risk] \label{def-adv-sup-loss}
\begin{equation*}
    \widetilde{L}_{sup}(f) \coloneqq \underset{\{c_i\}_{i=1}^{k+1} \sim \rho^{k+1}}{\mathbb{E}} [\widetilde{L}_{sup}(\{c_i\}_{i=1}^{k+1}, f) | c_i \neq c_j].
\end{equation*}
For the mean classifier, we have the following:
\begin{equation*}
    \widetilde{L}_{sup}^{\mu}(f) \coloneqq \underset{\{c_i\}_{i=1}^{k+1} \sim \rho^{k+1}}{\mathbb{E}} [\widetilde{L}_{sup}^{\mu}(\{c_i\}_{i=1}^{k+1}, f) | c_i \neq c_j].
\end{equation*}
\end{definition}

\section{Theoretical Analysis for Adversarial Contrastive Learning} \label{sec-theoretical-ana}
This section presents some theoretical results for adversarial contrastive learning.

\subsection{One Negative Sample Case} \label{subsec-one-negative-sample-case}
Let $\tau = \underset{c,c^\prime \sim \rho^2}{\mathbb{P}} [c=c^\prime]$, $\bm{\sigma}$ be an $M$-dimensional Rademacher random vector with i.i.d. entries, define $(g_f)_{|\mathcal{S}} = \left(g_f(z_1), \dots, g_f(z_M)\right)$ and $\mathcal{R}_\mathcal{S}(\mathcal{G}) \coloneqq \underset{\bm{\sigma} \sim \{\pm 1\}^M}{\mathbb{E}} \left[\underset{f \in \mathcal{F}}{\sup} \left< \bm{\sigma},(g_f)_{|\mathcal{S}}\right>\right]$ where $\mathcal{G} \coloneqq \{ g_f(x,x^+,x_1^-,\dots,x_k^-) = \underset{x^\prime \in \mathcal{U}(x)}{\sup} \ell\left(\{f(x^\prime)^T(f(x^+)-f(x_i^-))\}_{i=1}^k\right) | f \in \mathcal{F}\}$, let $\widehat{f} \in  \underset{f \in \mathcal{F}}{\arg\min}\  \widehat{\widetilde{L}}_{sun}(f)$ where $\widehat{\widetilde{L}}_{sun}(f)$ is the empirical counterpart of $\widetilde{L}_{sun}(f)$, we have:

\begin{theorem}[The proof can be found in the Appendix \ref{apd-prf-thm-2-adv-Lsup-mu-leq-adv-Lsun}] \label{thm-2-adv-Lsup-mu-leq-adv-Lsun}
Let $\ell: \mathbb{R}^k \xrightarrow[]{} \mathbb{R}$ be bounded by $B$. Then, for any $\delta \in (0,1)$,with a probability of at least $1 - \delta$ over the choice of the training set $\mathcal{S}=\{(x_j,x_j^+,x_{j}^-)\}_{j=1}^M$, for any $f \in \mathcal{F}$:
\begin{equation*}
    \widetilde{L}_{sup}(\widehat{f}) \leq \widetilde{L}_{sup}^{\mu}(\widehat{f}) \leq \frac{1}{1-\tau} (\widetilde{L}_{sun}(f) - \tau \ell (0)) + \frac{1}{1-\tau} AG_M,
\end{equation*}
where
\begin{equation} \label{eq-AG}
    AG_M = O(\frac{\mathcal{R}_\mathcal{S}(\mathcal{G})}{M} + B \sqrt{\frac{log \frac{1}{\delta}}{M}}).
\end{equation}
\end{theorem}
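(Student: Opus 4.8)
The plan is to adapt the three–step argument used for non-adversarial contrastive learning (\citealp{DBLP:conf/icml/SaunshiPAKK19}) to the adversarial setting, being careful about where the inner supremum over $\mathcal{U}(x)$ sits. The first inequality $\widetilde{L}_{sup}(\widehat{f}) \le \widetilde{L}_{sup}^{\mu}(\widehat{f})$ is immediate from the definitions: for every fixed task $\{c_i\}_{i=1}^{k+1}$, the quantity $\widetilde{L}_{sup}(\{c_i\}_{i=1}^{k+1},f)$ is an infimum over all linear maps $W$, hence is at most its value at $W=W^{\mu}$, which is $\widetilde{L}_{sup}^{\mu}(\{c_i\}_{i=1}^{k+1},f)$; averaging over tasks with $c_i \ne c_j$ and specializing to $\widehat{f}$ gives the claim.

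For the second (main) inequality I would first establish the population bound
$$\widetilde{L}_{sun}(f) \ \ge\ \tau\,\ell(0) + (1-\tau)\,\widetilde{L}_{sup}^{\mu}(f) \qquad \text{for all } f \in \mathcal{F}.$$
Here $k=1$, so I would condition on the latent-class pair $(c^+,c^-)\sim\rho^2$ and split on whether $c^+=c^-$ (probability $\tau$) or $c^+\ne c^-$ (probability $1-\tau$). In each case, Fubini writes $\mathbb{E}_{x,x^+,x^-}=\mathbb{E}_x\,\mathbb{E}_{x^+,x^-}$; since the adversary depends only on $x$, the supremum over $x'\in\mathcal{U}(x)$ may be pulled out past the inner expectation, $\mathbb{E}_{x^+,x^-}[\sup_{x'}(\cdot)] \ge \sup_{x'}\mathbb{E}_{x^+,x^-}[(\cdot)]$, and then convexity of $\ell$ (assumed) together with linearity of $v\mapsto f(x')^\top v$ gives, by Jensen, $\mathbb{E}_{x^+,x^-}[\ell(f(x')^\top(f(x^+)-f(x^-)))] \ge \ell(f(x')^\top(\mu_{c^+}-\mu_{c^-}))$. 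On $\{c^+=c^-\}$ one has $\mu_{c^+}=\mu_{c^-}$, so the bound is the constant $\ell(0)$, and taking $\sup_{x'}$ and $\mathbb{E}_x$ leaves $\ell(0)$; on $\{c^+\ne c^-\}$ the quantity $\mathbb{E}_{x\sim\mathcal{D}_{c^+}}[\sup_{x'\in\mathcal{U}(x)}\ell(f(x')^\top(\mu_{c^+}-\mu_{c^-}))]$ is precisely the contribution of label-$c^+$ examples to $\widetilde{L}_{sup}^{\mu}(\{c^+,c^-\},f)$. Because $\rho^2$ is exchangeable in $(c^+,c^-)$, averaging over $(c^+,c^-)\sim\rho^2\mid c^+\ne c^-$ reconstructs $\widetilde{L}_{sup}^{\mu}(f)$ (with the uniform label distribution on the two-element task). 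Rearranging yields $\widetilde{L}_{sup}^{\mu}(f) \le \frac{1}{1-\tau}(\widetilde{L}_{sun}(f)-\tau\ell(0))$ for all $f$, in particular for $\widehat{f}$.

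The third step is a routine uniform-convergence argument for the surrogate risk. One checks that $\widehat{\widetilde{L}}_{sun}(f)=\frac{1}{M}\sum_{j=1}^{M}g_f(z_j)$ is an unbiased estimator of $\widetilde{L}_{sun}(f)$ by the mixture form of $\mathcal{D}_{sim},\mathcal{D}_{neg}$. Since $\ell$ is bounded by $B$, each $g_f\in\mathcal{G}$ has range of length $O(B)$, so McDiarmid's inequality plus the standard symmetrization bound in terms of $\mathcal{R}_{\mathcal{S}}(\mathcal{G})$ — and one further application of McDiarmid to pass from the expected to the empirical Rademacher complexity — yield, with probability at least $1-\delta$, $\sup_{f\in\mathcal{F}}\big(\widetilde{L}_{sun}(f) - \widehat{\widetilde{L}}_{sun}(f)\big) = O(\mathcal{R}_{\mathcal{S}}(\mathcal{G})/M + B\sqrt{\log(1/\delta)/M})$; a Hoeffding bound for a single fixed $f$ controls the other direction. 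Combining these with the optimality $\widehat{\widetilde{L}}_{sun}(\widehat{f}) \le \widehat{\widetilde{L}}_{sun}(f)$ of the AERM minimizer gives $\widetilde{L}_{sun}(\widehat{f}) \le \widetilde{L}_{sun}(f) + AG_M$ for every $f$, with $AG_M$ as in \eqref{eq-AG}. Plugging this into the bound from step two and dividing by $1-\tau$ yields the theorem.

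I expect the population inequality of step two to be the main obstacle: one must get the order of the supremum and the expectations exactly right (the adversary sees $x$ but not $x^+,x^-$, so $\sup_{x'}$ legitimately exits $\mathbb{E}_{x^+,x^-}$ but not $\mathbb{E}_x$), and must carefully identify $\mathbb{E}_{(c^+,c^-)\sim\rho^2\mid c^+\ne c^-}\mathbb{E}_{x\sim\mathcal{D}_{c^+}}[\,\cdot\,]$ with the averaged adversarial mean-classifier risk over binary tasks, which is where exchangeability of $\rho^2$ and the definition of $\mathcal{D}_{\mathcal{T}}$ on a two-element task enter. The first inequality and the generalization step are standard.
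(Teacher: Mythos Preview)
Your proposal is correct and follows essentially the same approach as the paper. The paper organizes the argument into two lemmas: one proving the population inequality $\widetilde{L}_{sup}(f)\le\widetilde{L}_{sup}^{\mu}(f)\le\frac{1}{1-\tau}(\widetilde{L}_{sun}(f)-\tau\ell(0))$ via Jensen's inequality, the conditional split on $\{c^+=c^-\}$, and the symmetry of $(c^+,c^-)\sim\rho^2$ (exactly your step two), and another proving $\widetilde{L}_{sun}(\widehat f)\le\widetilde{L}_{sun}(f)+AG_M$ via uniform convergence plus Hoeffding on $f^*$ (your step three). The only cosmetic difference is that the paper routes the population bound through the intermediate quantity $\widetilde{L}_{un}(f)$ before passing to $\widetilde{L}_{sun}(f)$, whereas you work with $\widetilde{L}_{sun}$ directly; both are valid since the same $\sup_{x'}$/Jensen step applies.
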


\begin{remark}
    Theorem \ref{thm-2-adv-Lsup-mu-leq-adv-Lsun} shows that when the hypothesis class $\mathcal{F}$ is rich enough to contain some $f$ with low surrogate adversarial unsupervised risk, the empirical minimizer of the surrogate adversarial unsupervised risk will then obtain good robustness on the supervised downstream task.
    
    Note that we can take $f = \widehat{f}$ in the upper bound of Theorem \ref{thm-2-adv-Lsup-mu-leq-adv-Lsun} and get a bound $\widetilde{L}_{sup}(\widehat{f}) \leq \frac{1}{1-\tau} (\widetilde{L}_{sun}(\widehat{f}) - \tau \ell (0)) + \frac{1}{1-\tau} AG_M$. Then we can see that if the output of AERM (i.e., $\widehat{f}$) achieves small unsupervised adversarial risk, then $\widehat{f}$ is a robust feature extractor such that it can achieve good robustness after fine-tuning on the downstream tasks.
    
    Theorem \ref{thm-2-adv-Lsup-mu-leq-adv-Lsun} gives a relationship between the robustness of the contrastive (upstream) task and the robustness of the downstream classification tasks and explains why adversarial contrastive learning can help improve the robustness of the downstream task, as shown empirically in \citet{DBLP:conf/nips/KimTH20, DBLP:conf/nips/HoN20, DBLP:conf/nips/JiangCCW20}.
\end{remark}

\subsection{Blocks of Similar Points}\label{subsec-blocks-of-similar-points}
\citet{DBLP:conf/icml/SaunshiPAKK19} show a refined method that operates by using blocks of similar data and determine that the method achieves promising performance both theoretically and empirically. We adapt this method to adversarial contrastive learning.

Specifically, we sample $(b+1)$ i.i.d. similar samples $x, x_1^+, \dots, x_b^+$ from $c^+ \sim \rho$ and $b$ negative i.i.d. samples from $c^- \sim \rho$. The block adversarial contrastive learning risk is as follows:
\begin{equation*}
   \widetilde{L}_{sun}^{block}(f) \coloneqq \mathbb{E} \left[\underset{x^\prime \in \mathcal{U}(x)}{\sup}\ell\left(f(x^\prime)^T\left(\frac{\sum_{i=1}^b f(x_{i}^+)}{b} - \frac{\sum_{i=1}^b f(x_{i}^-)}{b}\right) \right)\right],
\end{equation*}
and its empirical counterpart is as follows:
\begin{equation*}
    \widehat{\widetilde{L}}_{sun}^{block}(f) \coloneqq \frac{1}{M} \sum_{i=1}^M \left[\underset{x^\prime \in \mathcal{U}(x_i)}{\sup}\ell\left(f(x^\prime)^T\left(\frac{\sum_{j=1}^b f(x_{ij}^+)}{b} - \frac{\sum_{j=1}^b f(x_{ij}^-)}{b}\right) \right)\right],
\end{equation*}

\begin{theorem}[The proof can be found in the Appendix \ref{apd-prf-thm-adv-sup-leq-adv-sun-block}] \label{thm-adv-sup-leq-adv-sun-block}
For any $f \in \mathcal{F}$, we have:
\begin{equation*}
    \begin{aligned}
        \widetilde{L}_{sup}(f) &\leq \frac{1}{1-\tau} \left(\widetilde{L}_{sun}^{block}(f) - \tau \ell(0)\right) \leq \frac{1}{1-\tau} \left(\widetilde{L}_{sun}(f) - \tau \ell(0)\right).
    \end{aligned}
\end{equation*}
\end{theorem}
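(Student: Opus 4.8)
The plan is to adapt the latent-class decomposition of \citet{DBLP:conf/icml/SaunshiPAKK19} to the adversarial block setting; the one structural fact that makes everything go through is that, for fixed $x$, the map $v \mapsto h_x(v) := \sup_{x' \in \mathcal{U}(x)} \ell(f(x')^T v)$ is convex, being a pointwise supremum of the maps $v \mapsto \ell(f(x')^T v)$, each convex since $\ell$ is convex and $v\mapsto f(x')^T v$ is linear. This lets Jensen's inequality be used in both directions: to lower bound the block risk by a mean-classifier supervised risk (first inequality), and to show the block risk is no larger than the one-negative-sample risk (second inequality). Throughout I fix an arbitrary $f\in\mathcal{F}$, and the downstream tasks are binary, consistent with the single negative class $c^-$.

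For the first inequality, write $\widetilde{L}_{sun}^{block}(f) = \mathbb{E}_{(c^+,c^-)\sim\rho^2}[\Phi(c^+,c^-)]$ where, with $\bar f^{\pm} = \frac1b\sum_{i=1}^b f(x_i^{\pm})$, $\Phi(c^+,c^-) = \mathbb{E}_{x,\, x_i^+\sim\mathcal{D}_{c^+},\, x_i^-\sim\mathcal{D}_{c^-}}[h_x(\bar f^+-\bar f^-)]$. Conditioning on the anchor $x$ and using $\mathbb{E}\sup\ge\sup\mathbb{E}$ followed by Jensen over the $x_i^{\pm}$ (recall $\mathbb{E}[\bar f^+]=\mu_{c^+}$, $\mathbb{E}[\bar f^-]=\mu_{c^-}$) gives $\Phi(c^+,c^-)\ge \mathbb{E}_{x\sim\mathcal{D}_{c^+}}[\sup_{x'\in\mathcal{U}(x)}\ell(f(x')^T(\mu_{c^+}-\mu_{c^-}))]$. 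On the event $c^+=c^-$ the right-hand side equals $\ell(0)$, which contributes $\tau\ell(0)$ after taking $\mathbb{E}_{(c^+,c^-)\sim\rho^2}$. On the event $c^+\ne c^-$, the right-hand side is the $c^+$-labelled part of the adversarial mean-classifier loss $\widetilde{L}_{sup}^\mu(\{c^+,c^-\},f)$ on the binary task $\{c^+,c^-\}$, since $(W^\mu f)(x')_{c^+}-(W^\mu f)(x')_{c^-}=f(x')^T(\mu_{c^+}-\mu_{c^-})$; averaging over the conditional law of $(c^+,c^-)$ given $c^+\ne c^-$ and symmetrizing in the exchangeable pair $(c^+,c^-)$ recovers $\widetilde{L}_{sup}^\mu(f)$. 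Hence $\widetilde{L}_{sun}^{block}(f)\ge \tau\ell(0)+(1-\tau)\widetilde{L}_{sup}^\mu(f)\ge \tau\ell(0)+(1-\tau)\widetilde{L}_{sup}(f)$, the last step being $\widetilde{L}_{sup}(\mathcal{T},f)=\inf_W\widetilde{L}_{sup}(\mathcal{T},Wf)\le\widetilde{L}_{sup}(\mathcal{T},W^\mu f)$ averaged over $\mathcal{T}$; rearranging (using $1-\tau>0$) gives the claim.

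For the second inequality it suffices to prove $\widetilde{L}_{sun}^{block}(f)\le \widetilde{L}_{sun}(f)$, the one-negative-sample adversarial unsupervised risk. Fix $c^+,c^-$ and $x$, and set $Y_i = f(x_i^+)-f(x_i^-)$; these are i.i.d.\ with the same law as $f(x^+)-f(x^-)$, and $\bar f^+-\bar f^- = \frac1b\sum_{i=1}^b Y_i$. Convexity of $h_x$ gives $h_x(\frac1b\sum_i Y_i)\le \frac1b\sum_i h_x(Y_i)$ pointwise; taking expectations and using that the $Y_i$ are identically distributed gives $\mathbb{E}[h_x(\bar f^+-\bar f^-)]\le \mathbb{E}[h_x(Y_1)]$. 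Integrating over $x\sim\mathcal{D}_{c^+}$ and then over $(c^+,c^-)\sim\rho^2$ yields $\widetilde{L}_{sun}^{block}(f)\le\widetilde{L}_{sun}(f)$, which, substituted into the first inequality, completes the chain.

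The main obstacle, I expect, is the bookkeeping in the first inequality: passing from the block quantity — in which $x$ always plays the role of the positive anchor drawn from $c^+$ — to the symmetric supervised loss $\widetilde{L}_{sup}^\mu(\{c^+,c^-\},f)$, whose label ranges over $\{c^+,c^-\}$; this needs the exchangeability of $\rho^2$ restricted to $\{c^+\ne c^-\}$ together with careful tracking of the split at the collision probability $\tau$ and of the uniform task-label convention. Once $h_x$ is known to be convex, the interchange $\mathbb{E}\sup\ge\sup\mathbb{E}$ and the two applications of Jensen are routine.
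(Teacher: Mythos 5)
Your proof is correct and follows essentially the same route as the paper's: the second inequality via Jensen on the average $\frac{1}{b}\sum_i(f(x_i^+)-f(x_i^-))$ together with subadditivity of the supremum (which you package as convexity of $h_x$), and the first inequality via $\mathbb{E}\sup\ge\sup\mathbb{E}$ plus Jensen to collapse the block to $\ell(f(x')^T(\mu_{c^+}-\mu_{c^-}))$, followed by the $\tau$/$(1-\tau)$ conditioning on class collision and the symmetrization that identifies the result with $\widetilde{L}_{sup}^{\mu}(f)$ (the paper's \eqref{ieq-A1-add2}--\eqref{ieq-A1-2}). The bookkeeping step you flag as the main obstacle is exactly the step the paper delegates to its Lemma \ref{lma-adv-Lsup-leq-adv-Lun} argument, and your handling of it is sound.
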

\begin{remark} \label{rmk-of-thm-adv-sup-leq-adv-sun-block}
   Theorem \ref{thm-adv-sup-leq-adv-sun-block} shows that using blocks of similar data yields a tighter upper bound for the adversarial supervised risk than in the case for pairs of similar data. Theorem \ref{thm-adv-sup-leq-adv-sun-block} implies that using the blocks in adversarial contrastive learning may improve the robust performance of the downstream tasks; this will be verified by the empirical results in \S \ref{sec-exp}.
\end{remark}

\subsection{Multiple Negative Sample Case} \label{subsec-multi-negative-sample-case}
This subsection extends our results to $k$ negative samples. To achieve this, more definitions are required. Let $[k]$ denote the set $\{1,2, \dots, k\}$. 
\begin{definition} \label{def-task-distribution}
    We define a distribution $\mathcal{D}$ over the supervised tasks as follows: First, sample $k+1$ classes (allow repetition) $c^+, c_1^-, \dots, c_k^- \sim \rho^{k+1}$, conditioned on the event that $c_i^- \neq c^+, \forall{i \in [k]}$. Then, set the task $\mathcal{T}$ as the set of distinct classes in $\{c^+, c_1^-, \dots, c_k^-\}$.
\end{definition}
\begin{definition} \label{def-avg-adv-sup-loss}
    The Average $\mathcal{U}$-Adversarial Supervised Risk of a representation function $f \in \mathcal{F}$ over $\mathcal{D}$ is defined as follows:
    \begin{equation*}
        \widetilde{\mathcal{L}}_{sup}(f) \coloneqq \underset{\mathcal{T} \sim \mathcal{D}}{\mathbb{E}}\left[\widetilde{L}_{sup}(\mathcal{T}, f)\right].
    \end{equation*}
\end{definition}
Let $E_{distinct}$ be the event such that $\{c^+,c_1^-,\dots,c_k^-\}$ is distinct and $p = \underset{(c^+,c_1^-,\dots,c_k^-) \sim \mathcal{D}}{\mathbb{P}}[E_{distinct}]$. For any $f \in \mathcal{F}$, we have (The proof can be found in the Appendix \ref{apd-prf-ieq-cLsup-leq-Lsup}):
\begin{equation} \label{ieq-cLsup-leq-Lsup}
    \widetilde{L}_{sup}(f) \leq \frac{\widetilde{\mathcal{L}}_{sup}(f)}{p}.
\end{equation}

From \eqref{ieq-cLsup-leq-Lsup}, we can turn to analyze the relation between $\widetilde{\mathcal{L}}_{sup}(f)$ and $\widetilde{L}_{sun}(f)$ in the multiple negative sample case. Our Theorem \ref{thm-of-thm-k-general-bound} handles $\widetilde{\mathcal{L}}_{sup}(f)$ instead of $\widetilde{L}_{sup}(f)$.

\begin{assumption} \label{ass-loss-function}
    Assume that $\forall{I_1, I_2 \subseteq [d]}$ such that $I_1 \cup I_2 = [d]$, $\ell$ satisfies the following inequations:
    \begin{equation} \label{eq-loss-property-1}
        \ell(\{v_i\}_{i \in I_1}) \leq \ell(\{v_i\}_{i \in [d]}) \leq \ell(\{v_i\}_{i \in I_1}) + \ell(\{v_i\}_{i \in I_2}),
    \end{equation}
    \vskip -0.2in
    \begin{equation} \label{eq-loss-property-2}
        \ell(\{v_i\}_{i \in I_2}) \leq \ell(\{v_i\}_{i \in [d]}) \leq \ell(\{v_i\}_{i \in I_1}) + \ell(\{v_i\}_{i \in I_2}).
    \end{equation}
\end{assumption}

\begin{proposition}[The proof can be found in the Appendix \ref{apd-prf-prop-loss-function}] \label{prop-loss-function}
    The hinge loss and the logistic loss satisfy Assumption \ref{ass-loss-function}.
\end{proposition}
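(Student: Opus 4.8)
The plan is to check the two inequality chains of Assumption~\ref{ass-loss-function} separately for each loss, and in each case to split the work into a \emph{monotonicity} part (the two lower bounds $\ell(\{v_i\}_{i\in I_1})\le \ell(\{v_i\}_{i\in[d]})$ and $\ell(\{v_i\}_{i\in I_2})\le \ell(\{v_i\}_{i\in[d]})$) and a \emph{subadditivity} part (the common upper bound $\ell(\{v_i\}_{i\in[d]})\le \ell(\{v_i\}_{i\in I_1})+\ell(\{v_i\}_{i\in I_2})$). The unifying observation I would exploit is that both losses have the shape $\ell(\boldsymbol v)=\phi\big(\,\mathrm{agg}_i(-v_i)\,\big)$ for a nondecreasing $\phi$ acting on an aggregator $\mathrm{agg}$ ($\max$ for hinge, $\sum$ of exponentials for logistic) of nonnegative quantities, and that when $I_1\cup I_2=[d]$ the aggregator over $[d]$ is controlled by those over $I_1$ and $I_2$; monotonicity of $\phi$ then gives the lower bounds automatically, and a single elementary inequality gives subadditivity. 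The empty-index-set cases (when one of $I_1,I_2$ is empty, forcing the other to equal $[d]$) are handled by the conventions $\max_{i\in\emptyset}(-v_i)=-\infty$ and $\sum_{i\in\emptyset}e^{-v_i}=0$, under which the inequalities are trivial, so I would assume $I_1,I_2$ nonempty thereafter.

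For the hinge loss $\ell(\boldsymbol v)=\max\{0,\,1+\max_i(-v_i)\}$, since $I_1\cup I_2=[d]$ one has $\max_{i\in[d]}(-v_i)=\max\{\max_{i\in I_1}(-v_i),\,\max_{i\in I_2}(-v_i)\}$, and pushing the monotone map $t\mapsto\max\{0,1+t\}$ through a maximum yields the exact identity $\ell(\{v_i\}_{i\in[d]})=\max\{\ell(\{v_i\}_{i\in I_1}),\,\ell(\{v_i\}_{i\in I_2})\}$. The two lower bounds are then immediate, and the upper bound follows because $\max\{a,b\}\le a+b$ for $a,b\ge 0$ and the hinge loss is nonnegative.

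For the logistic loss $\ell(\boldsymbol v)=\log_2\big(1+\sum_i e^{-v_i}\big)$, I would set $S_J=\sum_{i\in J}e^{-v_i}\ge 0$ for $J\in\{I_1,I_2,[d]\}$; since $I_1\cup I_2=[d]$ we get $S_{I_1}\le S_{[d]}$, $S_{I_2}\le S_{[d]}$ and $S_{[d]}\le S_{I_1}+S_{I_2}$ (the last with equality iff $I_1\cap I_2=\emptyset$). Monotonicity of $t\mapsto\log_2(1+t)$ gives the two lower bounds, and the upper bound follows from $(1+S_{I_1})(1+S_{I_2})=1+S_{I_1}+S_{I_2}+S_{I_1}S_{I_2}\ge 1+S_{I_1}+S_{I_2}\ge 1+S_{[d]}$ after taking $\log_2$ of both ends. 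I do not expect a genuine obstacle here: the only care needed is dealing with possibly overlapping $I_1,I_2$ (costless, since $S_{[d]}\le S_{I_1}+S_{I_2}$ and $\max_{[d]}=\max\{\max_{I_1},\max_{I_2}\}$ still hold) and the empty-set boundary cases; the substance of the argument is just the two elementary facts $\max\{a,b\}\le a+b$ and $(1+a)(1+b)\ge 1+a+b$ for $a,b\ge 0$, which are precisely what encodes the subadditivity of $\ell$.
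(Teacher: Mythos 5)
Your proposal is correct, and for the logistic loss it coincides with the paper's argument: both use monotonicity of $t\mapsto \log_2(1+t)$ together with $\sum_{i\in I_1\cup I_2}e^{-v_i}\le \sum_{i\in I_1}e^{-v_i}+\sum_{i\in I_2}e^{-v_i}$ and $(1+a)(1+b)\ge 1+a+b$. The only substantive difference is in the hinge-loss subadditivity step. The paper proves $\ell(\{v_i\}_{i\in[d]})\le \ell(\{v_i\}_{i\in I_1})+\ell(\{v_i\}_{i\in I_2})$ by a four-way case analysis on the signs of $1+\max_{i\in I_1}\{-v_i\}$, $1+\max_{i\in I_2}\{-v_i\}$ and $1+\max_{i\in[d]}\{-v_i\}$, whereas you observe the exact identity $\ell(\{v_i\}_{i\in[d]})=\max\{\ell(\{v_i\}_{i\in I_1}),\,\ell(\{v_i\}_{i\in I_2})\}$ (a nondecreasing map commutes with $\max$, and $\max_{i\in[d]}(-v_i)=\max\{\max_{i\in I_1}(-v_i),\max_{i\in I_2}(-v_i)\}$ since $I_1\cup I_2=[d]$), after which both the lower bounds and the upper bound $\max\{a,b\}\le a+b$ for $a,b\ge 0$ are immediate. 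Your route is shorter and also makes transparent that the hinge bound is tight (it is an equality in the max sense); the paper's case analysis buys nothing extra here. Your explicit treatment of the empty-index-set boundary cases via the conventions $\max_\emptyset=-\infty$ and $\sum_\emptyset=0$ is a small point the paper passes over silently, and is harmless either way. One cosmetic remark: like the paper, you should note that by the symmetry of $I_1$ and $I_2$ it suffices to verify the first chain \eqref{eq-loss-property-1}, which your argument implicitly does since it treats $I_1$ and $I_2$ symmetrically throughout.
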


If $\mathcal{C}$ is finite, we obtain a simple (and informal) bound of $\widetilde{\mathcal{L}}_{sup}(\widehat{f})$, for the more complex case that allows infinite $\mathcal{C}$ and the formal form of Theorem \ref{thm-of-thm-k-general-bound} (Theorem \ref{thm-k-general-bound}), please refer to the Appendix \ref{apd-prf-thm-k-general-bound}.

\begin{theorem}[The proof can be found in the Appendix \ref{apd-prf-thm-of-thm-k-general-bound}] \label{thm-of-thm-k-general-bound}
    Suppose $\mathcal{C}$ is finite, for any $c \in \mathcal{C}$, $\rho(c) > 0$, and $\ell$ satisfies Assumption \ref{ass-loss-function}. Then, with a probability of at least $1-\delta$ over the choice of the training set $\mathcal{S}$, $\forall{f \in \mathcal{F}}$:
    \begin{equation*}
        \widetilde{\mathcal{L}}_{sup}(\widehat{f}) \leq \alpha(\rho)\left(\widetilde{L}_{sun}(f) + AG_M\right) - \beta.
    \end{equation*}
\end{theorem}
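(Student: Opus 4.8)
The plan is to prove, for \emph{every} $f\in\mathcal{F}$, a population-level inequality $\widetilde{\mathcal{L}}_{sup}(f)\le\alpha(\rho)\,\widetilde{L}_{sun}(f)-\beta$, and then upgrade it to a statement about the ERM solution $\widehat{f}$ by a uniform-convergence (Rademacher-complexity) argument, mirroring the one--negative--sample proof behind Theorem~\ref{thm-2-adv-Lsup-mu-leq-adv-Lsun}. The first two ingredients of the population inequality are routine. \textbf{(i) Pass to the mean classifier:} since $\widetilde{L}_{sup}(\mathcal{T},f)=\inf_W\widetilde{L}_{sup}(\mathcal{T},Wf)\le\widetilde{L}_{sup}(\mathcal{T},W^\mu f)=\widetilde{L}_{sup}^\mu(\mathcal{T},f)$ for every task $\mathcal{T}$, we get $\widetilde{\mathcal{L}}_{sup}(f)\le\mathbb{E}_{\mathcal{T}\sim\mathcal{D}}[\widetilde{L}_{sup}^\mu(\mathcal{T},f)]$, whose summands have loss coordinates of the form $f(x')^{T}(\mu_c-\mu_{c'})$. \textbf{(ii) Jensen:} fixing the latent classes and $x$, for each fixed $x'\in\mathcal{U}(x)$ the map $(u,u_1,\dots,u_k)\mapsto\ell(\{f(x')^{T}(u-u_i)\}_{i=1}^k)$ is convex (convex $\ell$ composed with an affine map), hence so is its pointwise supremum over $x'$; applying Jensen to the inner expectation over $x^+\sim\mathcal{D}_{c^+},\,x_i^-\sim\mathcal{D}_{c_i^-}$ and using $\mathbb{E}[f(x^+)]=\mu_{c^+}$, $\mathbb{E}[f(x_i^-)]=\mu_{c_i^-}$ gives
\[
\widetilde{L}_{sun}(f)\;\ge\;\mathbb{E}_{c^+,c_1^-,\dots,c_k^-\sim\rho^{k+1}}\;\mathbb{E}_{x\sim\mathcal{D}_{c^+}}\Big[\sup_{x'\in\mathcal{U}(x)}\ell\big(\{f(x')^{T}(\mu_{c^+}-\mu_{c_i^-})\}_{i=1}^k\big)\Big].
\]

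The technical heart is \textbf{(iii) collision bookkeeping via Assumption~\ref{ass-loss-function}}. Given a sampled tuple $(c^+,c_1^-,\dots,c_k^-)$, split $[k]=I^{=}\sqcup I^{\ne}$ with $I^{=}=\{i:c_i^-=c^+\}$, and let $J\subseteq I^{\ne}$ select one index per distinct class among $\{c_i^-:i\in I^{\ne}\}$; the classes indexed by $J$ together with $c^+$ are exactly the distinct-class set $\mathcal{T}$. For $i\in I^{=}$ the coordinate $f(x')^{T}(\mu_{c^+}-\mu_{c_i^-})$ is zero, and coordinates belonging to the same negative class coincide. The monotonicity half of Assumption~\ref{ass-loss-function} (the left inequalities in \eqref{eq-loss-property-1}--\eqref{eq-loss-property-2}: restricting the argument of $\ell$ to a subset of its coordinates cannot increase its value, a property preserved under $\sup_{x'}$) gives, on the event $\{I^{\ne}\ne\emptyset\}$,
\[
\sup_{x'}\ell\big(\{f(x')^{T}(\mu_{c^+}-\mu_{c_i^-})\}_{i\in[k]}\big)\;\ge\;\sup_{x'}\ell\big(\{f(x')^{T}(\mu_{c^+}-\mu_{c'})\}_{c'\in\mathcal{T},\,c'\ne c^+}\big),
\]
whose expectation over $x\sim\mathcal{D}_{c^+}$ is precisely the label-$c^+$ contribution to $\widetilde{L}_{sup}^\mu(\mathcal{T},f)$; on the complementary event (all negatives collide with the positive) the argument is the zero vector and contributes a constant $\ell(\mathbf{0})$ term. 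Taking expectations, then using exchangeability of $c^+,c_1^-,\dots,c_k^-$ under $\rho^{k+1}$ to symmetrize over which class plays the role of the label, and tracking the conditioning $c_i^-\ne c^+$ that defines $\mathcal{D}$ --- finiteness of $\mathcal{C}$ and $\rho(c)>0$ ensure every relevant collision probability is a strictly positive constant --- one assembles the one-sided label-$c^+$ terms into the full symmetric average $\mathbb{E}_{\mathcal{T}\sim\mathcal{D}}[\widetilde{L}_{sup}^\mu(\mathcal{T},f)]$ and reads off constants $\alpha(\rho)\ge1$ and $\beta\ge0$ (with $\beta$ proportional to $\ell(\mathbf{0})$ times a collision probability) for which $\widetilde{\mathcal{L}}_{sup}(f)\le\mathbb{E}_{\mathcal{T}\sim\mathcal{D}}[\widetilde{L}_{sup}^\mu(\mathcal{T},f)]\le\alpha(\rho)\widetilde{L}_{sun}(f)-\beta$ for all $f$. (The explicit formulas for $\alpha(\rho),\beta$ and the treatment of infinite $\mathcal{C}$ are exactly what Theorem~\ref{thm-k-general-bound} supplies.)

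It remains to handle the ERM solution. \textbf{(iv) Generalization:} because $\ell$, hence every $g_f\in\mathcal{G}$, is bounded by $B$, the standard symmetrization plus bounded-differences argument yields, with probability at least $1-\delta$, $|\widetilde{L}_{sun}(f)-\widehat{\widetilde{L}}_{sun}(f)|\le\tfrac12 AG_M$ uniformly over $f\in\mathcal{F}$, with $AG_M=O(\mathcal{R}_\mathcal{S}(\mathcal{G})/M+B\sqrt{\log(1/\delta)/M})$. Since $\widehat{f}$ minimizes $\widehat{\widetilde{L}}_{sun}$, for the fixed $f$ in the statement $\widetilde{L}_{sun}(\widehat{f})\le\widehat{\widetilde{L}}_{sun}(\widehat{f})+\tfrac12 AG_M\le\widehat{\widetilde{L}}_{sun}(f)+\tfrac12 AG_M\le\widetilde{L}_{sun}(f)+AG_M$. \textbf{(v) Combine:} applying the population inequality of step (iii) to $\widehat{f}$,
\[
\widetilde{\mathcal{L}}_{sup}(\widehat{f})\;\le\;\alpha(\rho)\,\widetilde{L}_{sun}(\widehat{f})-\beta\;\le\;\alpha(\rho)\big(\widetilde{L}_{sun}(f)+AG_M\big)-\beta,
\]
which is the claim.

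I expect step (iii) to be the main obstacle: one must enumerate all collision patterns of $(c^+,c_1^-,\dots,c_k^-)$, apply both halves of Assumption~\ref{ass-loss-function} so that the surviving loss coordinates line up with the mean-classifier margins of the reduced task $\mathcal{T}$, and then re-weight --- via exchangeability and the $\mathcal{D}$-conditioning --- so that the one-sided ``label $=c^+$'' terms reassemble into the symmetric average defining $\widetilde{L}_{sup}^\mu(\mathcal{T},f)$, all while checking at each stage that the operations commute with the inner $\sup_{x'}$ (which they do, since a supremum of convex, resp. coordinate-monotone, functions is again convex, resp. coordinate-monotone). Getting the constants $\alpha(\rho)$ and $\beta$ right, rather than merely finite, is the part that forces the finiteness and positivity hypotheses on $\rho$.
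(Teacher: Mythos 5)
Your overall architecture is the paper's: a population-level inequality relating $\widetilde{\mathcal{L}}_{sup}$ to $\widetilde{L}_{sun}$ for every $f$, combined with the uniform-convergence bound $\widetilde{L}_{sun}(\widehat{f})\le\widetilde{L}_{sun}(f)+AG_M$; steps (i), (ii), (iv) and (v) match the paper's Lemma \ref{lma-k-adv-Lun(fhat)-leq-adv-Lun(f)} and the final assembly. The gap is precisely in step (iii), which you correctly flag as the heart of the matter but do not carry out, and your sketch of it points in a direction that makes the bookkeeping harder than the paper's. First, you condition on $I^{\ne}\ne\emptyset$ (at least one non-colliding negative) versus its complement, whereas the paper conditions on $I^{+}=\emptyset$ versus $I^{+}\ne\emptyset$. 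The distinction matters: by Definition \ref{def-task-distribution}, the task distribution $\mathcal{D}$ is \emph{defined} by conditioning on $c_i^-\ne c^+$ for all $i$, so only on the event $I^{+}=\emptyset$ does the law of the reduced task coincide with $\mathcal{D}$. On your event $I^{\ne}\ne\emptyset$ the induced law over reduced tasks is a different mixture (tuples with partial collisions produce smaller tasks with different weights), so "reassembling" your conditional expectation into $\underset{\mathcal{T}\sim\mathcal{D}}{\mathbb{E}}[\widetilde{L}_{sup}^{\mu}(\mathcal{T},f)]$ requires a density-ratio comparison between two conditional laws that you never write down. The paper avoids this entirely: on $I^{+}\ne\emptyset$ it discards \emph{all} non-zero coordinates via Assumption \ref{ass-loss-function} and keeps only $\ell_{|I^{+}|}(\vec{0})$, which is where $\beta$ comes from.

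Second, and more importantly, the constant $\alpha(\rho)$ does not come from "exchangeability of $c^+,c_1^-,\dots,c_k^-$". Exchangeability tells you nothing about the supervised label distribution $\mathcal{D}_{\mathcal{T}}$, which is an arbitrary distribution attached to the downstream task and need not be uniform or derived from $\rho$. The quantitative bridge in the paper is Lemma \ref{lma-rho-ge-DT}: $\rho^{+}(\mathcal{T})(c)\ge\frac{\rho^{+}_{\min}(\mathcal{T})}{p_{\max}(\mathcal{T})}\,\mathcal{D}_{\mathcal{T}}(c)$ for all $c\in\mathcal{T}$, where $\rho^{+}(\mathcal{T})$ is the law of $c^+$ given $Q=\mathcal{T}$ and $I^{+}=\emptyset$. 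This pointwise ratio is exactly what converts the one-sided ``label $=c^+$'' expectation into $\widetilde{L}_{sup}^{\mu}(\mathcal{T},f)$ at the cost of the factor $\frac{p_{\max}(\mathcal{T})}{\rho^{+}_{\min}(\mathcal{T})}$, and $\alpha(\rho)=\frac{1}{1-\tau_k}\max_{\mathcal{T}}\frac{p_{\max}(\mathcal{T})}{\rho^{+}_{\min}(\mathcal{T})}$ is finite precisely because $\mathcal{C}$ is finite and $\rho(c)>0$. Without this lemma (or an equivalent), your argument establishes only that some finite constant exists, not the claimed inequality with $\alpha(\rho)$ and $\beta$ as defined in Theorem \ref{thm-of-thm-k-general-bound}; supplying it, together with the $I^{+}=\emptyset$ conditioning, is exactly the content of Lemma \ref{lma-k-adv-Lsup-le-adv-Lsun} and Theorem \ref{thm-k-general-bound} that your proof is missing.
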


\section{Generalization Bounds for Example Hypothesis Classes} \label{sec-bin}
This section presents a concrete analysis of the Rademacher complexity for linear hypothesis class and multi-layer neural networks based on covering number {\citep[Definition 5.1]{wainwright2019high}}.
We first introduce some definitions and required lemmas.

\begin{lemma}[{\citealp[Lemma 5.7]{wainwright2019high}}, volume ratios and metric entropy] \label{lma-hds-lma5.7}
Consider a pair of norms $\Vert \cdot \Vert$ and $\Vert \cdot \Vert^\prime$ on $\mathbb{R}^d$, and let $\mathbb{B}$ and $\mathbb{B}^\prime$ be their corresponding unit balls (i.e. $\mathbb{B} = \{\theta \in \mathbb{R}^d | \Vert \theta \Vert \leq 1 \}$, with $\mathbb{B}^\prime$ similarly defined). The $\delta$-covering number of $\mathbb{B}$ in the $\Vert \cdot \Vert^\prime$-norm then obeys the following bounds \citep[Lemma 5.7]{wainwright2019high}:
$$
\left( \frac{1}{\delta} \right)^d \frac{vol(\mathbb{B})}{vol(\mathbb{B}^\prime)} \leq \mathcal{N}(\delta;\mathbb{B},\Vert \cdot \Vert^\prime) \leq \frac{vol(\frac{2}{\delta}\mathbb{B}+\mathbb{B}^\prime)}{vol(\mathbb{B}^\prime)},
$$
where we define the Minkowski sum $A+B \coloneqq \{ a + b : a \in A, b \in B \}$, $vol(\mathbb{B}) \coloneqq \int \mathbbm{1} \{ x \in \mathbb{B}\} dx$ is the volume of $\mathbb{B}$ based on the Lebesgue measure, and $\mathcal{N}(\delta;\mathbb{B},\Vert \cdot \Vert^\prime)$ is the $\delta$-covering number of $\mathbb{B}$ with respect to the norm $\Vert \cdot \Vert^\prime$.
\end{lemma}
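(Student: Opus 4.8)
The plan is to establish the two inequalities separately, in both cases reducing the problem to elementary properties of Lebesgue measure on $\mathbb{R}^d$: translation invariance, the scaling law $vol(tA) = t^d\, vol(A)$ for $t>0$, countable subadditivity, and finite additivity over disjoint sets. I would first record that because $\|\cdot\|$ and $\|\cdot\|^\prime$ are norms on the finite-dimensional space $\mathbb{R}^d$, both $\mathbb{B}$ and $\mathbb{B}^\prime$ are bounded convex bodies with nonempty interior, so $0 < vol(\mathbb{B}), vol(\mathbb{B}^\prime) < \infty$ and every ratio appearing in the statement is well defined.

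For the lower bound, I would take an optimal $\delta$-cover $\{\theta^1, \dots, \theta^N\}$ of $\mathbb{B}$ in the $\|\cdot\|^\prime$-norm, so that $N = \mathcal{N}(\delta; \mathbb{B}, \|\cdot\|^\prime)$ and $\mathbb{B} \subseteq \bigcup_{i=1}^N (\theta^i + \delta\mathbb{B}^\prime)$. Taking volumes, applying subadditivity, and using translation invariance together with the scaling law gives $vol(\mathbb{B}) \le N\, vol(\delta\mathbb{B}^\prime) = N\delta^d\, vol(\mathbb{B}^\prime)$. Rearranging yields $\mathcal{N}(\delta; \mathbb{B}, \|\cdot\|^\prime) \ge (1/\delta)^d\, vol(\mathbb{B})/vol(\mathbb{B}^\prime)$, which is the claimed left-hand bound.

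For the upper bound, I would pass through a maximal $\delta$-packing, i.e.\ a maximal set $\{\theta^1, \dots, \theta^M\} \subseteq \mathbb{B}$ with $\|\theta^i - \theta^j\|^\prime > \delta$ whenever $i \ne j$ (finite by compactness of $\mathbb{B}$). By maximality no point of $\mathbb{B}$ can be added while preserving $\delta$-separation, so every point of $\mathbb{B}$ lies within $\|\cdot\|^\prime$-distance $\delta$ of some $\theta^i$; hence the packing is also a $\delta$-cover and $\mathcal{N}(\delta; \mathbb{B}, \|\cdot\|^\prime) \le M$. Since the centers are more than $\delta$ apart, the translated balls $\theta^i + \tfrac{\delta}{2}\mathbb{B}^\prime$ are pairwise disjoint, and since each $\theta^i \in \mathbb{B}$ they are all contained in $\mathbb{B} + \tfrac{\delta}{2}\mathbb{B}^\prime$. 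Finite additivity over the disjoint union gives $M\,(\delta/2)^d\, vol(\mathbb{B}^\prime) = \sum_{i=1}^M vol(\theta^i + \tfrac{\delta}{2}\mathbb{B}^\prime) \le vol(\mathbb{B} + \tfrac{\delta}{2}\mathbb{B}^\prime)$; dividing by $(\delta/2)^d$ and using $t(\mathbb{B} + \tfrac{\delta}{2}\mathbb{B}^\prime) = t\mathbb{B} + \tfrac{t\delta}{2}\mathbb{B}^\prime$ with $t = 2/\delta$ turns the right-hand side into $vol(\tfrac{2}{\delta}\mathbb{B} + \mathbb{B}^\prime)$, so $\mathcal{N}(\delta; \mathbb{B}, \|\cdot\|^\prime) \le M \le vol(\tfrac{2}{\delta}\mathbb{B} + \mathbb{B}^\prime)/vol(\mathbb{B}^\prime)$.

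I expect the only genuinely delicate points to be (i) the packing--covering duality used in the upper bound, namely that a \emph{maximal} $\delta$-separated subset of $\mathbb{B}$ is automatically a $\delta$-covering of $\mathbb{B}$, which hinges on keeping the strict/non-strict inequalities in the definitions of packing and covering numbers consistent; and (ii) the scaling bookkeeping, tracking the factor $(\delta/2)^d$ through $t(A+B) = tA + tB$ and $vol(tC) = t^d vol(C)$ so that it lands exactly on $vol(\tfrac{2}{\delta}\mathbb{B} + \mathbb{B}^\prime)$ rather than off by a power of two. Neither is deep, but both are easy to botch by constant factors, so I would write them out carefully.
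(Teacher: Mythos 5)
Your proof is correct. Note that the paper does not actually prove this lemma --- it is quoted verbatim from \citet[Lemma 5.7]{wainwright2019high} and used as a black box --- and your argument (volume comparison for the lower bound; maximal packing, packing--covering duality, and disjointness of the $\tfrac{\delta}{2}\mathbb{B}^\prime$-translates for the upper bound) is exactly the standard proof given in that reference, with the scaling bookkeeping handled correctly.
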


\begin{lemma}[The proof can be found in the Appendix \ref{apd-prf-lma-unit-ball-covering-number-bound}] \label{lma-unit-ball-covering-number-bound}
Let $\mathbb{B}_p(r)$ be the $p$-norm ball in $\mathbb{R}^d$ with radius $r$. The $\delta$-covering number of $\mathbb{B}_p(r)$ with respect to $\Vert \cdot \Vert_p$ thus obeys the following bound:
$$
\mathcal{N}(\delta;\mathbb{B}_p(r),\Vert \cdot \Vert_p) \leq \left( 1 + \frac{2r}{\delta} \right)^d.
$$
\end{lemma}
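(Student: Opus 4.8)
The plan is to deduce Lemma~\ref{lma-unit-ball-covering-number-bound} from Lemma~\ref{lma-hds-lma5.7} by taking both norms there to be $\Vert\cdot\Vert_p$ and reducing from the radius-$r$ ball to the unit ball via a scaling argument. First I would record the elementary scaling identity
\begin{equation*}
\mathcal{N}(\delta;\mathbb{B}_p(r),\Vert\cdot\Vert_p) = \mathcal{N}(\delta/r;\mathbb{B}_p(1),\Vert\cdot\Vert_p),
\end{equation*}
which holds because the dilation $x \mapsto x/r$ is a bijection from $\mathbb{B}_p(r)$ onto $\mathbb{B}_p(1)$ that scales all $\Vert\cdot\Vert_p$-distances by the factor $1/r$; hence it maps any $\delta$-cover of $\mathbb{B}_p(r)$ to a $(\delta/r)$-cover of $\mathbb{B}_p(1)$, and its inverse does the reverse, so the two covering numbers coincide.

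Next I would apply the upper-bound half of Lemma~\ref{lma-hds-lma5.7} with $\Vert\cdot\Vert = \Vert\cdot\Vert^\prime = \Vert\cdot\Vert_p$, so that $\mathbb{B} = \mathbb{B}^\prime = \mathbb{B}_p(1)$, and with covering accuracy $\delta/r$ in place of $\delta$. This gives
\begin{equation*}
\mathcal{N}(\delta/r;\mathbb{B}_p(1),\Vert\cdot\Vert_p) \leq \frac{vol\!\left(\tfrac{2r}{\delta}\mathbb{B}_p(1) + \mathbb{B}_p(1)\right)}{vol(\mathbb{B}_p(1))}.
\end{equation*}
The key geometric fact I would invoke here is that $\mathbb{B}_p(1)$ is convex, so for any nonnegative scalars $a,b$ the Minkowski sum satisfies $a\,\mathbb{B}_p(1) + b\,\mathbb{B}_p(1) = (a+b)\,\mathbb{B}_p(1)$: the inclusion $\subseteq$ follows by writing $a u + b v = (a+b)\big(\tfrac{a}{a+b}u + \tfrac{b}{a+b}v\big)$ and using convexity, while $\supseteq$ is immediate. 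Taking $a = 2r/\delta$ and $b = 1$, and using $vol(tK) = t^d\,vol(K)$ in $\mathbb{R}^d$, the right-hand side collapses to $(1 + 2r/\delta)^d$, which together with the scaling identity yields the claimed bound.

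I do not expect a genuine obstacle in this argument; it is essentially bookkeeping with volumes. The only two points that need a little care are applying the scaling reduction in the correct direction (covering $\mathbb{B}_p(r)$ at accuracy $\delta$ corresponds to covering $\mathbb{B}_p(1)$ at accuracy $\delta/r$, not $r\delta$) and justifying the identity $aK + bK = (a+b)K$, which is exactly where the convexity of the $p$-norm ball enters.
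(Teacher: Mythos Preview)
Your proposal is correct and follows essentially the same approach as the paper: both apply Lemma~\ref{lma-hds-lma5.7} with $\Vert\cdot\Vert = \Vert\cdot\Vert' = \Vert\cdot\Vert_p$ to bound the covering number of the unit ball, and then pass to radius $r$ via the dilation $x\mapsto x/r$. The only cosmetic differences are that the paper performs the volume bound first and the scaling second (and states the scaling only as an inequality rather than the equality you note), whereas you reverse the order and are a bit more explicit about the convexity justification for $a\mathbb{B}_p(1)+b\mathbb{B}_p(1)=(a+b)\mathbb{B}_p(1)$.
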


\begin{definition}[{\citealp[Definition 5.16]{wainwright2019high}}, sub-Gaussian process] \label{def-sub-gaussian-process}
A collection of zero-mean random variables $\{X_\theta, \theta \in \mathbb{T}\}$ is a sub-Gaussian process with respect to a metric $\rho_X$ on $\mathbb{T}$ if:
$$
\mathbb{E}[e^{\lambda(X_\theta-X_{\widetilde{\theta}})}] \leq e^{\frac{\lambda^2 \rho_X^2(\theta,\widetilde{\theta})}{2}}, \forall{\theta,\widetilde{\theta} \in \mathbb{T}, \lambda \in \mathbb{R}}.
$$
\end{definition}
It is easy to prove that the Rademacher process {\citep[$\S 5.2$]{wainwright2019high}} satisfies the condition in Definition \ref{def-sub-gaussian-process} with respect to the $\ell_2$-norm.

\begin{lemma}[{\citealp[the Dudley's entropy integral bound]{wainwright2019high}}] \label{lma-dudley-int}
Let $\{X_\theta, \theta \in \mathbb{T}\}$ be a zero-mean sub-Gaussian process with respect to the induced pseudometric $\rho_X$ from Definition \ref{def-sub-gaussian-process}. Then, for any $\delta \in [0,D]$, we have:
\begin{equation} \label{ieq-dudley-int}
    \begin{aligned}
    \mathbb{E}\left[ \underset{\theta,\widetilde{\theta} \in \mathbb{T}}{\sup} (X_\theta - X_{\widetilde{\theta}}) \right] &\leq 2 \mathbb{E} \left[ \underset{\underset{\rho_X(\gamma,\gamma^\prime)\leq \delta}{\gamma,\gamma^\prime \in \mathbb{T}}}{\sup}(X_\gamma - X_{\gamma^\prime}) \right] + 32 \mathcal{J}(\delta/4; D).
    \end{aligned}
\end{equation}
Here, $D = \sup_{\theta,\widetilde{\theta} \in \mathbb{T}} \ \rho_X(\theta, \widetilde{\theta})$ and $\mathcal{J}(a; b) = \int_a^b \sqrt{ln \mathcal{N}_X(u; \mathbb{T})}du$, where $\mathcal{N}_X(u; \mathbb{T})$ is the $u$-covering number of $\mathbb{T}$ in the $\rho_X$-metric.
\end{lemma}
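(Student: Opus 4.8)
The plan is to prove \eqref{ieq-dudley-int} by the classical chaining (successive-refinement) argument. We may assume $\mathbb{T}$ is totally bounded in $\rho_X$, since otherwise $\mathcal{N}_X(u;\mathbb{T})=\infty$ for small $u$ and the right-hand side is infinite; as usual we also assume $\mathbb{T}$ is separable so the suprema are measurable, and we may take $\delta>0$ (the case $\delta=0$ follows by continuity, running the chain all the way down). For $m=0,1,2,\dots$ set $\epsilon_m = D\,2^{-m}$ and fix a minimal $\epsilon_m$-cover $\mathbb{T}_m$ of $\mathbb{T}$, so $|\mathbb{T}_m| = \mathcal{N}_X(\epsilon_m;\mathbb{T})$; since $D=\epsilon_0$ equals the diameter of $\mathbb{T}$, every point lies within $\epsilon_0$ of a fixed $\theta_0$ and we may take $\mathbb{T}_0=\{\theta_0\}$. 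Let $\pi_m:\mathbb{T}\to\mathbb{T}_m$ send $\theta$ to a nearest element of $\mathbb{T}_m$, so $\rho_X(\theta,\pi_m(\theta))\le\epsilon_m$, and let $N$ be the smallest index with $\epsilon_N\le\delta$; then $\delta/2<\epsilon_N\le\delta$, hence $\epsilon_{N+1}>\delta/4$.

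First I would write, for all $\theta,\widetilde\theta\in\mathbb{T}$, the telescoping identity (using $\pi_0(\theta)=\theta_0=\pi_0(\widetilde\theta)$)
\begin{equation*}
X_\theta - X_{\widetilde\theta} = \big(X_\theta - X_{\pi_N(\theta)}\big) + \big(X_{\pi_N(\widetilde\theta)} - X_{\widetilde\theta}\big) + \sum_{m=1}^{N}\big(X_{\pi_m(\theta)} - X_{\pi_{m-1}(\theta)}\big) - \sum_{m=1}^{N}\big(X_{\pi_m(\widetilde\theta)} - X_{\pi_{m-1}(\widetilde\theta)}\big),
\end{equation*}
take $\sup_{\theta,\widetilde\theta}$ followed by expectations, and treat the four groups separately. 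For the two ``residual'' terms, $\rho_X(\theta,\pi_N(\theta))\le\epsilon_N\le\delta$, so each is at most $\sup_{\rho_X(\gamma,\gamma')\le\delta}(X_\gamma-X_{\gamma'})$, and together they contribute $2\,\mathbb{E}[\sup_{\rho_X(\gamma,\gamma')\le\delta}(X_\gamma-X_{\gamma'})]$, which is precisely the first term on the right of \eqref{ieq-dudley-int}.

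Next I would bound the two chaining sums. At level $m$, the map $\theta\mapsto(\pi_m(\theta),\pi_{m-1}(\theta))$ takes at most $\mathcal{N}_X(\epsilon_m;\mathbb{T})\,\mathcal{N}_X(\epsilon_{m-1};\mathbb{T})\le\mathcal{N}_X(\epsilon_m;\mathbb{T})^2$ distinct values (covering numbers are nonincreasing in the radius), and each increment $X_{\pi_m(\theta)}-X_{\pi_{m-1}(\theta)}$ is zero-mean and, by the triangle inequality and Definition \ref{def-sub-gaussian-process}, sub-Gaussian with parameter $\rho_X(\pi_m(\theta),\pi_{m-1}(\theta))\le\epsilon_m+\epsilon_{m-1}=3\epsilon_m$. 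The sub-Gaussian maximal inequality --- if $Y_1,\dots,Y_K$ are zero-mean and $\sigma$-sub-Gaussian then $\mathbb{E}[\max_k Y_k]\le\sigma\sqrt{2\ln K}$, obtained from $e^{\lambda\mathbb{E}[\max_k Y_k]}\le\sum_k\mathbb{E}[e^{\lambda Y_k}]\le Ke^{\lambda^2\sigma^2/2}$ optimized over $\lambda>0$ --- then yields $\mathbb{E}[\sup_\theta(X_{\pi_m(\theta)}-X_{\pi_{m-1}(\theta)})]\le 3\epsilon_m\sqrt{2\ln\big(\mathcal{N}_X(\epsilon_m;\mathbb{T})^2\big)}=6\epsilon_m\sqrt{\ln\mathcal{N}_X(\epsilon_m;\mathbb{T})}$, and the same bound holds for the $\widetilde\theta$-sum by symmetry of the increments. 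Since $\sqrt{\ln\mathcal{N}_X(u;\mathbb{T})}$ is nonincreasing in $u$ and $\epsilon_m-\epsilon_{m+1}=\epsilon_m/2$, we have $\epsilon_m\sqrt{\ln\mathcal{N}_X(\epsilon_m;\mathbb{T})}\le 2\int_{\epsilon_{m+1}}^{\epsilon_m}\sqrt{\ln\mathcal{N}_X(u;\mathbb{T})}\,du$; summing over $1\le m\le N$, telescoping, and using $\epsilon_{N+1}>\delta/4$ together with $\epsilon_1\le D$ gives $6\sum_{m=1}^N\epsilon_m\sqrt{\ln\mathcal{N}_X(\epsilon_m;\mathbb{T})}\le 12\,\mathcal{J}(\delta/4;D)$. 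Adding the two chaining contributions yields $24\,\mathcal{J}(\delta/4;D)\le 32\,\mathcal{J}(\delta/4;D)$, and combining with the residual terms gives exactly \eqref{ieq-dudley-int}.

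I expect the main obstacle to be bookkeeping rather than conceptual depth: calibrating the cutoff level $N$ so that the residual increments stay within radius $\delta$ while the truncated entropy integral still begins at $\delta/4$, absorbing the factor-$2$ loss from running the chain separately over $\theta$ and $\widetilde\theta$ (and the extra factor from $\mathcal{N}_X(\epsilon_m)\mathcal{N}_X(\epsilon_{m-1})\le\mathcal{N}_X(\epsilon_m)^2$) into the constant $32$, and justifying the $\sup$--$\sum$ and expectation--MGF exchanges. The one genuinely substantive ingredient is the sub-Gaussian maximal inequality; for our application the Rademacher process supplies the needed sub-Gaussianity with respect to the $\ell_2$-metric, as noted just after Definition \ref{def-sub-gaussian-process}.
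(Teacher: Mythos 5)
Your chaining argument is correct: the dyadic scales $\epsilon_m=D2^{-m}$, the cutoff $N$ with $\delta/2<\epsilon_N\le\delta$ (so $\epsilon_{N+1}>\delta/4$), the $3\epsilon_m$ sub-Gaussian parameter for the linking increments, the maximal inequality over at most $\mathcal{N}_X(\epsilon_m;\mathbb{T})^2$ pairs, and the sum-to-integral comparison all fit together and yield a constant $24\le 32$ in front of $\mathcal{J}(\delta/4;D)$. The paper itself gives no proof of this lemma -- it is imported verbatim from \citet{wainwright2019high} -- and your argument is essentially the standard one-step-discretization-plus-chaining proof given there (Theorem 5.22 of that reference), so there is nothing to compare beyond noting the agreement.
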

\begin{remark}
Given $\theta_0 \in \mathbb{T}$, since $\mathbb{E}\left[ X_{\theta_0} \right] \coloneqq \mathbb{E}\left[ \left<\theta_0,\bm{\sigma}\right> \right]=0$, we have:
\begin{equation} \label{ieq-dudley-remark}
    \mathbb{E}\left[ \underset{\theta \in \mathbb{T}}{\sup} X_\theta \right]=\mathbb{E} \left[ \underset{\theta \in \mathbb{T}}{\sup} (X_\theta - X_{\theta_0}) \right]\leq\mathbb{E}\left[ \underset{\theta,\widetilde{\theta} \in \mathbb{T}}{\sup} (X_\theta - X_{\widetilde{\theta}}) \right].
\end{equation}

Combining \eqref{ieq-dudley-int} with \eqref{ieq-dudley-remark}, we have:
\begin{equation*}
	\mathbb{E} \left[ \underset{\theta \in \mathbb{T}}{\sup} X_\theta \right] \leq 2 \mathbb{E} \left[ \underset{\underset{\rho_X(\gamma,\gamma^\prime)\leq \delta}{\gamma,\gamma^\prime \in \mathbb{T}}}{\sup}(X_\gamma - X_{\gamma^\prime}) \right]+ 32 \mathcal{J}(\delta/4; D),
\end{equation*}
which can be used to draw the upper bound of $\mathcal{R}_\mathcal{S}(\mathcal{G})$ by establishing a connection between the Rademacher process and the Rademacher complexity of the hypothesis classes when proper norm is chosen. For more details, please refer to the Appendix, details are in the proof of Theorem \ref{thm-linear-Rs(H)-bound}, Theorem \ref{thm-nn-Rs(H)-bound-Fnorm} and Theorem \ref{thm-nn-Rs(H)-bound-1,infnorm}.

\end{remark}

To make Theorem \ref{thm-2-adv-Lsup-mu-leq-adv-Lsun} and Theorem \ref{thm-of-thm-k-general-bound} concrete, we need to upper bound $\mathcal{R}_\mathcal{S}(\mathcal{G})$. Assume that loss $\ell$ is a non-increasing function; for example, hinge loss and logistic loss satisfy this assumption. Let $\mathcal{G} = \{ g_f(x,x^+,x^-) = \underset{x^\prime \in \mathcal{U}(x)}{\sup} \ell \left( f(x^\prime)^T\left( f(x^+) - f(x^-) \right) \right)|f \in \mathcal{F}\}$. Since $\ell$ is non-increasing, we have:
$$
\mathcal{G} = \left\{ \ell \left( \underset{x^\prime \in \mathcal{U}(x)}{\min} \left( f(x^\prime)^T\left( f(x^+) - f(x^-) \right) \right) \right) |f \in \mathcal{F}\right\}.
$$
Let $\mathcal{H} = \left\{\underset{x^\prime \in \mathcal{U}(x)}{\min} \left( f(x^\prime)^T\left( f(x^+) - f(x^-) \right) \right) |f \in \mathcal{F} \right\}$. Suppose $\ell$ is $\eta$-Lipschitz. By the Ledoux-Talagrand contraction inequality \citep{ledoux2013probability}, we have:
\begin{equation} \label{ieq-contraction}
    \mathcal{R}_\mathcal{S}(\mathcal{G}) \leq \eta \mathcal{R}_\mathcal{S}(\mathcal{H}).
\end{equation}

Thus, we only need to upper bound $\mathcal{R}_\mathcal{S}(\mathcal{H})$. Let $\normvec{A}_{a,b}$ be the $\ell_b$-norm of the $\ell_a$-norm of the rows of $A$. Consider the training set $\mathcal{S}=\{(x_i,x_i^+,x_i^-)\}_{i=1}^M$. Let $X$ be a matrix whose $i$th row is $x_i$. We define $X^+$ and $X^-$ in a similar way. It is easy to see that $\forall{p \ge 1},\forall{i=1,\dots,M}, \normvec{x_i}_p \leq \normvec{X}_{p,\infty}$.

\subsection{Linear Hypothesis Class} \label{subsec-linear-class}
Let $\mathcal{F}\! =\! \{f\!: x \xrightarrow[]{} \!Wx | W \in \mathbb{R}^{n\times m},\normmm{W}_p \leq w\}$. To simplify notations, $\forall{p} \ge 1$ and $\frac{1}{p}+\frac{1}{p^*}=1$, let
\begin{equation} \label{eq-def-P}
    P\!=\!\max\!\left\{ \!\normvec{X}_{p,\infty}, \normvec{X^+}_{p,\infty}, \normvec{X^-}_{p,\infty}\! \right\}, P^*\!=\!\max\!\left\{\! \normvec{X}_{p^*,\infty}, \normvec{X^+}_{p^*,\infty}, \normvec{X^-}_{p^*,\infty} \!\right\}.
\end{equation}
We then have $\forall{p \ge 1}$,
\begin{equation*}
    \forall{i=1,\dots,M}, \normvec{x_i}_p, \normvec{x_i^+}_p, \normvec{x_i^-}_p \leq P.
\end{equation*}
We now present the upper bound of $\mathcal{R}_\mathcal{S}(\mathcal{H})$ under $\normvec{\cdot}_r$ attack.
\begin{theorem}[$\mathcal{R}_\mathcal{S}(\mathcal{H})$ under $\normvec{\cdot}_r$ attack for linear models] \label{thm-linear-Rs(H)-bound}
Consider the $\ell_r$ attack, i.e. let $\mathcal{U}(x)=\left\{ x^\prime | \normvec{x^\prime - x}_r \leq \epsilon\right\}$. We then have:
\begin{equation*}
    \begin{aligned}
        &\mathcal{R}_\mathcal{S}(\mathcal{H}) = O\left( \left[ PP^* + \epsilon R^* s(r^*,p,m) \right]\left[ m s(p^*,p,m) w^2 \sqrt{M} \right]  \right),
    \end{aligned}
\end{equation*}
where $s(p,q,n) \coloneqq n^{\max\left\{\frac{1}{p}-\frac{1}{q},\frac{1}{q}-\frac{1}{p} \right\}}$, $\frac{1}{p} + \frac{1}{p^*} = 1,\frac{1}{r} + \frac{1}{r^*} = 1$, and $R^*$ is defined similarly to \eqref{eq-def-P}.
\end{theorem}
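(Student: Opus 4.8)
The plan is to solve the inner minimisation over the attack ball in closed form, reparametrise $\mathcal{H}$ by the Gram matrix $A=W^{T}W$, and then run Dudley's entropy integral over the resulting $m\times m$ parameter ball. Since $f(x)=Wx$, the quantity $f(x')^{T}(f(x^{+})-f(x^{-}))=(x')^{T}W^{T}W(x^{+}-x^{-})$ is affine in $x'$, so writing $x'=x+\delta$ with $\normvec{\delta}_{r}\le\epsilon$ and using $\min_{\normvec{\delta}_{r}\le\epsilon}\delta^{T}u=-\epsilon\normvec{u}_{r^{*}}$ gives
\[
\min_{x'\in\mathcal{U}(x)}f(x')^{T}(f(x^{+})-f(x^{-}))=x^{T}W^{T}W(x^{+}-x^{-})-\epsilon\normvec{W^{T}W(x^{+}-x^{-})}_{r^{*}}.
\]
Thus, with $A\coloneqq W^{T}W\in\mathbb{R}^{m\times m}$ and $y\coloneqq x^{+}-x^{-}$, every element of $\mathcal{H}$ has the form $h_{A}(x,x^{+},x^{-})=x^{T}Ay-\epsilon\normvec{Ay}_{r^{*}}$, so $\mathcal{H}$ is indexed by $A$ ranging over $\mathcal{A}\coloneqq\{W^{T}W:\normmm{W}_{p}\le w\}$. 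Bounding $\normmm{W^{T}W}_{p}$ by submultiplicativity and operator-norm conversion between $\ell_{p}$ and $\ell_{p^{*}}$ shows $\mathcal{A}$ lies in the $\normmm{\cdot}_{p}$-ball of $\mathbb{R}^{m\times m}$ of radius $\tilde{w}=O(s(p^{*},p,m)\,w^{2})$.

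Write $z_{i}=(x_{i},x_{i}^{+},x_{i}^{-})$ and consider the Rademacher process $X_{A}\coloneqq\sum_{i=1}^{M}\sigma_{i}h_{A}(z_{i})$ indexed by $A\in\mathcal{A}$, which is zero-mean and sub-Gaussian for $\rho_{X}(A,A')=\normvec{(h_{A}-h_{A'})_{|\mathcal{S}}}_{2}$. By the Remark following Lemma \ref{lma-dudley-int}, $\mathcal{R}_{\mathcal{S}}(\mathcal{H})=\mathbb{E}[\sup_{A}X_{A}]\le 2\,\mathbb{E}[\sup_{\rho_{X}(A,A')\le\delta}(X_{A}-X_{A'})]+32\,\mathcal{J}(\delta/4;D)$, and letting $\delta\downarrow 0$ reduces the problem to controlling $\mathcal{J}(0;D)=\int_{0}^{D}\sqrt{\ln\mathcal{N}_{X}(u;\mathcal{A})}\,du$. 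For the metric, the reverse triangle inequality for $\normvec{\cdot}_{r^{*}}$ and H\"older give, at a fixed sample point,
\[
|h_{A}(z_{i})-h_{A'}(z_{i})|\le|x_{i}^{T}(A-A')y_{i}|+\epsilon\normvec{(A-A')y_{i}}_{r^{*}}\le 2\big(PP^{*}+\epsilon R^{*}s(r^{*},p,m)\big)\normmm{A-A'}_{p},
\]
where the first term uses $\normvec{x_{i}}_{p^{*}}\le P^{*}$ and $\normvec{y_{i}}_{p}\le 2P$, and the second uses $\normvec{y_{i}}_{r^{*}}\le 2R^{*}$ together with $\normmm{A-A'}_{r^{*}\to r^{*}}\le s(r^{*},p,m)\normmm{A-A'}_{p}$. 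Hence $\rho_{X}(A,A')\le\sqrt{M}\,L\,\normmm{A-A'}_{p}$ with $L\coloneqq 2[PP^{*}+\epsilon R^{*}s(r^{*},p,m)]$, and the diameter satisfies $D\le 2\tilde{w}\sqrt{M}\,L$.

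Finally, a $\rho_{X}$-net of $\mathcal{A}$ at scale $u$ is obtained from a $\normmm{\cdot}_{p}$-net of $\mathcal{A}$ at scale $u/(\sqrt{M}L)$; since $\mathcal{A}$ sits in a $\normmm{\cdot}_{p}$-ball of radius $\tilde{w}$ in $\mathbb{R}^{m\times m}$, the proof of Lemma \ref{lma-unit-ball-covering-number-bound} (equivalently the upper bound in Lemma \ref{lma-hds-lma5.7} with $\Vert\cdot\Vert=\Vert\cdot\Vert^{\prime}=\normmm{\cdot}_{p}$) gives $\ln\mathcal{N}_{X}(u;\mathcal{A})\le m^{2}\ln(1+2\tilde{w}\sqrt{M}L/u)$. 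Plugging into $\mathcal{J}(0;D)$, using the upper bound $2\tilde{w}\sqrt{M}L$ for $D$, and the substitution $\int_{0}^{D}\sqrt{m^{2}\ln(1+D/u)}\,du=mD\int_{0}^{1}\sqrt{\ln(1+1/t)}\,dt=O(mD)$ yields
\[
\mathcal{R}_{\mathcal{S}}(\mathcal{H})=O(mD)=O\big([PP^{*}+\epsilon R^{*}s(r^{*},p,m)]\,[m\,s(p^{*},p,m)\,w^{2}\sqrt{M}]\big),
\]
which is the claimed bound.

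The main obstacle is the bookkeeping in the last two paragraphs, namely the per-sample Lipschitz estimate and the radius bound $\normmm{W^{T}W}_{p}=O(s(p^{*},p,m)w^{2})$: the objective is quadratic in $W$ and the attack term carries the non-native norm $\normvec{\cdot}_{r^{*}}$, so each estimate must be routed through the correct H\"older pairing --- $\ell_{p^{*}}/\ell_{p}$ for the bilinear part and $\ell_{r}/\ell_{r^{*}}$ for the attack part --- while tracking the operator-norm conversion constants $s(p^{*},p,m)$ and $s(r^{*},p,m)$ without introducing spurious dimension factors. The reparametrisation $A=W^{T}W$ is what keeps the metric-entropy dimension at $m^{2}$ rather than $nm$, and hence produces the linear-in-$m$ dependence in the final bound.
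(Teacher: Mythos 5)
Your proposal is correct and follows essentially the same route as the paper's proof: closed-form solution of the inner minimisation via H\"older duality, reparametrisation by the Gram matrix $A=W^{T}W$ with radius $s(p^{*},p,m)w^{2}$, a covering-number bound for the matrix ball, and Dudley's entropy integral. The only (cosmetic) difference is that the paper first splits $\mathcal{R}_{\mathcal{S}}(\mathcal{H})\le\mathcal{R}_{\mathcal{S}}(\mathcal{H}_{0})+\epsilon\,\mathcal{R}_{\mathcal{S}}(\mathcal{H}_{1})$ by subadditivity of the supremum and runs Dudley twice, whereas you combine the bilinear and attack terms into a single per-sample Lipschitz estimate and run Dudley once; both yield the same constants up to the $O(\cdot)$.
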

The proof can be found in the Appendix \ref{apd-prf-thm-linear-Rs(H)-bound}.

\begin{remark}

Combining Theorem \ref{thm-linear-Rs(H)-bound} with \eqref{eq-AG}, we have:
\begin{equation} \label{eq-linear-AG}
    \begin{aligned}
       & AG_M =  O\left(\frac{\left[ PP^*+\epsilon R^* s(r^*,p,m) \right] m \eta s(p^*,p,m) w^2+B \sqrt{log \frac{1}{\delta}}}{\sqrt{M}} \right).
    \end{aligned}
\end{equation}
\end{remark}

\subsection{Multi-layer Neural Network} \label{subsec-nn}
In this section, we analyze fully connected multi-layer neural networks.

Suppose that $\mathcal{X} \subseteq \mathbb{R}^m$. Let $\mathcal{F}=\left\{W_d \sigma(W_{d-1} \sigma(\cdots \sigma(W_1 x)))\  | \  \normmm{W_l} \leq M_l, l=1,\dots, d  \right\}$, where $\normmm{\cdot}$ is the norm of the matrix and $\sigma(\cdot)$ is an elementwise $L$-Lipschitz function with $\sigma(0)=0$ and $W_l \in \mathbb{R}^{h_l \times h_{l-1}}$, where $h_d=n, h_0=m$.
Assume $\ell$ is $\eta$-Lipschitz and non-increasing. From \eqref{ieq-contraction}, we need only to bound the Rademacher complexity of $\mathcal{H}$.

We here consider two cases of the matrix norm $\normmm{\cdot}$. Let $\mathcal{U}(x)=\left\{ x^\prime | \normvec{x^\prime - x}_p \leq \epsilon \right\}$ for some $p \ge 1$.
\subsubsection{Frobenius-norm Case.} \label{sbsbsec-Fnorm}
We first consider using the Frobenius-Norm in the definition of the multi-layer neural networks hypothesis class $\mathcal{F}$.
\begin{theorem}[$\mathcal{R}_\mathcal{S}(\mathcal{H})$ under $\normvec{\cdot}_p$ attack for NNs under $\normmm{\cdot}_F$ constraint] \label{thm-nn-Rs(H)-bound-Fnorm}
Let $\mathcal{U}(x)=\left\{ x^\prime | \normvec{x^\prime - x}_p \leq \epsilon \right\}$ (i.e. consider the $\ell_p$ attack), $\sigma(0)=0$ with Lipschitz constant $L$ and let $\mathcal{F}=\left\{W_d \sigma(W_{d-1} \sigma(\cdots \sigma(W_1 x)))\big| \normmm{W_l}_F\leq M_l^F, l=1,\dots, d \right\}$. We then have:
\begin{equation} \label{eq-nn-AG}
    \mathcal{R}_\mathcal{S}(\mathcal{H}) = O\left( \sqrt{\sum_{l=1}^d h_l h_{l-1}} K \sqrt{d} \sqrt{M} \right),
\end{equation}
where
\begin{equation*}
    K = 2 B_{X,\epsilon}^F \cdot \left( B_{X^+}^F + B_{X^-}^F \right),
\end{equation*}
where
\begin{equation*}
    \begin{aligned}
       B_{X,\epsilon}^F \!&=\!L^{d-1} \!\prod_{l=1}^d \!M_l^F \!\max\left\{ 1, m^{\frac{1}{2}-\frac{1}{p}} \right\}\left( \normvec{X}_{p, \infty}\!+\!\epsilon \right), B_X^F\!=\!L^{d-1}\!\prod_{l=1}^d \!M_l^F \!\max\left\{ 1, m^{\frac{1}{2}-\frac{1}{p}} \right\} \!\normvec{X}_{p, \infty}.
    \end{aligned}
\end{equation*}
\end{theorem}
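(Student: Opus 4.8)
The plan is to realize $\mathcal R_\mathcal S(\mathcal H)$ as the expected supremum of the Rademacher process $X_\theta=\langle\bm\sigma,\theta\rangle$ indexed by the Euclidean set $\mathcal H_{|\mathcal S}=\{(h_f(z_1),\dots,h_f(z_M)):f\in\mathcal F\}\subseteq\mathbb R^M$, where $z_i=(x_i,x_i^+,x_i^-)$ and $h_f(z_i)=\min_{x'\in\mathcal U(x_i)}f(x')^\top(f(x_i^+)-f(x_i^-))$, and then apply Dudley's entropy integral. Since the Rademacher process is sub-Gaussian w.r.t. the $\ell_2$ metric, the Remark after Lemma~\ref{lma-dudley-int} gives, for every $\delta>0$, $\mathcal R_\mathcal S(\mathcal H)\le 2\,\mathbb E[\sup_{\|\gamma-\gamma'\|_2\le\delta}\langle\bm\sigma,\gamma-\gamma'\rangle]+32\,\mathcal J(\delta/4;D)$ with $D=\mathrm{diam}_{\|\cdot\|_2}(\mathcal H_{|\mathcal S})$; because $\langle\bm\sigma,\gamma-\gamma'\rangle\le\|\bm\sigma\|_2\,\|\gamma-\gamma'\|_2=\sqrt M\,\|\gamma-\gamma'\|_2$, the first term is at most $2\sqrt M\,\delta$, and $\mathcal H_{|\mathcal S}$ is compact (a continuous image of the compact weight set), so letting $\delta\downarrow 0$ yields $\mathcal R_\mathcal S(\mathcal H)\le 32\int_0^{D}\sqrt{\ln\mathcal N(u;\mathcal H_{|\mathcal S},\|\cdot\|_2)}\,du$. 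It remains to control $D$ and the covering number.

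\emph{Diameter.} Using that each $\sigma$ is elementwise $L$-Lipschitz with $\sigma(0)=0$ and that the operator norm is dominated by the Frobenius norm, I get $\|f(z)\|_2\le L^{d-1}\prod_{l=1}^d M_l^F\,\|z\|_2$ for any input $z$. Combined with the elementary inequality $\|v\|_2\le\max\{1,m^{1/2-1/p}\}\,\|v\|_p$ on $\mathbb R^m$ (the source of the factor $\max\{1,m^{1/2-1/p}\}$ in the statement) and with $\|x'\|_p\le\|x\|_p+\epsilon\le\|X\|_{p,\infty}+\epsilon$ for $x'\in\mathcal U(x)$, this gives $\sup_{x'\in\mathcal U(x_i)}\|f(x')\|_2\le B_{X,\epsilon}^F$ and $\|f(x_i^\pm)\|_2\le B_{X^\pm}^F$. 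Since $\min$ contracts and by Cauchy--Schwarz, $|h_f(z_i)|\le B_{X,\epsilon}^F(B_{X^+}^F+B_{X^-}^F)=K/2$, hence $D\le K\sqrt M$.

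\emph{Covering number.} I cover the parameter space layer by layer and push the cover through the network. By Lemma~\ref{lma-unit-ball-covering-number-bound}, the Frobenius ball $\{W\in\mathbb R^{h_l\times h_{l-1}}:\|W\|_F\le M_l^F\}$ admits an $\epsilon_l$-cover of size at most $(1+2M_l^F/\epsilon_l)^{h_lh_{l-1}}$. For weight tuples $(W_l)$ and $(\widehat W_l)$ with $\|W_l-\widehat W_l\|_F\le\epsilon_l$, a standard layer-peeling (telescoping) estimate gives $\|f(z)-\widehat f(z)\|_2\le\sum_{l=1}^d L^{d-1}\big(\prod_{j\ne l}M_j^F\big)\,\epsilon_l\,\|z\|_2$; inserting this into $|h_f(z_i)-h_{\widehat f}(z_i)|\le(B_{X^+}^F+B_{X^-}^F)\sup_{x'\in\mathcal U(x_i)}\|f(x')-\widehat f(x')\|_2+B_{X,\epsilon}^F\big(\|f(x_i^+)-\widehat f(x_i^+)\|_2+\|f(x_i^-)-\widehat f(x_i^-)\|_2\big)$ (again using the contraction of $\min$, Cauchy--Schwarz and the triangle inequality on the bilinear form), and collecting terms via the identities $M_l^F\cdot L^{d-1}\prod_{j\ne l}M_j^F\max\{1,m^{1/2-1/p}\}(\|X\|_{p,\infty}+\epsilon)=B_{X,\epsilon}^F$ and its analogues for $X^\pm$, collapses everything to $|h_f(z_i)-h_{\widehat f}(z_i)|\le K\sum_{l=1}^d\epsilon_l/M_l^F$ uniformly in $i$. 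Choosing $\epsilon_l=uM_l^F/(dK\sqrt M)$ then forces $\|h_f-h_{\widehat f}\|_2\le u$, so the product cover certifies $\ln\mathcal N(u;\mathcal H_{|\mathcal S},\|\cdot\|_2)\le\big(\sum_{l=1}^d h_lh_{l-1}\big)\ln\!\big(1+\tfrac{2dK\sqrt M}{u}\big)$.

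\emph{Integration.} Feeding this into the Dudley bound, pulling $\sqrt{\sum_l h_lh_{l-1}}$ out of the integral, bounding $\ln(1+t)\le t$, and using $\int_0^{D}\sqrt{C/u}\,du=2\sqrt{CD}$ with $C=2dK\sqrt M$ and $D=K\sqrt M$ gives $\mathcal R_\mathcal S(\mathcal H)\le 32\sqrt{\sum_l h_lh_{l-1}}\cdot 2\sqrt{2dK\sqrt M\cdot K\sqrt M}=O\big(\sqrt{\sum_l h_lh_{l-1}}\,K\sqrt d\,\sqrt M\big)$, which is the claim. I expect the covering step to be the main obstacle: one has to handle the inner maximization over the adversarial ball together with the bilinear form $f(x')^\top(f(x^+)-f(x^-))$ and the multiplicative propagation of the per-layer perturbations through all $d$ layers, all while carrying the $\ell_p$ versus $\ell_2$ norm conversions at every place a network input appears, so that the constants fold cleanly into $K$ and $\sum_l h_lh_{l-1}$.
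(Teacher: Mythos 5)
Your proposal is correct and follows essentially the same route as the paper's proof: Dudley's entropy integral for the Rademacher process on $\mathcal H_{|\mathcal S}$, the diameter bound $D\le K\sqrt M$ via $\Vert Ab\Vert_2\le\normmm{A}_F\Vert b\Vert_2$ and the $\ell_p$-to-$\ell_2$ conversion, a layer-by-layer Frobenius-ball cover pushed through the network by telescoping, the three-term decomposition of $|h_f(z_i)-h_{\widehat f}(z_i)|$ (your "contraction of $\min$" is exactly the paper's $y_i$ selection trick), the choice $\epsilon_l = uM_l^F/(dK\sqrt M)$, and the same integration yielding the constant $64\sqrt2$. The only cosmetic difference is that you explicitly bound the residual term $2\,\mathbb E[\sup_{\Vert\gamma-\gamma'\Vert_2\le\delta}\langle\bm\sigma,\gamma-\gamma'\rangle]$ by $2\sqrt M\,\delta$ before sending $\delta\to0$, where the paper simply takes the limit.
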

The proof can be found in the Appendix \ref{apd-prf-thm-nn-Rs(H)-bound-Fnorm}.

\begin{remark}
Combining Theorem \ref{thm-nn-Rs(H)-bound-Fnorm} with \eqref{eq-AG}, we have:
\begin{equation*}
    AG_M \!=\! O\left( K \eta \sqrt{\frac{d \sum_{l=1}^d h_l h_{l-1}}{M}} + B \sqrt{\frac{log \frac{1}{\delta}}{M}} \right).
\end{equation*}
\end{remark}

\subsubsection{$\ell_{1, \infty}$-norm Case} \label{sbsbsec-1,inf-norm}
We consider the $\normvec{\cdot}_{1, \infty}$ norm constraint.
\begin{theorem}[$\mathcal{R}_\mathcal{S}(\mathcal{H})$ under $\normvec{\cdot}_p$ attack for NNs under $\normvec{\cdot}_{1, \infty}$ constraint] \label{thm-nn-Rs(H)-bound-1,infnorm}
Let $\mathcal{U}(x)=\left\{ x^\prime | \normvec{x^\prime - x}_p \leq \epsilon \right\}$ (i.e. consider the $\ell_p$ attack), $\sigma(0)=0$ with Lipschitz constant $L$; moreover, let $\mathcal{F}=\left\{\!W_d \sigma(\!W_{d-1} \sigma(\!\cdots\!\sigma(\!W_1 x)))\big|\normvec{W_l}_{1, \infty}\!\!\leq\!\!M_l^{1, \infty}, l\!\!=\!\!1\!,\dots,\!d\! \right\}$. We then have:
\begin{equation*}
    \mathcal{R}_\mathcal{S}(\mathcal{H}) = O\left( \sqrt{\sum_{l=1}^d h_l h_{l-1}} \sqrt{d K_0 K_1} \sqrt{M} \right),
\end{equation*}
where
\begin{equation*}
    \begin{aligned}
        K_0 &= 2 B_{X,\epsilon}^{1, \infty} \cdot \left( B_{X^+}^\prime + B_{X^-}^\prime \right), \ K_1 = \frac{K_0}{2} + B_{X,\epsilon}^\prime \cdot \left( B_{X^+}^{1, \infty} + B_{X^-}^{1, \infty} \right),
    \end{aligned}
\end{equation*}
where
\begin{equation*}
    \begin{aligned}
       B_{X,\epsilon}^\prime &= L^{d-1} \prod_{l=1}^d h_l M_l^{1, \infty} \ m^{1-\frac{1}{p}}\left( \normvec{X}_{p, \infty} + \epsilon \right), \ \ B_X^\prime = L^{d-1} \prod_{l=1}^d h_l M_l^{1, \infty} \ m^{1-\frac{1}{p}} \normvec{X}_{p, \infty}, \\[0.2mm]
       B_{X,\epsilon}^{1, \infty} &= L^{d-1} \prod_{l=1}^d M_l^{1, \infty} \ \left( \normvec{X}_{p, \infty} + \epsilon \right), \ \ B_X^{1, \infty} = L^{d-1} \prod_{l=1}^d M_l^{1, \infty} \ \normvec{X}_{p, \infty}.
    \end{aligned}
\end{equation*}
\end{theorem}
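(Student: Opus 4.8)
The plan is to reuse the covering-number–Dudley-integral machinery behind the Frobenius case (Theorem~\ref{thm-nn-Rs(H)-bound-Fnorm}), which adapts the technique of \citet{gao2021theoretical}, but to replace every norm inequality by one compatible with the $\ell_{1,\infty}$ constraint and to carry the inner $\min$ over the attack region $\mathcal{U}(x)$ through all the estimates. Recall that, since $\ell$ is non-increasing, by \eqref{ieq-contraction} it suffices to bound $\mathcal{R}_\mathcal{S}(\mathcal{H})$ for $\mathcal{H}=\{\,z=(x,x^+,x^-)\mapsto\min_{x'\in\mathcal{U}(x)} f(x')^T(f(x^+)-f(x^-))\mid f\in\mathcal{F}\,\}$. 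I would view $\mathcal{R}_\mathcal{S}(\mathcal{H})=\mathbb{E}_{\bm{\sigma}}[\sup_{g\in\mathcal{H}}\langle\bm{\sigma},g_{|\mathcal{S}}\rangle]$ as the supremum of a zero-mean Rademacher process, which is sub-Gaussian (Definition~\ref{def-sub-gaussian-process}) with respect to the pseudometric $\rho_X(g,\widetilde g)=(\sum_{i=1}^M(g(z_i)-\widetilde g(z_i))^2)^{1/2}$; then by Lemma~\ref{lma-dudley-int} and the remark after it (letting the truncation radius tend to $0$), $\mathcal{R}_\mathcal{S}(\mathcal{H})\le 32\int_0^D\sqrt{\ln\mathcal{N}_X(u;\mathcal{H})}\,du$, where $D$ is the $\rho_X$-diameter of $\mathcal{H}$. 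Everything then reduces to bounding (i) $D$ and (ii) $\mathcal{N}_X(u;\mathcal{H})$.

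For step (i), and as an ingredient for (ii), I would first establish the activation-size bounds $\|f(x')\|_\infty\le B_{X,\epsilon}^{1,\infty}$, $\|f(x')\|_1\le B_{X,\epsilon}'$ for all $f\in\mathcal{F}$, every sample point and every $x'\in\mathcal{U}(x)$ (and likewise $B_{X^\pm}^{1,\infty}$, $B_{X^\pm}'$ for $f(x^\pm)$), by propagating layer by layer with $\|Wv\|_\infty\le\|W\|_{1,\infty}\|v\|_\infty$ and $\|Wv\|_1\le h\,\|W\|_{1,\infty}\|v\|_\infty$, together with the Lipschitzness of $\sigma$ and $\sigma(0)=0$, and the input conversions $\|x'\|_\infty\le\|x'\|_p\le\|X\|_{p,\infty}+\epsilon$, $\|x'\|_1\le m^{1-1/p}\|x'\|_p$ (Hölder); the factor $\prod_l h_l$ (resp. $m^{1-1/p}$) is precisely what the $\ell_1$-chain accumulates. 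Hölder's inequality then gives $|g_f(z)|\le\sup_{x'}|f(x')^T(f(x^+)-f(x^-))|\le\|f(x')\|_\infty\,\|f(x^+)-f(x^-)\|_1\le B_{X,\epsilon}^{1,\infty}(B_{X^+}'+B_{X^-}')=K_0/2$, so $D\le 2\sqrt M\cdot K_0/2=\sqrt M\,K_0$.

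For step (ii), the crucial estimate is that $g_f$ is Lipschitz in $(W_1,\dots,W_d)$ once the attack is absorbed via $|\min_{x'}a(x')-\min_{x'}b(x')|\le\sup_{x'}|a(x')-b(x')|$. Telescoping over layers and using the bilinear split (add and subtract $\widetilde f(x')^T(f(x^+)-f(x^-))$), I would bound the two resulting terms by the $\ell_\infty/\ell_1$ and the $\ell_1/\ell_\infty$ Hölder pairings respectively; propagating a layer-$l$ weight perturbation of $\ell_{1,\infty}$-size $s_l$ through the network in both $\ell_\infty$ and $\ell_1$, with $M_l^{1,\infty}$ cancelling out of the products, yields $|g_f(z)-g_{\widetilde f}(z)|\le\sum_{l=1}^d \frac{s_l}{M_l^{1,\infty}}\bigl(B_{X,\epsilon}^{1,\infty}(B_{X^+}'+B_{X^-}')+B_{X,\epsilon}'(B_{X^+}^{1,\infty}+B_{X^-}^{1,\infty})\bigr)=\sum_{l=1}^d \frac{s_l K_1}{M_l^{1,\infty}}$ for all $z$ — this is where $K_1$ (and, through the diameter, $K_0$) originates. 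Choosing $s_l=\dfrac{u\,M_l^{1,\infty}}{d\sqrt M\,K_1}$ makes $\rho_X(g_f,g_{\widetilde f})\le u$; since the $\ell_{1,\infty}$-ball of radius $M_l^{1,\infty}$ in $\mathbb{R}^{h_l\times h_{l-1}}$ is a product of $h_l$ copies of the $\ell_1$-ball of radius $M_l^{1,\infty}$ in $\mathbb{R}^{h_{l-1}}$, Lemma~\ref{lma-unit-ball-covering-number-bound} gives $\ln\mathcal{N}_X(u;\mathcal{H})\le\bigl(\sum_{l=1}^d h_l h_{l-1}\bigr)\ln\bigl(1+\tfrac{2d\sqrt M K_1}{u}\bigr)\le\bigl(\sum_{l=1}^d h_l h_{l-1}\bigr)\tfrac{2d\sqrt M K_1}{u}$.

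Finally I would feed this into the entropy integral: $\int_0^D\sqrt{\bigl(\sum_{l=1}^d h_l h_{l-1}\bigr)\tfrac{2d\sqrt M K_1}{u}}\,du=2\sqrt{2d\sqrt M\,K_1\sum_{l=1}^d h_l h_{l-1}}\cdot\sqrt D$, and inserting $D\le\sqrt M\,K_0$ gives $\mathcal{R}_\mathcal{S}(\mathcal{H})=O\bigl(\sqrt{\sum_{l=1}^d h_l h_{l-1}}\,\sqrt{d\,K_0 K_1}\,\sqrt M\bigr)$, as claimed. I expect step (ii) to be the main obstacle: because $\ell_{1,\infty}$ is not the operator norm that naturally governs signal propagation, one must track $\ell_\infty$- and $\ell_1$-estimates simultaneously at every layer, and the adversarial $\min$ has to be made uniform over $\mathcal{U}(x)$ before the telescoping and Hölder bookkeeping can even start; verifying that the two Hölder pairings combine into exactly the constant $K_1$ (with $K_0$ entering only through the diameter) and that the dimensional factors $h_l$ and $m^{1-1/p}$ land where the statement asserts is the delicate part.
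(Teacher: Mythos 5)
Your proposal is correct and follows essentially the same route as the paper's proof: Dudley's entropy integral for the Rademacher process on $\mathcal{H}(\mathcal{S})$, the diameter bound $D\le\sqrt{M}K_0$ via the $\ell_\infty/\ell_1$ Hölder pairing of the activations, a layerwise covering of the weight matrices with the telescoping/bilinear-split argument producing exactly the constant $K_1$, and the final integration. The only cosmetic difference is that the paper handles the adversarial $\min$ by explicitly selecting the point $y_i$ realizing the larger of the two infima rather than invoking $|\inf a-\inf b|\le\sup|a-b|$ abstractly, which is the same estimate.
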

The proof can be found in the Appendix \ref{apd-prf-thm-nn-Rs(H)-bound-1,infnorm}.

\begin{remark}
Combining Theorem \ref{thm-nn-Rs(H)-bound-1,infnorm}  with \eqref{eq-AG}, we have:
\begin{equation*}
    \setlength{\belowdisplayskip}{1pt}
    AG_M = O\left( \eta \sqrt{\frac{d K_0 K_1 \sum_{l=1}^d h_l h_{l-1}}{M}} + B \sqrt{\frac{log \frac{1}{\delta}}{M}} \right).
    \setlength{\belowdisplayskip}{1pt}
\end{equation*}
\end{remark}

\begin{remark} \label{rmk-of-theory-part}
Our bound has important implications for the design of regularizers for adversarial contrastive learning. To achieve superior robust performance on the downstream tasks, the usual approach is to make $\normvec{X}_{p, \infty}$ small. For example, Pytorch scales the images to tensors with entries within the range $[0,1]$. Moreover, Theorem \ref{thm-nn-Rs(H)-bound-Fnorm} shows that we can take the norms of the layers as the regularizers to reduce the adversarial supervised risk.
\end{remark}

In our analysis for the Rademacher complexity, we consider models with norm-constrained weights, which means that the hypothesis class is uniformly Lipschitz, although the Lipschitz constant may be large (the product of maximal weight norms for the layers). One may wonder what will happen if we remove the constrains on the norm of the weights. For simplicity, let's consider a hypothesis class $\mathcal{H} \subseteq \{\pm 1\}^\mathcal{X}$ for binary classification, we have:
\begin{equation*}
    \mathcal{R}_\mathcal{S}(\mathcal{H}) = \underset{\pmb{\sigma}}{\mathbb{E}} \left[ \underset{h \in \mathcal{H}}{\sup} \frac{1}{n} \sum_{i=1}^n \sigma_i h(x_i) \right],
\end{equation*}
where $\sigma_1, \dots, \sigma_n \in \{ \pm 1 \}$ are i.i.d. uniform random variables. We can regard $\sigma_1, \dots, \sigma_n$ as random labels that we need to fit by hypothesis from $\mathcal{H}$, so we can interpret $\mathcal{R}_\mathcal{S}(\mathcal{H})$ as the ability of $\mathcal{H}$ to fit random $\pm 1$ binary labels. Now let $\mathcal{H}$ be the neural network, if we do not constrain the norm of the weights, theoretically, the universal approximation theorem \citep{DBLP:journals/ijon/MaiorovP99} tells us that neural networks can fit any continuous function on a bounded input space, which means that in this case $\mathcal{R}_\mathcal{S}(\mathcal{H}) \approx 1$, leading to vacuous bounds in the binary classification case; experimentally, \citet{DBLP:conf/iclr/ZhangBHRV17} show that deep neural networks easily fit random labels.

From another perspective, to derivate an upper bound for $\mathcal{R}_\mathcal{S}(\mathcal{H})$ by covering number, we need to find a $\delta$-covering set for $\mathcal{H}$ under some metric $\rho(\cdot, \cdot)$. If the weights of the layers are not bounded, we can not cover $\mathcal{H}$ by a finite subset of $\mathcal{H}$, the $\delta$-covering number of $\mathcal{H}$ under $\rho(\cdot, \cdot)$ will be infinite, which means that Lemma \ref{lma-dudley-int} does not hold. So it is difficult to go beyond the Lipschitz network.

\section{Experiments} \label{sec-exp}
\begin{table}[t]
\centering
\setlength{\tabcolsep}{4mm}{
\begin{tabular}{ccccccc}
\hline \hline
\multirow{2}{*}{\textsc{Attack}} & \multirow{2}{*}{$\epsilon$} & \multirow{2}{*}{\textsc{Type}} & \multicolumn{4}{c}{$\lambda$}         \\ \cline{4-7}
                        &                          &                       & 0 & 0.002 & 0.05 & 0.2  \\ \hline
\multirow{4}{*}{PGD}    & \multirow{2}{*}{0.01}       & \textsc{Clean}                 & $\bm{75.73}$   & $74.8$  & $74.5$  & $75.67$  \\ \cline{3-7}
                        &                          & \textsc{Adv}                   & $67.67$  & $\bm{69.25}$ & $68.59$  & $67.65$ \\ \cline{2-7}
                        & \multirow{2}{*}{0.02}       & \textsc{Clean}                 & $53.11$  & $55.72$ & $\bm{55.73}$  & $55.72$ \\ \cline{3-7}
                        &                          & \textsc{Adv}                    & $46.71$  &$\bm{48.17}$ & $\bm{48.17}$ & $48.16$ \\ \hline
\multirow{4}{*}{FGSM}   & \multirow{2}{*}{0.01}       & \textsc{Clean}                 & $74.42$  & $\bm{76.13}$ & $76.12$ & $76.11$ \\ \cline{3-7}
                        &                          & \textsc{Adv}                    & $67.28$  & $68.79$ & $\bm{68.8}$ & $\bm{68.8}$ \\ \cline{2-7}
                        & \multirow{2}{*}{0.02}       & \textsc{Clean}                 & $54.28$  & $54.29$ & $54.28$ & $\bm{66.68}$ \\ \cline{3-7}
                        &                          & \textsc{Adv}                    & $45.93$ & $45.94$ & $45.92$ & $\bm{55.64}$ \\ \hline \hline
\end{tabular}}
\caption{Results of experiments on the regularizer. In this table, we list the clean accuracy (\textsc{Clean}) and adversarial accuracy (\textsc{Adv}) of the mean classifier under the PGD and FGSM attack with $\epsilon=0.01$ and $\epsilon=0.02$. $\lambda$ is chosen from $\{0, 0.002, 0.05, 0.2\}$, and $\lambda=0$ indicates no regularizer.} \label{table-regu}
\end{table}

\begin{figure}
    \centering    
    \subfigure[Influence on clean accuracy] {
        \label{fig:a}     
        \includegraphics[scale=0.4]{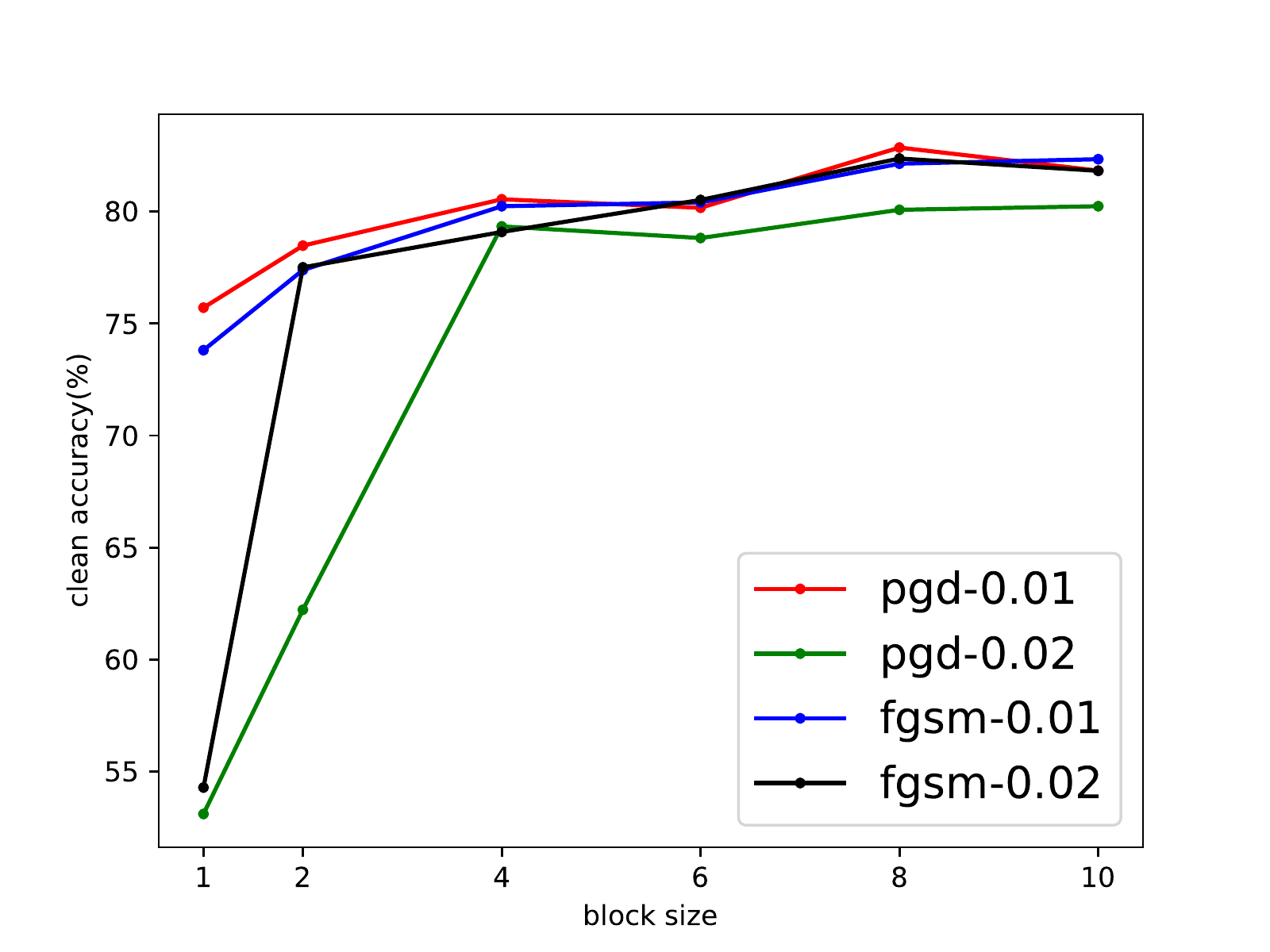}  
    }
    \subfigure[Influence on adversarial accuracy] { 
        \label{fig:b} 
        \includegraphics[scale=0.4]{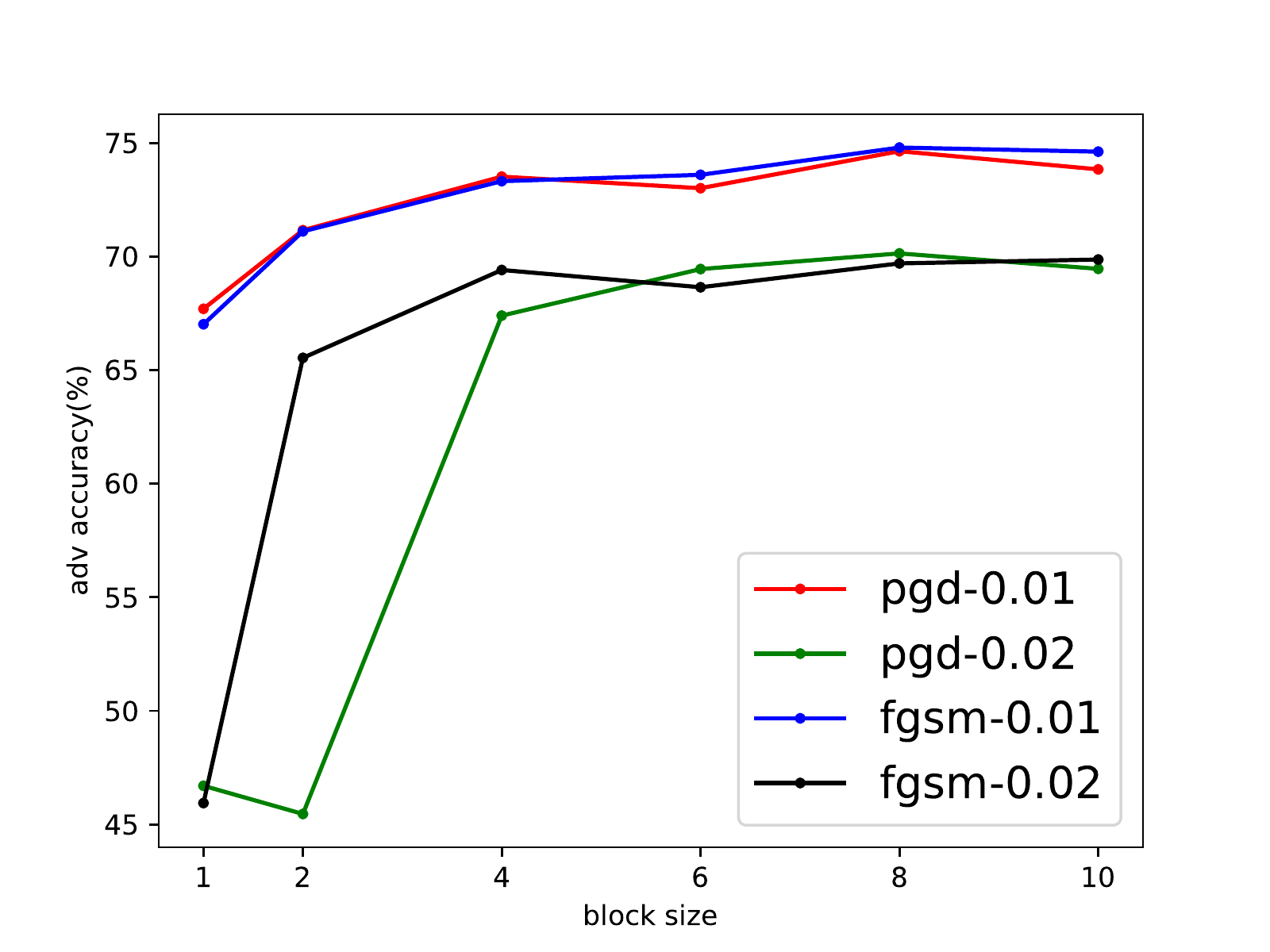}     
    }
    \caption{The effect of block size on the accuracy. In the figure, we show the clean accuracy and the adversarial accuracy of the mean classifier under PGD and FGSM attack with $\epsilon=0.01$ and $\epsilon=0.02$. The block size is choosen from $\{1, 2, 4, 6, 8, 10\}$. \textbf{(a) The influence on the clean accuracy;} \textbf{(b) The influence on the adversarial accuracy.}} \label{fig:block-exp}
\end{figure}

In this section, we conduct several experiments to \textbf{support our theory}. We emphasize that we are not proposing a method to try to get better robustness on the downstream tasks, we do the experiments to verify our two claims from our theoretical results: (1) As shown in Remark \ref{rmk-of-thm-adv-sup-leq-adv-sun-block}, using the blocks may improve the robust performance; (2) As shown in Remark \ref{rmk-of-theory-part}, using the norms of the layers of the neural networks as the regularizer may help improve the robust performance.

\textbf{Data sets.} We use two data sets \citep{krizhevsky2009cifar} in our experiments: (1) the CIFAR-10 data set and (2) the CIFAR-100 data set. CIFAR-10 contains 50000/10000 train/test images with size $32 \times 32$, which are categorized into 10 classes. CIFAR-100 contains 50000/10000 train/test images with size $32 \times 32$, which are categorized into 100 classes.

\textbf{Model.} We use a neural network with two convolutional layers and one fully connected layer. Following \citet{DBLP:conf/cvpr/He0WXG20-MoCo}, we use the Stochastic Gradient Descent (SGD) optimizer with momentum $0.9$ but set the weight decay to be $5\times 10^{-4}$ and the learning rate to be $0.001$.

\textbf{Evaluation of robustness.} For representation $f$, we first calculate $\widehat{u}_c = \frac{1}{n_c} \sum_{i=1}^{n_c} f(x_i)$ to estimate the $c$-th row of the mean classifier, where $x_1, \dots, x_{n_c}$ are the data points with label $c$ in our training set. Denote $\widehat{W}^\mu$ as the estimator of $W$, we use the robustness of the classifier $\widehat{W}^\mu f$ as an evaluation of the robustness of $f$ on the downstream task.

We show the results for \textbf{CIFAR-10} here; the results for CIFAR-100 can be found in the Appendix \ref{apd-exp-results}.

\subsection{Improvement from the regularizer} \label{subsec-exp-regu}
Inspired by our bound \eqref{eq-nn-AG} in Theorem \ref{thm-nn-Rs(H)-bound-Fnorm} and Theorem \ref{thm-2-adv-Lsup-mu-leq-adv-Lsun}, the adversarial supervised risk can be upper bounded by the sum of the adversarial unsupervised loss and $AG_M$, which is related to the maximal Frobenius-norm of the network layers. We choose to simultaneously optimize the contrastive upstream pre-train risk and the Frobenius norm of the parameters of the model. We set the norm of the parameters for the layers as a regularizer and test the performance of the mean classifier; here, $W^\mu$ is calculated by averaging all features of the training data set as done in \citet{DBLP:conf/uai/NozawaGG20}. We use a hyper-parameter $\lambda$ to balance the trade-off of the the contrastive upstream pre-train risk and the Frobenius norm of the parameters of the model. We choose to minimize the following regularized empirical risk:
\begin{equation} \label{obj:experiments-optimization}
    L(f) = \widehat{\widetilde{L}}_{sun}(f) + \lambda N(f)
\end{equation}
where $N(f)$ is a regularizer that constrains the Frobenius norm of the parameters of the model $f$, here we choose $N(f) = \sum_{l=1}^d \normmm{W_l}_F$ where $\normmm{W_l}_F$ is the Frobenius norm of the parameters for the $l$-th layer of $f$ and:
\begin{equation*}
    \widehat{\widetilde{L}}_{sun}(f) \!\!=\!\! \frac{1}{M}\sum_{j=1}^M \! \underset{x_j^\prime \in \mathcal{U}(x_j)}{\max} \ell(\!\{f(x_j^\prime)^T\!(f(x_j^+)\!-\!f(x_{ji}^-))\}_{i=1}^k).
\end{equation*}
More details about our algorithm are in Algorithm \ref{alg:AERM-alg}.

\begin{algorithm} \label{alg:AERM-alg}
\caption{The AERM algorithm for adversarial contrastive learning}\label{algorithm}
\SetKwInOut{Input}{Input}\SetKwInOut{Output}{Output}
\Input{The training data set $\mathcal{S} = \left\{(x_j,x_j^+,x_{j1}^-, \dots, x_{jk}^-)\right\}_{j=1}^M$ sampled from $\mathcal{D}_{sim} \times \mathcal{D}_{neg}^k$; the hyper-parameter $\lambda$ in \eqref{obj:experiments-optimization}; the adversarial perturbation $\mathcal{U}$; learning rate $\alpha$; the total iteration number $T$;}
initialize $\theta^0$ to be randomized parameters\;
$t \leftarrow 0$\;
\While{$t < T$}{
    randomly sample a batch of data with size $N$: $\mathcal{B} \subseteq \mathcal{S}$ \;
    $\widetilde{\mathcal{B}} \leftarrow \emptyset$\;
    \For{$(x,x^+,x_{1}^-, \dots, x_{k}^-)$ in $\mathcal{B}$}{
        calculate adversarial example $\tilde{x} \in \underset{x^\prime \in \mathcal{U}(x)}{\arg\sup} \ \ell\left( \left\{ f_{\theta^t}(x^\prime)^T\left( f_{\theta^t}(x^+) - f_{\theta^t}(x_i^-) \right) \right\}_{i=1}^k \right)$\;
        $\widetilde{\mathcal{B}} \leftarrow \widetilde{\mathcal{B}} \cup \{(\tilde{x},x^+,x_{1}^-, \dots, x_{k}^-)\}$\;
    }
    $L(f_{\theta^t}) = \frac{1}{N} \sum_{(\tilde{x},x^+,x_{1}^-, \dots, x_{k}^-) \in \widetilde{\mathcal{B}}} \ell\left( \left\{ f_{\theta^t}(\tilde{x})^T\left( f_{\theta^t}(x^+) - f_{\theta^t}(x_i^-) \right) \right\}_{i=1}^k \right) + \lambda N(f_{\theta^t})$\;
    $\theta^{t+1} \leftarrow \theta^t - \alpha \nabla_{\theta^t} L(f_{\theta^t})$\;
    $t\leftarrow t+1$\;
}
\Output{The feature extractor $f_{\theta^{T}} \in \mathcal{F}$ that tries to minimize \eqref{obj:experiments-optimization};}
\end{algorithm}

The results are shown in Table \ref{table-regu}. From Table \ref{table-regu}, we can see that the $F$-norm regularizer can improve the adversarial accuracy (the prediction performance of a model on adversarial examples generated by attacker) of the mean classifier, which is in line with our Theorem \ref{thm-nn-Rs(H)-bound-Fnorm}.

\subsection{Effect of block size} \label{subsec-exp-block}
To verify Theorem \ref{thm-adv-sup-leq-adv-sun-block}, we analyze the effect of block size on the adversarial accuracy of the mean classifier. Figure~\ref{fig:block-exp} presents the results for clean accuracy and adversarial accuracy, respectively. From Figure~\ref{fig:block-exp}, we can see that a larger block size will yield better adversarial accuracy. The results are consistent with Theorem~\ref{thm-adv-sup-leq-adv-sun-block}: as the block size grows, Theorem~\ref{thm-adv-sup-leq-adv-sun-block} shows that we are optimizing a tighter bound, which leads to better performance as shown in Figure~\ref{fig:block-exp}.

\section{Conclusion} \label{sec-conclusion}
This paper studies the generalization performance of adversarial contrastive learning. We first extend the contrastive learning framework to the adversarial case, then we upper bound the average adversarial risk of the downstream tasks with the adversarial unsupervised risk of the upstream task and an adversarial Rademacher complexity term. Furthermore, we provide the upper bound of the adversarial Rademacher complexity for linear models and multi-layer neural networks. Finally, we conduct several experiments and the experimental results are consistent with our theory.


\acks{This work is supported by the National Natural Science Foundation of China under Grant 61976161.}


\newpage

\appendix
\setcounter{table}{0}
\setcounter{figure}{0}
\setcounter{equation}{0}

\renewcommand{\thefigure}{\thesection.\arabic{figure}}
\renewcommand{\thetable}{\thesection.\arabic{table}}
\renewcommand{\theequation}{\thesection.\arabic{equation}}

\section{Proofs} \label{sec::apd}
In this section, we display the proofs of our theorems, lemmas and corollaries. For reading convenience, we will restate the theorem before proving.

\subsection{Proof of Theorem \ref{thm-2-adv-Lsup-mu-leq-adv-Lsun}} \label{apd-prf-thm-2-adv-Lsup-mu-leq-adv-Lsun}
In the below, we present some useful lemmas that will be used in the proofs of our main theorems.
\begin{lemma} \label{lma-adv-Lsup-leq-adv-Lun}
For any $f \in \mathcal{F}$, we have:
\begin{equation} \label{eq-adv-Lsup-leq-adv-Lun}
    \begin{aligned}
    	\widetilde{L}_{sup}(f) & \leq \widetilde{L}_{sup}^{\mu}(f) \leq \frac{1}{1-\tau} (\widetilde{L}_{un}(f)-\tau \ell (0)) \leq \frac{1}{1-\tau} (\widetilde{L}_{sun}(f)-\tau \ell (0)).
    \end{aligned}
\end{equation}
\end{lemma}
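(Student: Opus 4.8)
The plan is to prove the four‑term chain from left to right, treating the two outermost inequalities as essentially immediate and concentrating on the central one, which is the adversarial analogue of the bound of \citet{DBLP:conf/icml/SaunshiPAKK19}.

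For the first inequality I would simply observe that for every fixed task $\mathcal{T}$ the matrix $W^{\mu}$ is one admissible linear classifier, so $\widetilde{L}_{sup}(\mathcal{T},f)=\inf_{W}\widetilde{L}_{sup}(\mathcal{T},Wf)\le\widetilde{L}_{sup}(\mathcal{T},W^{\mu}f)=\widetilde{L}_{sup}^{\mu}(\mathcal{T},f)$; taking $\mathbb{E}$ over $\{c_i\}\sim\rho^{k+1}$ conditioned on distinctness preserves this. The last inequality is equivalent (since $1-\tau>0$) to $\widetilde{L}_{un}(f)\le\widetilde{L}_{sun}(f)$, which is already recorded immediately after Definition~\ref{def-u-adv-unsup-loss} as a consequence of the subadditivity of the supremum in \eqref{eq-attack-obj-leq-sur-obj}.

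The substance is the middle inequality $\widetilde{L}_{sup}^{\mu}(f)\le\frac{1}{1-\tau}\big(\widetilde{L}_{un}(f)-\tau\ell(0)\big)$. Starting from \eqref{eq-U-adv-Lun} with $k=1$ (so $\ell$ is applied to the scalar $f(x')^{T}(f(x^{+})-f(x^{-}))$), I would first use convexity of $\ell$ together with Jensen's inequality pointwise in $x'$: for each anchor $x$ and each $x'\in\mathcal{U}(x)$, $\mathbb{E}_{x^{+}\sim\mathcal{D}_{c^{+}},\,x^{-}\sim\mathcal{D}_{c^{-}}}\big[\ell(f(x')^{T}(f(x^{+})-f(x^{-})))\big]\ge\ell\big(f(x')^{T}(\mu_{c^{+}}-\mu_{c^{-}})\big)$. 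Because $\sup_{x'\in\mathcal{U}(x)}$ and the outer expectations over $c^{+},c^{-}\sim\rho^{2}$ and $x\sim\mathcal{D}_{c^{+}}$ are all monotone, this inequality survives them, giving $\widetilde{L}_{un}(f)\ge\mathbb{E}_{c^{+},c^{-}\sim\rho^{2}}\mathbb{E}_{x\sim\mathcal{D}_{c^{+}}}\big[\sup_{x'\in\mathcal{U}(x)}\ell(f(x')^{T}(\mu_{c^{+}}-\mu_{c^{-}}))\big]$. Next I would split the outer expectation over the event $c^{+}=c^{-}$: on that event (probability $\tau$) $\mu_{c^{+}}=\mu_{c^{-}}$ and the bracket collapses to the constant $\ell(0)$; on its complement (probability $1-\tau$) the inner quantity is exactly the per‑label mean‑classifier adversarial loss for label $c^{+}$ on the two‑class task $\{c^{+},c^{-}\}$. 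Averaging over the \emph{ordered} pair $(c^{+},c^{-})$ conditioned on $c^{+}\ne c^{-}$ and exploiting the symmetry under the swap $c^{+}\leftrightarrow c^{-}$ (the two $\tfrac12$ weights in the uniform task label distribution $\mathcal{D}_{\mathcal{T}}$ being absorbed by the two‑to‑one multiplicity of ordered over unordered pairs) reconstructs $\widetilde{L}_{sup}^{\mu}(f)$. Combining the two parts gives $\widetilde{L}_{un}(f)\ge\tau\ell(0)+(1-\tau)\widetilde{L}_{sup}^{\mu}(f)$, and rearranging completes the proof.

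The main obstacle I anticipate is bookkeeping rather than anything conceptual: one has to be careful that the adversarial supremum $\sup_{x'\in\mathcal{U}(x)}$ stays in the right place, in particular that Jensen is applied \emph{inside} the supremum (valid since $\ell$ is convex and the supremum is monotone) and that the $c^{+}=c^{-}$ decomposition is carried out after, not before, the supremum is taken. Once the $\sup$ is handled correctly, the decomposition and the ordered/unordered symmetrization are exactly those in the non‑adversarial proof of \citet{DBLP:conf/icml/SaunshiPAKK19}.
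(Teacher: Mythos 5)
Your proposal is correct and follows essentially the same route as the paper's proof: Jensen's inequality applied inside the adversarial supremum (using convexity of $\ell$ and monotonicity of $\sup$), then the conditional split on $c^+=c^-$ contributing $\tau\ell(0)$, and the symmetrization over the ordered pair $(c^+,c^-)$ to identify the remaining term with $\widetilde{L}_{sup}^{\mu}(f)$, with the outer inequalities handled exactly as you describe. No gaps.
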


\begin{remark}
$\widehat{\widetilde{L}}_{sun}(f)$ denotes the empirical $\widetilde{L}_{sun}(f)$  and $ \widehat{f} \in \underset{f \in \mathcal{F}}{\arg\min}\ \widehat{\widetilde{L}}_{sun}(f)$. Applying Lemma \ref{lma-adv-Lsup-leq-adv-Lun} to $\widehat{f}$ shows that, if we can train a robust feature extractor with low surrogate adversarial unsupervised risk, we can obtain a robust classifier with low adversarial supervised risk on the downstream task.
\end{remark}
\begin{lemma} \label{lma-k-adv-Lun(fhat)-leq-adv-Lun(f)}
Let $\ell: \mathbb{R}^k \xrightarrow[]{} \mathbb{R}$ be bounded by $B$. Then, for any $\delta \in (0,1)$, with a probability of at least $1 - \delta$ over the choice of the training set $\mathcal{S}=\{(x_j,x_j^+,x_{j1}^-,\dots,x_{jk}^-)\}_{j=1}^M = \{z_j\}_{j=1}^M$, for any $f \in \mathcal{F}$:
\begin{equation*}
    \widetilde{L}_{sun}(\widehat{f}) \leq \widetilde{L}_{sun}(f) + AG_M,
\end{equation*}
where $AG_M = O(\frac{\mathcal{R}_\mathcal{S}(\mathcal{G})}{M} + B \sqrt{\frac{log \frac{1}{\delta}}{M}}),\ \  \mathcal{R}_\mathcal{S}(\mathcal{G}) \coloneqq \underset{\bm{\sigma} \sim \{\pm 1\}^M}{\mathbb{E}} \left[\underset{f \in \mathcal{F}}{\sup} \left< \bm{\sigma},(g_f)_{|\mathcal{S}}\right>\right],$
and $\mathcal{G} \coloneqq \{ g_f(x,x^+,x_1^-,\dots,x_k^-) = \underset{x^\prime \in \mathcal{U}(x)}{\sup} \ell\left(\{f(x^\prime)^T(f(x^+)-f(x_i^-))\}_{i=1}^k\right) | f \in \mathcal{F}\},$
where $\bm{\sigma}$ is an $M$-dimensional Rademacher random vector with i.i.d. entries and $(g_f)_{|\mathcal{S}} = \left(g_f(z_1), \dots, g_f(z_M)\right)$.
\end{lemma}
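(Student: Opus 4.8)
The plan is to recognize $\widehat{\widetilde{L}}_{sun}(f)=\frac{1}{M}\sum_{j=1}^{M} g_f(z_j)$ as the empirical average of the bounded functions $g_f\in\mathcal{G}$ over the i.i.d.\ sample $\mathcal{S}=\{z_j\}_{j=1}^M$, whose population mean is exactly $\widetilde{L}_{sun}(f)$ (by Definition \ref{def-u-adv-unsup-loss}), and then to run the standard uniform-convergence argument: bounded differences plus symmetrization. First I would reduce to a two-sided uniform deviation. Setting $\Phi_1(\mathcal{S})\coloneqq\sup_{g\in\mathcal{F}}\big(\widetilde{L}_{sun}(g)-\widehat{\widetilde{L}}_{sun}(g)\big)$ and $\Phi_2(\mathcal{S})\coloneqq\sup_{g\in\mathcal{F}}\big(\widehat{\widetilde{L}}_{sun}(g)-\widetilde{L}_{sun}(g)\big)$, the optimality $\widehat{\widetilde{L}}_{sun}(\widehat{f})\le\widehat{\widetilde{L}}_{sun}(f)$ for every $f\in\mathcal{F}$ gives
\[
\widetilde{L}_{sun}(\widehat{f})\le\widehat{\widetilde{L}}_{sun}(\widehat{f})+\Phi_1(\mathcal{S})\le\widehat{\widetilde{L}}_{sun}(f)+\Phi_1(\mathcal{S})\le\widetilde{L}_{sun}(f)+\Phi_1(\mathcal{S})+\Phi_2(\mathcal{S}),
\]
so it suffices to control $\Phi_1(\mathcal{S})+\Phi_2(\mathcal{S})$ with high probability, simultaneously for all $f$.

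For the concentration step, since $\ell$ and hence every $g_f$ is bounded by $B$, replacing a single coordinate $z_j$ of $\mathcal{S}$ changes $\widehat{\widetilde{L}}_{sun}(g)$ by at most $B/M$ uniformly in $g$, and therefore changes $\Phi_1$ (and $\Phi_2$) by at most $B/M$. McDiarmid's inequality then yields, with probability at least $1-\delta/2$, $\Phi_1(\mathcal{S})\le\mathbb{E}_{\mathcal{S}}[\Phi_1(\mathcal{S})]+B\sqrt{\log(2/\delta)/(2M)}$, and the same bound for $\Phi_2$ on an independent event of probability $1-\delta/2$.

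For the symmetrization step, the standard ghost-sample argument gives $\mathbb{E}_{\mathcal{S}}[\Phi_1(\mathcal{S})]\le 2\,\mathbb{E}_{\mathcal{S}}\mathbb{E}_{\bm{\sigma}}\big[\sup_{f\in\mathcal{F}}\frac{1}{M}\sum_{j=1}^{M}\sigma_j g_f(z_j)\big]=\frac{2}{M}\mathbb{E}_{\mathcal{S}}[\mathcal{R}_\mathcal{S}(\mathcal{G})]$, and likewise for $\Phi_2$. Combining this with the previous two displays and a union bound over the two failure events, I obtain, with probability at least $1-\delta$ and for every $f\in\mathcal{F}$,
\[
\widetilde{L}_{sun}(\widehat{f})\le\widetilde{L}_{sun}(f)+\frac{4}{M}\mathbb{E}_{\mathcal{S}}[\mathcal{R}_\mathcal{S}(\mathcal{G})]+2B\sqrt{\frac{\log(2/\delta)}{2M}}=\widetilde{L}_{sun}(f)+AG_M,
\]
which is the claimed inequality with $AG_M=O\big(\mathcal{R}_\mathcal{S}(\mathcal{G})/M+B\sqrt{\log(1/\delta)/M}\big)$; the expectation over $\mathcal{S}$ is either absorbed into the $O(\cdot)$ notation or removed by one further bounded-differences argument applied to $\mathcal{R}_\mathcal{S}(\mathcal{G})$ itself.

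I do not expect a genuine obstacle here: this is a routine Rademacher-complexity generalization bound. The points that need care are (i) verifying that $\widehat{\widetilde{L}}_{sun}$ is an unbiased estimator of $\widetilde{L}_{sun}$ under the sampling model of Definition \ref{def-u-adv-unsup-loss}, so that the symmetrization and McDiarmid steps apply to $g_f\in\mathcal{G}$; (ii) performing the \emph{two-sided} control (both $\Phi_1$ and $\Phi_2$), which is needed precisely because the conclusion is stated "for any $f\in\mathcal{F}$" rather than for a single fixed comparator; and (iii) reconciling the (unnormalized) definition $\mathcal{R}_\mathcal{S}(\mathcal{G})=\mathbb{E}_{\bm{\sigma}}[\sup_{f\in\mathcal{F}}\langle\bm{\sigma},(g_f)_{|\mathcal{S}}\rangle]$ with the $1/M$ factors produced by symmetrization, which is exactly why the final bound reads $O(\mathcal{R}_\mathcal{S}(\mathcal{G})/M)$.
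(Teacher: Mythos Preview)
Your proposal is correct and follows the standard uniform-convergence template; the paper's proof is the same in spirit but organizes the decomposition slightly differently. Rather than controlling both $\Phi_1$ and $\Phi_2$ uniformly over $\mathcal{F}$, the paper applies a one-sided uniform bound (Theorem~3.3 of Mohri et~al., which already outputs the \emph{empirical} Rademacher complexity $\mathcal{R}_\mathcal{S}(\mathcal{G})$, so your point~(iii) is handled automatically) to get $\widetilde{L}_{sun}(\widehat{f})\le\widehat{\widetilde{L}}_{sun}(\widehat{f})+O(\mathcal{R}_\mathcal{S}(\mathcal{G})/M+B\sqrt{\log(1/\delta)/M})$, then uses $\widehat{\widetilde{L}}_{sun}(\widehat{f})\le\widehat{\widetilde{L}}_{sun}(f^*)$ for $f^*\in\arg\min_{f}\widetilde{L}_{sun}(f)$, and finally a \emph{pointwise} Hoeffding bound on the single fixed function $f^*$ to pass from $\widehat{\widetilde{L}}_{sun}(f^*)$ back to $\widetilde{L}_{sun}(f^*)\le\widetilde{L}_{sun}(f)$. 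So your remark~(ii) that two-sided uniform control is \emph{needed} because the conclusion is ``for any $f\in\mathcal{F}$'' is not quite right: routing through $f^*$ and using the trivial inequality $\widetilde{L}_{sun}(f^*)\le\widetilde{L}_{sun}(f)$ lets the paper get away with a single uniform direction plus a pointwise Hoeffding step, which buys slightly better constants. Either route gives the stated $O(\cdot)$ bound.
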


\begin{proof}[Proof of Lemma \ref{lma-adv-Lsup-leq-adv-Lun}]
By the definition of $\widetilde{L}_{sup}(\mathcal{T},f)$, i.e., \eqref{eq-adv-Lsup-T}, it's obvious that $\widetilde{L}_{sup}(f) \leq \widetilde{L}_{sup}^{\mu}(f)$, so we only need to prove the second part of \eqref{eq-adv-Lsup-leq-adv-Lun}.
From definition \eqref{eq-U-adv-Lun}, we have, $\forall{f \in \mathcal{F}}$:
\vskip -0.2in
\begin{equation} \label{ieq-A1-1}
    \begin{aligned}
        \widetilde{L}_{un}(f) &= \underset{c^+,c^- \sim \rho^2}{\mathbb{E}} \left\{ \underset{x \sim \mathcal{D}_{c^+}}{\mathbb{E}} \left[ \underset{x^\prime \in \mathcal{U}(x)}{\sup} \left( \underset{x^- \sim \mathcal{D}_{c^-}}{\underset{x^+ \sim \mathcal{D}_{c^+}}{\mathbb{E}}} \left( \ell \left( f(x^\prime)^T \left( f(x^+) - f(x^-) \right) \right) \right) \right) \right] \right\}   \\[1mm]
        &\overset{\sroman{1}}{\ge} \underset{c^+,c^- \sim \rho^2}{\mathbb{E}} \left\{ \underset{x \sim \mathcal{D}_{c^+}}{\mathbb{E}} \left[ \underset{x^\prime \in \mathcal{U}(x)}{\sup} \ell \left( f(x^\prime)^T(\mu_{c^+} - \mu_{c^-}) \right) \right] \right\} \\[1mm]
        &\overset{\sroman{2}}{=} (1-\tau) \underset{c^+,c^- \sim \rho^2}{\mathbb{E}} \left\{ \underset{x \sim \mathcal{D}_{c^+}}{\mathbb{E}} \left[ \underset{x^\prime \in \mathcal{U}(x)}{\sup} \ell \left( f(x^\prime)^T(\mu_{c^+} - \mu_{c^-}) \right) \right] \Bigg| c^+ \neq c^- \right\} + \tau \ell(0),
    \end{aligned}
\end{equation}
where $\sroman{1}$ comes from the convexity of $\ell$ and Jensen's Inequality, and $\sroman{2}$ comes from the property of conditional expectations. Then we have:
\vskip -0.2in
\begin{equation} \label{ieq-A1-add2}
    \begin{aligned}
        & \underset{c^+,c^- \sim \rho^2}{\mathbb{E}} \left\{ \underset{x \sim \mathcal{D}_{c^+}}{\mathbb{E}} \left[ \underset{x^\prime \in \mathcal{U}(x)}{\sup} \ell \left( f(x^\prime)^T(\mu_{c^+} - \mu_{c^-}) \right) \right] \Bigg| c^+ \neq c^- \right\} \\[1mm]
        & \overset{\sroman{1}}{=} \underset{c^+,c^- \sim \rho^2}{\mathbb{E}} \Bigg\{ D_\mathcal{T}(c^+) \underset{x \sim \mathcal{D}_{c^+}}{\mathbb{E}} \left[ \underset{x^\prime \in \mathcal{U}(x)}{\sup} \ell \left( f(x^\prime)^T(\mu_{c^+} - \mu_{c^-}) \right) \right] \\[1mm]
        &\ \ \ \ \ \ \ \ \ \ \ \ + D_\mathcal{T}(c^-) \underset{x \sim \mathcal{D}_{c^-}}{\mathbb{E}} \left[ \underset{x^\prime \in \mathcal{U}(x)}{\sup} \ell \left( f(x^\prime)^T(\mu_{c^-} - \mu_{c^+}) \right) \right] \Bigg| c^+ \neq c^- \Bigg\} \\[1mm]
        & \overset{\sroman{2}}{=} \underset{c^+,c^- \sim \rho^2}{\mathbb{E}} \Bigg\{ D_\mathcal{T}(c^+) \underset{x \sim \mathcal{D}_{c^+}}{\mathbb{E}} \left[ \underset{x^\prime \in \mathcal{U}(x)}{\sup} \ell \left( g(x^\prime)_{c^+} - g(x^\prime)_{c^-} \right) \right] \\[1mm]
        &\ \ \ \ \ \ \ \ \ \ \ \ + D_\mathcal{T}(c^-) \underset{x \sim \mathcal{D}_{c^-}}{\mathbb{E}} \left[ \underset{x^\prime \in \mathcal{U}(x)}{\sup} \ell \left( g(x^\prime)_{c^-} - g(x^\prime)_{c^+} \right) \right] \Bigg| c^+ \neq c^- \Bigg\},
    \end{aligned}
\end{equation}
where $\mathcal{T} = \{c^+, c^-\}$, $g(x) = \begingroup
\renewcommand*{\arraystretch}{1.5}
\begin{bmatrix} \mu_{c^+}^T f(x) \\ \mu_{c^-}^T f(x) \end{bmatrix}
\endgroup = W^\mu f(x)$ and $\sroman{1}$ comes from the symmetry of $c^+, c^-$; $\sroman{2}$ is directly from some linear algebras.
From \eqref{ieq-A1-add2} we know that:
\begin{equation} \label{ieq-A1-2}
    \begin{aligned}
        & \underset{c^+,c^- \sim \rho^2}{\mathbb{E}} \left\{ \underset{x \sim \mathcal{D}_{c^+}}{\mathbb{E}} \left[ \underset{x^\prime \in \mathcal{U}(x)}{\sup} \ell \left( f(x^\prime)^T(\mu_{c^+} - \mu_{c^-}) \right) \right] \Bigg| c^+ \neq c^- \right\} \\[1mm]
        & \overset{\sroman{1}}{=} \underset{c^+,c^- \sim \rho^2}{\mathbb{E}} \left\{ \underset{c \sim \mathcal{D}_\mathcal{T}}{\mathbb{E}} \underset{x \sim \mathcal{D}_c}{\mathbb{E}} \left[ \underset{x^\prime \in \mathcal{U}(x)}{\sup} \ell \left( \left\{ g(x^\prime)_c - g(x^\prime)_{c^\prime} \right\}_{c^\prime \neq c} \right) \right] \Bigg| c^+ \neq c^- \right\} \\[1mm]
        & \overset{\sroman{2}}{=} \underset{c^+,c^- \sim \rho^2}{\mathbb{E}} \left[ \widetilde{L}_{sup}\left( \left\{ c^+, c^- \right\},W^\mu f \right) \Bigg| c^+ \neq c^- \right] \\[1mm]
        & \overset{\sroman{3}}{=} \underset{c^+,c^- \sim \rho^2}{\mathbb{E}} \left[ \widetilde{L}_{sup}^\mu\left( \left\{ c^+, c^- \right\}, f \right) \Bigg| c^+ \neq c^- \right] \overset{\sroman{4}}{=} \widetilde{L}_{sup}^\mu (f),
    \end{aligned}
\end{equation}
where $\sroman{1}$ is due to the tower property of expectation;$\sroman{2}$ is obvious by the definition of $\widetilde{L}_{sup}(\mathcal{T}, g)$; $\sroman{3}$ comes from the definition of $\widetilde{L}_{sup}^\mu(\mathcal{T}, f)$ and $\sroman{4}$ is from the definition of $\widetilde{L}_{sup}^\mu(f)$.
Combine \eqref{ieq-A1-1} with \eqref{ieq-A1-2}, we conclude that:
\begin{equation*}
    \widetilde{L}_{un}(f) \ge (1-\tau) \widetilde{L}_{sup}^\mu (f) + \tau \ell(0), \forall{f \in \mathcal{F}}.
\end{equation*}
So we have:
\begin{equation} \label{ieq-A1-final}
    (1-\tau) \widetilde{L}_{sup} (f) + \tau \ell(0) \leq (1-\tau) \widetilde{L}_{sup}^\mu (f) + \tau \ell(0) \leq \widetilde{L}_{un}(f), \forall{f \in \mathcal{F}}.
\end{equation}
Rearranging \eqref{ieq-A1-final} yields \eqref{eq-adv-Lsup-leq-adv-Lun}.
\end{proof}

\begin{proof}[Proof of Lemma \ref{lma-k-adv-Lun(fhat)-leq-adv-Lun(f)}]
Denote $(x,x^+,x_1^-,\cdots,x_k^-)$ by $z$, then by the Theorem 3.3 in \cite{DBLP:books/daglib/0034861}, we have: With probability at least $1-\frac{\delta}{2}$ over the choice of the training set $\mathcal{S}$, 
\begin{equation*}
    \underset{z}{\mathbb{E}}\left[ \frac{1}{B} g_f(z) \right] \leq \frac{1}{M} \sum_{i=1}^M \frac{1}{B} g_f(z_i) + \frac{2}{M} \mathcal{R}_\mathcal{S}(\frac{\mathcal{G}}{B}) + 3\sqrt{\frac{log \frac{4}{\delta}}{M}},
\end{equation*}
which is equivalent to:
\begin{equation} \label{ieq-A2-1}
    \widetilde{L}_{sun}(f) \leq \widehat{\widetilde{L}}_{sun}(f) + \frac{2}{M} \mathcal{R}_\mathcal{S}(\mathcal{G}) + 3 B \sqrt{\frac{log \frac{4}{\delta}}{M}},  \forall{f \in \mathcal{F}}.
\end{equation}
Let $f^* = \underset{f \in \mathcal{F}}{\arg\min} \ \widetilde{L}_{sun}(f)$, since $\underset{\mathcal{S}}{\mathbb{E}} \left[ \widehat{\widetilde{L}}_{sun}(f) \right] = \widetilde{L}_{sun}(f)$ and $\ell$ is bounded by $B$, Hoeffding's inequality tells us that: $\forall{f \in \mathcal{F}}$, $\forall{t \in \mathbb{R}}$:
\begin{equation*}
    \mathbb{P} \left[ \widehat{\widetilde{L}}_{sun}(f) - \widetilde{L}_{sun}(f) \ge t \right] \leq e^{-\frac{2Mt^2}{B^2}}.
\end{equation*}
Set $t=B\sqrt{\frac{log \frac{2}{\delta}}{2M}}$, we have: $\forall{f \in \mathcal{F}}$, with probability at least $1-\frac{\delta}{2}$,
\begin{equation} \label{ieq-A2-2}
     \widehat{\widetilde{L}}_{sun}(f) - \widetilde{L}_{sun}(f) \leq B\sqrt{\frac{log \frac{2}{\delta}}{2M}}.
\end{equation}
Combine \eqref{ieq-A2-1} with \eqref{ieq-A2-2}, the union bound tells us that: $\forall{f \in \mathcal{F}}$, with probability at least $1-\delta$ over the choice of the training set $\mathcal{S}$, 
\begin{equation*}
    \begin{aligned}
        \widetilde{L}_{sun}(\widehat{f}) &\overset{\sroman{1}}{\leq} \widehat{\widetilde{L}}_{sun}(\widehat{f}) + O\left( \frac{\mathcal{R}_\mathcal{S}(\mathcal{G})}{M} + B \sqrt{\frac{log \frac{1}{\delta}}{M}} \right) \overset{\sroman{2}}{\leq} \widehat{\widetilde{L}}_{sun}(f^*) + O\left( \frac{\mathcal{R}_\mathcal{S}(\mathcal{G})}{M} + B \sqrt{\frac{log \frac{1}{\delta}}{M}} \right) \\[1mm]
        &\overset{\sroman{3}}{\leq} \widetilde{L}_{sun}(f^*) + O\left( \frac{\mathcal{R}_\mathcal{S}(\mathcal{G})}{M} + B\sqrt{\frac{log \frac{1}{\delta}}{M}} \right) \overset{\sroman{4}}{\leq} \widetilde{L}_{sun}(f) + O\left( \frac{\mathcal{R}_\mathcal{S}(\mathcal{G})}{M} + B\sqrt{\frac{log \frac{1}{\delta}}{M}} \right),
    \end{aligned}
\end{equation*}
\vskip -0.1in
where $\sroman{1}$ comes from \eqref{ieq-A2-1};$\sroman{2}$ is directly from the fact that $\widehat{\widetilde{L}}_{sun}(\widehat{f}) \leq \widehat{\widetilde{L}}_{sun}(f^*)$, which is from the definition of $\widehat{f}$;$\sroman{3}$ is a result of \eqref{ieq-A2-2} and $\sroman{4}$ is obvious by the definition of $f^*$.
\end{proof}

\begin{customthm}{{\ref{thm-2-adv-Lsup-mu-leq-adv-Lsun}}}
Let $\ell: \mathbb{R}^k \xrightarrow[]{} \mathbb{R}$ be bounded by $B$. Then, for any $\delta \in (0,1)$,with a probability of at least $1 - \delta$ over the choice of the training set $\mathcal{S}=\{(x_j,x_j^+,x_{j}^-)\}_{j=1}^M$, for any $f \in \mathcal{F}$:
\begin{equation*}
    \widetilde{L}_{sup}(\widehat{f}) \leq  \widetilde{L}_{sup}^{\mu}(\widehat{f}) \leq \frac{1}{1-\tau} (\widetilde{L}_{sun}(f) - \tau \ell (0)) + \frac{1}{1-\tau} AG_M.
\end{equation*}
\end{customthm}

\begin{proof}
From Lemma \ref{lma-adv-Lsup-leq-adv-Lun} we know that:
\begin{equation*}
    \widetilde{L}_{sup}(\widehat{f}) \leq \widetilde{L}_{sup}^{\mu}(\widehat{f}) \leq \frac{1}{1-\tau} (\widetilde{L}_{un}(\widehat{f})-\tau \ell (0)) \leq \frac{1}{1-\tau} (\widetilde{L}_{sun}(\widehat{f})-\tau \ell (0)),
\end{equation*}
Then Lemma \ref{lma-k-adv-Lun(fhat)-leq-adv-Lun(f)} directly yields the result we need.
\end{proof}

\subsection{Proof of Theorem \ref{thm-adv-sup-leq-adv-sun-block}} \label{apd-prf-thm-adv-sup-leq-adv-sun-block}
\begin{customthm} {{\ref{thm-adv-sup-leq-adv-sun-block}}}
For any $f \in \mathcal{F}$, we have:
\begin{equation*}
    \begin{aligned}
        \widetilde{L}_{sup}(f) &\leq \frac{1}{1-\tau} \left(\widetilde{L}_{sun}^{block}(f) - \tau \ell(0)\right) \leq \frac{1}{1-\tau} \left(\widetilde{L}_{sun}(f) - \tau \ell(0)\right).
    \end{aligned}
\end{equation*}
\end{customthm}

\begin{proof}
By the convexity of $\ell$ and Jensen's inequality, we have: $\forall{x^\prime, x_i^+, x_i^-}$:
\begin{equation*}
    \begin{aligned}
        \ell \left( f(x^\prime)^T \left( \frac{\sum_{i=1}^b f(x_i^+)}{b} - \frac{\sum_{i=1}^b f(x_i^-)}{b} \right) \right) &= \ell \left( \frac{1}{b} \sum_{i=1}^b f(x^\prime)^T\left( f(x_i^+) - f(x_i^-) \right)\right) \\
        &\leq \frac{1}{b} \sum_{i=1}^b \ell \left( f(x^\prime)^T\left( f(x_i^+) - f(x_i^-) \right) \right).
    \end{aligned}
\end{equation*}
Take maximization about $x^\prime$ both sides, we have:
\begin{equation*}
    \begin{aligned}
        \underset{x^\prime \in \mathcal{U}(x)}{\sup} \!\ell\! \left(\! f(x^\prime)^T \!\!\left( \frac{\sum_{i=1}^b f(x_i^+\!)}{b} \!-\! \frac{\sum_{i=1}^b f(x_i^-\!)}{b} \right) \!\!\!\right) &\!\leq \!\frac{1}{b} \underset{x^\prime \in \mathcal{U}(x)}{\sup} \!\!\left[ \sum_{i=1}^b \! \ell \!\left( f(x^\prime)^T\!\!\left( f(x_i^+\!) \!-\! f(x_i^-) \right) \right) \!\right] \\[1mm]
        &\!\leq\! \frac{1}{b} \sum_{i=1}^b \!\left[ \underset{x^\prime \in \mathcal{U}(x)}{\sup}\! \ell \!\left( f(x^\prime)^T\!\!\left( f(x_i^+\!) \!-\! f(x_i^-\!)\! \right) \!\right)\! \right].
    \end{aligned}
\end{equation*}
Taking expectations both sides yields:
\begin{equation*}
    \begin{aligned}
        \widetilde{L}_{sun}^{block} (f) &= \underset{x,x_i^+,x_i^-}{\mathbb{E}} \left[ \underset{x^\prime \in \mathcal{U}(x)}{\sup} \ell \left( f(x^\prime)^T \left( \frac{\sum_{i=1}^b f(x_i^+)}{b} - \frac{\sum_{i=1}^b f(x_i^-)}{b} \right) \right) \right] \\[1mm]
        &\leq \underset{x,x_i^+,x_i^-}{\mathbb{E}} \left\{ \frac{1}{b} \sum_{i=1}^b \left[ \underset{x^\prime \in \mathcal{U}(x)}{\sup} \ell \left( f(x^\prime)^T\left( f(x_i^+) - f(x_i^-) \right) \right) \right] \right\} \\[1mm]
        &= \underset{x,x^+,x^-}{\mathbb{E}} \left[ \underset{x^\prime \in \mathcal{U}(x)}{\sup} \ell \left( f(x^\prime)^T\left( f(x^+) - f(x^-) \right) \right) \right] = \widetilde{L}_{sun} (f),
    \end{aligned}
\end{equation*}
which proves the second inequality. For the first inequality, we have, $\forall{f \in \mathcal{F}}$:
\vskip -0.1in
\begin{equation*}
    \begin{aligned}
        \widetilde{L}_{sun}^{block} (f) &= \underset{c^+,c^- \sim \rho^2}{\mathbb{E}} \left\{ \underset{x_i^- \sim \mathcal{D}_{c^-}^b}{\underset{x,x_i^+ \sim \mathcal{D}_{c^+}^{b+1}}{E}} \left[ \underset{x^\prime \in \mathcal{U}(x)}{\sup} \ell \left( f(x^\prime)^T \left( \frac{\sum_{i=1}^b f(x_i^+)}{b} - \frac{\sum_{i=1}^b f(x_i^-)}{b} \right) \right) \right] \right\} \\[1mm]
        &\overset{\sroman{1}}{\ge} \!\!\underset{c^+,c^- \sim \rho^2}{\mathbb{E}}\!\!\! \left\{ \!\underset{x \sim \mathcal{D}_{c^+}}{\mathbb{E}} \!\!\left\{ \underset{x^\prime \in \mathcal{U}(x)}{\sup} \underset{x_i^- \sim \mathcal{D}_{c^-}^b}{\underset{x_i^+ \sim \mathcal{D}_{c^+}^b}{E}} \!\!\left[ \ell \!\left( f(x^\prime)^T\!\! \left( \frac{\sum_{i=1}^b f(x_i^+)}{b} \!-\! \frac{\sum_{i=1}^b f(x_i^-)}{b} \right) \right) \right] \right\} \right\} \\[1mm]
        &\overset{\sroman{2}}{\ge} \!\!\underset{c^+,c^- \sim \rho^2}{\mathbb{E}} \!\!\!\left\{ \!\underset{x \sim \mathcal{D}_{c^+}}{\mathbb{E}} \!\!\left\{ \underset{x^\prime \in \mathcal{U}(x)}{\sup} \ell\!\! \left[  \underset{x_i^- \sim \mathcal{D}_{c^-}^b}{\underset{x_i^+ \sim \mathcal{D}_{c^+}^b}{E}} \!\!\!\left( f(x^\prime)^T \!\!\left( \frac{\sum_{i=1}^b f(x_i^+)}{b} \!-\! \frac{\sum_{i=1}^b f(x_i^-)}{b} \right) \right) \right] \right\} \right\} \\[1mm]
        &\overset{\sroman{3}}{=} \underset{c^+,c^- \sim \rho^2}{\mathbb{E}} \left\{ \underset{x \sim \mathcal{D}_{c^+}}{\mathbb{E}} \left[ \underset{x^\prime \in \mathcal{U}(x)}{\sup} \ell \left( f(x^\prime)^T(\mu_{c^+} - \mu_{c^-}) \right) \right] \right\} \overset{\sroman{4}}{=} (1-\tau) \widetilde{L}_{sup}^\mu (f) + \tau \ell(0),
    \end{aligned}
\end{equation*}
\vskip -0.1in
where $\sroman{1}$ and $\sroman{2}$ are directs result of Jensen's Inequality and convexity of maximization function and $\ell$; $\sroman{3}$ is from the linearity of expectation and the last equality follows the same argumentation as in \eqref{ieq-A1-2}, which proves the Theorem.
\end{proof}

\subsection{Proof of \eqref{ieq-cLsup-leq-Lsup}} \label{apd-prf-ieq-cLsup-leq-Lsup}
\begin{proof}
For any $f \in \mathcal{F}$, we have:
\vskip -0.1in
\begin{equation*}
    \begin{aligned}
        \widetilde{\mathcal{L}}_{sup}(f) \!&=\!\! \underset{\mathcal{T} \sim \mathcal{D}}{\mathbb{E}} \left[ \widetilde{L}_{sup} (\mathcal{T}, f) \right] \!\overset{\sroman{1}}{=}\! p\! \ \underset{\mathcal{T} \sim \mathcal{D}}{\mathbb{E}} \left[ \widetilde{L}_{sup} (\mathcal{T}, f) \Big| E_{distinct} \right] \!+\! (1\!-\!p) \underset{\mathcal{T} \sim \mathcal{D}}{\mathbb{E}} \left[ \widetilde{L}_{sup} (\mathcal{T}, f) \Big| \bar{E}_{distinct} \right] \\[1mm]
        &\ge  p \ \underset{\mathcal{T} \sim \mathcal{D}}{\mathbb{E}} \left[ \widetilde{L}_{sup} (\mathcal{T}, f) \Big| E_{distinct} \right] \overset{\sroman{2}}{=} p \ \widetilde{L}_{sup} (f),
    \end{aligned}
\end{equation*}
\vskip -0.1in
where $E_{distinct}$ is the event that $\{c^+,c_1^-,\dots,c_k^-\}$ is distinct and $p = \underset{(c^+,c_1^-,\dots,c_k^-) \sim \mathcal{D}}{P}[E_{distinct}]$ and $\sroman{1}$ is from the property of conditional expectation and $\sroman{2}$ comes from the definition of $\widetilde{L}_{sup} (f)$.
\end{proof}

\subsection{Proof of Proposition \ref{prop-loss-function}} \label{apd-prf-prop-loss-function}
\begin{custompro}{{\ref{prop-loss-function}}}
    The hinge loss and the logistic loss satisfy Assumption \ref{ass-loss-function}.
\end{custompro}

\begin{proof}
Since $I_1$ and $I_2$ are symmetric, We only need to prove \eqref{eq-loss-property-1}.
\textbf{For the Hinge Loss}:
\begin{equation*}
    \begin{aligned}
        \ell \left( \{ v_i \}_{i \in I_1} \right) &= \max \left\{ 0, 1 + \underset{i \in I_1}{\max} \{ -v_i \} \right\} &\overset{\sroman{1}}{\leq} \max \left\{ 0, 1 + \underset{i \in [d]}{\max} \{ -v_i \} \right\} = \ell \left( \{ v_i \}_{i \in [d]} \right),
    \end{aligned}
\end{equation*}
where $\sroman{1}$ is from the fact that $I_1 \subseteq [d]$, so the first inequality is proved, for the second one:
\begin{equation*}
    \begin{aligned}
        \ell \left( \{ v_i \}_{i \in [d]} \right) &= \max \left\{ 0, 1 + \underset{i \in [d]}{\max} \{ -v_i \} \right\} = \max \left\{ 0, 1 + \underset{i \in I_1 \cup I_2}{\max} \{ -v_i \} \right\},
    \end{aligned}
\end{equation*}
where the last equality is directly from the definition of $I_1$ and $I_2$.
\begin{enumerate}
\item if \ \ $1 + \underset{i \in I_1 \cup I_2}{\max} \{ -v_i \} \leq 0$, then $\ell \left( \{ v_i \}_{i \in I_1 \cup I_2} \right) = 0$, since Hinge Loss is non-negative, we have:
\begin{equation*}
    \ell \left( \{ v_i \}_{i \in I_1 \cup I_2} \right) = 0 + 0 \leq \ell \left( \{ v_i \}_{i \in I_1} \right) + \ell \left( \{ v_i \}_{i \in I_2} \right).
\end{equation*}
\item if \ \ $1 + \underset{i \in I_1 \cup I_2}{\max} \{ -v_i \} > 0$
    \begin{enumerate}
        \item if \ $1 + \underset{i \in I_1}{\max} \{ -v_i \} \leq 0$, then $\ell \left( \{ v_i \}_{i \in I_1} \right) = \max \left\{ 0, 1 + \underset{i \in I_1}{\max} \{ -v_i \} \right\} = 0$. So we have:
        \begin{equation*}
            \begin{aligned}
                \ell \left( \{ v_i \}_{i \in I_1 \cup I_2} \right) &= \max \left\{ 0, 1+\underset{i \in I_1 \cup I_2}{\max}\{ -v_i \} \right\} = \max \left\{ 0, 1+\underset{i \in I_2}{\max}\{ -v_i \} \right\} \\[1mm]
                &= 0 + \ell \left( \{ v_i \}_{i \in I_2} \right) = \ell \left( \{ v_i \}_{i \in I_1} \right) + \ell \left( \{ v_i \}_{i \in I_2} \right).
            \end{aligned}
        \end{equation*}
        \item if \ $1 + \underset{i \in I_2}{\max} \{ -v_i \} \leq 0$, by the same discussion of (a), we have:
        \begin{equation*}
            \ell \left( \{ v_i \}_{i \in I_1 \cup I_2} \right) \leq \ell \left( \{ v_i \}_{i \in I_1} \right) + \ell \left( \{ v_i \}_{i \in I_2} \right).
        \end{equation*}
        \item if \ $1 + \underset{i \in I_1}{\max} \{ -v_i \} > 0$ \  and \  $1 + \underset{i \in I_2}{\max} \{ -v_i \} > 0$,
        \begin{equation*}
            \begin{aligned}
                \ell \left( \{ v_i \}_{i \in I_1 \cup I_2} \right) \!&=\! \max\! \left\{ 0, 1\!+\!\!\!\underset{i \in I_1 \cup I_2}{\max}\{ -v_i \}\! \right\}\! \leq\! \max\! \left\{ 0, 1 \!+\!\! \underset{i \in I_1}{\max}\{ -v_i \} \!+\! 1 \!+\!\! \underset{i \in I_2}{\max}\{ -v_i \} \right\} \\[1mm]
                &\leq \max \left\{ 0, 1+\underset{i \in I_1}{\max}\{ -v_i \} \right\} + \max \left\{ 0, 1+\underset{i \in I_2}{\max}\{ -v_i \} \right\} \\[1mm]
                &= \ell \left( \{ v_i \}_{i \in I_1} \right) + \ell \left( \{ v_i \}_{i \in I_2} \right).
            \end{aligned}
        \end{equation*}
    \end{enumerate}
\end{enumerate}
So the second inequality is proved.
\textbf{For the Logistic Loss}:
\begin{equation*}
    \begin{aligned}
        \ell \left( \{ v_i \}_{i \in I_1} \right) &= log_2 \left( 1 + \sum_{i \in I_1} e^{-v_i} \right)\leq log_2 \left( 1 + \sum_{i \in [d]} e^{-v_i} \right) = \ell \left( \{ v_i \}_{i \in [d]} \right).
    \end{aligned}
\end{equation*}
So the first inequality is proved. For the second one:
\begin{equation*}
    \begin{aligned}
        \ell \left( \{ v_i \}_{i \in I_1 \cup I_2} \right) &= log_2 \left( 1 + \sum_{i \in I_1 \cup I_2} e^{-v_i} \right) \leq log_2 \left( 1 + \sum_{i \in I_1} e^{-v_i} + \sum_{i \in I_2} e^{-v_i} \right) \\[1mm]
        &\leq log_2 \left( 1 + \sum_{i \in I_1} e^{-v_i} + \sum_{i \in I_2} e^{-v_i} + \left( \sum_{i \in I_1} e^{-v_i} \right) \left( \sum_{i \in I_2} e^{-v_i} \right) \right) \\[1mm]
        &= log_2 \left[ \left( 1 + \sum_{i \in I_1} e^{-v_i} \right) \left( 1 + \sum_{i \in I_2} e^{-v_i} \right) \right] \\[1mm]
        &= log_2 \left( 1 + \sum_{i \in I_1} e^{-v_i} \right) + log_2 \left( 1 + \sum_{i \in I_2} e^{-v_i} \right) = \ell \left( \{ v_i \}_{i \in I_1} \right) + \ell \left( \{ v_i \}_{i \in I_2} \right),
    \end{aligned}
\end{equation*}
which proves the second inequality.
\end{proof}

\subsection{Proof of Theorem \ref{thm-k-general-bound}} \label{apd-prf-thm-k-general-bound}
We here introduce some further notations, which will be used in our main results.
For a tuple $(c^+,c_1^-,\dots,c_k^-)$, we define $I^+ \coloneqq \{i \in [k] | c_i^- = c^+\}$ and $Q$ as the set of distinct classes in $(c^+,c_1^-,\dots,c_k^-)$. We define $p_{\max}(\mathcal{T}) \coloneqq \max_c \mathcal{D}_\mathcal{T}(c)$, $\tau_k \coloneqq \underset{c^+,c_i^- \sim \rho^{k+1}}{\mathbb{P}}[I^+ \neq \emptyset]$, while $\ell_N(\Vec{0})$ is defined as the loss of the $N$-dimensional zero vector.
Let $\mathcal{T}$ be a task sample from distribution $\mathcal{D}$ and $\rho^+(\mathcal{T})$ be a distribution of $c^+$ when $(c^+,c_1^-,\dots,c_k^-)$ are sampled from $\rho^{k+1}$ conditioned on $Q = \mathcal{T}$ and $I^+ = \emptyset$ and $\rho_{\min}^+(\mathcal{T}) \coloneqq \underset{c \in \mathcal{T}}{\min}\ \rho^+(\mathcal{T})(c)$.

\begin{theorem} \label{thm-k-general-bound}
    Assume that $\ell$ satisfies Assumption \ref{ass-loss-function}. With a probability of at least $1-\delta$ over the choice of the training set $\mathcal{S}$, for any $f \in \mathcal{F}$, we have:
    \begin{equation} \label{eq-k-general-bound}
        \begin{aligned}
            &\underset{\mathcal{T} \sim \mathcal{D}}{\mathbb{E}}\!\!\left[\frac{\rho_{\min}^+(\mathcal{T})}{p_{\max}(\mathcal{T})} \widetilde{L}_{sup}(\mathcal{T}, \widehat{f})\right]\!\!\! \leq \!\!\underset{\mathcal{T} \sim \mathcal{D}}{\mathbb{E}}\!\!\left[\frac{\rho_{\min}^+(\mathcal{T})}{p_{\max}(\mathcal{T})} \widetilde{L}_{sup}^\mu(\mathcal{T}, \widehat{f})\right] \\[1mm]
            &\leq \frac{1}{1-\tau_k}\left(\widetilde{L}_{sun}(f) + AG_M\right)-\frac{\tau_k}{1-\tau_k}\underset{c^+,c_i^- \sim \rho^{k+1}}{\mathbb{E}}\left[\ell_{|I^+|}(\Vec{0})|I^+\neq \emptyset\right],
        \end{aligned}
    \end{equation}
	where $|I^+|$ is the cardinality of set $I^+$.
\end{theorem}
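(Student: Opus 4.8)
The plan is to reduce Theorem~\ref{thm-k-general-bound} to a population-level inequality and then feed it the uniform-convergence estimate of Lemma~\ref{lma-k-adv-Lun(fhat)-leq-adv-Lun(f)}, in the same spirit as the one-negative-sample argument behind Lemma~\ref{lma-adv-Lsup-leq-adv-Lun}. Concretely, I would first establish that for every $f \in \mathcal{F}$,
\begin{equation*}
\underset{\mathcal{T} \sim \mathcal{D}}{\mathbb{E}}\left[\frac{\rho_{\min}^+(\mathcal{T})}{p_{\max}(\mathcal{T})}\, \widetilde{L}_{sup}^\mu(\mathcal{T}, f)\right] \leq \frac{1}{1-\tau_k}\left(\widetilde{L}_{sun}(f) - \tau_k \underset{c^+,c_i^- \sim \rho^{k+1}}{\mathbb{E}}\left[\ell_{|I^+|}(\Vec{0}) \,\big|\, I^+ \neq \emptyset\right]\right).
\end{equation*}
Granting this, Theorem~\ref{thm-k-general-bound} follows by applying the inequality to $\widehat{f}$, substituting $\widetilde{L}_{sun}(\widehat{f}) \le \widetilde{L}_{sun}(f) + AG_M$ (Lemma~\ref{lma-k-adv-Lun(fhat)-leq-adv-Lun(f)}, which holds with probability at least $1-\delta$ simultaneously for all $f$), and then using the trivial bound $\widetilde{L}_{sup}(\mathcal{T},\widehat{f}) \le \widetilde{L}_{sup}^\mu(\mathcal{T},\widehat{f})$ from \eqref{eq-adv-Lsup-T} together with $\rho_{\min}^+(\mathcal{T})/p_{\max}(\mathcal{T}) \ge 0$ to get the left-hand inequality of \eqref{eq-k-general-bound}.

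For the population inequality I would begin from the $\rho^{k+1}$ form of $\widetilde{L}_{sun}(f)$ and push the expectation over $x^+, x_1^-,\dots,x_k^-$ inside the supremum and the loss. Since, for each fixed $x'$, the arguments $f(x')^T(f(x^+)-f(x_i^-))$ are affine in $(f(x^+),f(x_1^-),\dots,f(x_k^-))$ and $\ell$ is convex, the map $(f(x^+),f(x_1^-),\dots,f(x_k^-)) \mapsto \sup_{x'\in\mathcal{U}(x)}\ell(\{f(x')^T(f(x^+)-f(x_i^-))\}_{i=1}^k)$ is convex, so Jensen's inequality yields
\begin{equation*}
\widetilde{L}_{sun}(f) \ge \underset{c^+,c_i^- \sim \rho^{k+1}}{\mathbb{E}}\ \underset{x \sim \mathcal{D}_{c^+}}{\mathbb{E}}\left[\underset{x' \in \mathcal{U}(x)}{\sup}\ \ell\!\left(\{f(x')^T(\mu_{c^+}-\mu_{c_i^-})\}_{i=1}^k\right)\right].
\end{equation*}
Then I would split the outer expectation according to whether $I^+ = \emptyset$ (probability $1-\tau_k$) or $I^+ \neq \emptyset$ (probability $\tau_k$). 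On $\{I^+ \neq \emptyset\}$ the coordinates $i \in I^+$ satisfy $\mu_{c^+}-\mu_{c_i^-}=0$, so by \eqref{eq-loss-property-1} applied with $I_1 = I^+$ (and $I_2 = [k]$) the integrand is at least $\ell_{|I^+|}(\Vec{0})$ uniformly in $x'$, which contributes exactly the term $\tau_k\,\mathbb{E}[\ell_{|I^+|}(\Vec{0})\mid I^+\neq\emptyset]$.

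The more delicate case is $\{I^+ = \emptyset\}$, where I would condition further on $Q = \mathcal{T}$: by Definition~\ref{def-task-distribution} the conditional law of $\mathcal{T}$ is precisely $\mathcal{D}$ and that of $c^+$ is $\rho^+(\mathcal{T})$. Writing $g = W^\mu f$ so that $f(x')^T\mu_c = g(x')_c$, and applying \eqref{eq-loss-property-1} once more (now with $I_1$ keeping one index per distinct negative class) to discard the repeated negative coordinates, the integrand is bounded below by $\sup_{x'\in\mathcal{U}(x)}\ell(\{g(x')_{c^+}-g(x')_{c'}\}_{c'\in\mathcal{T}\setminus\{c^+\}})$. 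Denoting the $\mathcal{D}_c$-expectation of that quantity by $\phi(\mathcal{T},c)$, the $\{I^+=\emptyset\}$ contribution equals $\mathbb{E}_{\mathcal{T}\sim\mathcal{D}}\mathbb{E}_{c^+\sim\rho^+(\mathcal{T})}[\phi(\mathcal{T},c^+)] \ge \mathbb{E}_{\mathcal{T}\sim\mathcal{D}}[\rho_{\min}^+(\mathcal{T})\sum_{c\in\mathcal{T}}\phi(\mathcal{T},c)]$, while $\widetilde{L}_{sup}^\mu(\mathcal{T},f) = \sum_{c\in\mathcal{T}}\mathcal{D}_\mathcal{T}(c)\phi(\mathcal{T},c) \le p_{\max}(\mathcal{T})\sum_{c\in\mathcal{T}}\phi(\mathcal{T},c)$; combining these reproduces the factor $\rho_{\min}^+(\mathcal{T})/p_{\max}(\mathcal{T})$. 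Summing the two contributions and rearranging gives the population inequality. The hard part will be this last case: one must be scrupulous about the two conditional distributions and about invoking Assumption~\ref{ass-loss-function} in the correct direction to pass from the $k$-dimensional vector of class-mean gaps (with repetitions) down to the supervised loss on the reduced task $\mathcal{T}$; the remaining steps are Jensen's inequality and routine conditioning.
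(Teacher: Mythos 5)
Your proposal is correct and follows essentially the same route as the paper: Jensen's inequality to replace $f(x^+),f(x_i^-)$ by the class means, a split on $\{I^+=\emptyset\}$ versus $\{I^+\neq\emptyset\}$ with Assumption \ref{ass-loss-function} invoked in each branch, conditioning on $Q=\mathcal{T}$ to bring in $\rho^+(\mathcal{T})$, the comparison $\rho^+(\mathcal{T})(c)\ge \frac{\rho_{\min}^+(\mathcal{T})}{p_{\max}(\mathcal{T})}\mathcal{D}_\mathcal{T}(c)$ (your route through $\sum_{c\in\mathcal{T}}\phi(\mathcal{T},c)$ is just a repackaging of the paper's Lemma \ref{lma-rho-ge-DT}), and finally the uniform-convergence bound of Lemma \ref{lma-k-adv-Lun(fhat)-leq-adv-Lun(f)} applied to $\widehat{f}$. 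This matches the paper's proof via Lemma \ref{lma-k-adv-Lsup-le-adv-Lsun}.
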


Before proceeding with the proof of Theorem \ref{thm-k-general-bound}, we introduce some useful lemmas.

\begin{lemma} \label{lma-rho-ge-DT}
    For any $\mathcal{T}$ sampled from $\mathcal{D}$, we have: $\rho^+(\mathcal{T})(c) \ge \frac{\rho_{\min}^+(\mathcal{T})}{p_{\max}(\mathcal{T})} \mathcal{D}_\mathcal{T}(c), \forall{c \in \mathcal{T}}.$
\end{lemma}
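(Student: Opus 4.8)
The plan is to read the inequality straight off the two extremal quantities it involves: $\rho_{\min}^+(\mathcal{T})$ is by definition a minimum over the point masses of the conditional law $\rho^+(\mathcal{T})$, and $p_{\max}(\mathcal{T})$ is by definition a maximum over the point masses of $\mathcal{D}_\mathcal{T}$. So no probabilistic argument is needed beyond some bookkeeping about when the objects are well defined.

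First I would fix a task $\mathcal{T}$ in the support of $\mathcal{D}$ and a class $c \in \mathcal{T}$, and check that the conditioning defining $\rho^+(\mathcal{T})$ is non-degenerate. Since $\mathcal{T}$ is obtained with positive probability as the set of distinct classes of a tuple $(c^+,c_1^-,\dots,c_k^-)\sim\rho^{k+1}$ conditioned on $c_i^-\neq c^+$ (i.e.\ on $I^+=\emptyset$), the event $\{Q=\mathcal{T},\,I^+=\emptyset\}$ has positive probability, so $\rho^+(\mathcal{T})$ is a well-defined probability distribution supported on $\mathcal{T}$; likewise $p_{\max}(\mathcal{T})=\max_{c'}\mathcal{D}_\mathcal{T}(c')>0$ because $\mathcal{D}_\mathcal{T}$ is a probability distribution on the nonempty set $\mathcal{T}$, so dividing by $p_{\max}(\mathcal{T})$ is legitimate.

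Then I would combine two trivial bounds: from $\rho_{\min}^+(\mathcal{T})=\min_{c'\in\mathcal{T}}\rho^+(\mathcal{T})(c')$ we get $\rho^+(\mathcal{T})(c)\ge\rho_{\min}^+(\mathcal{T})$, and from $p_{\max}(\mathcal{T})=\max_{c'}\mathcal{D}_\mathcal{T}(c')$ (together with $c\in\mathcal{T}$) we get $\mathcal{D}_\mathcal{T}(c)/p_{\max}(\mathcal{T})\le 1$. Multiplying the first bound by this sub-unit factor yields
$$\rho^+(\mathcal{T})(c)\ge\rho_{\min}^+(\mathcal{T})\ge\frac{\rho_{\min}^+(\mathcal{T})}{p_{\max}(\mathcal{T})}\,\mathcal{D}_\mathcal{T}(c),$$
which is exactly the claim. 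I do not expect a genuine obstacle here; the only point warranting a sentence of care is the well-definedness/positivity remark above, which is immediate once one uses that $\mathcal{T}$ lies in the support of $\mathcal{D}$ and that $\mathcal{T}\neq\emptyset$.
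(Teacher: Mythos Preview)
Your proof is correct and follows essentially the same approach as the paper: both arguments use the two trivial bounds $\rho^+(\mathcal{T})(c)\ge\rho_{\min}^+(\mathcal{T})$ and $\mathcal{D}_\mathcal{T}(c)/p_{\max}(\mathcal{T})\le 1$ and combine them directly. Your additional remark on well-definedness and positivity is a nice touch of rigor that the paper omits, but the core argument is identical.
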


\begin{lemma} \label{lma-k-adv-Lsup-le-adv-Lsun}
	Assume that $f$ satisfies Assumption \ref{ass-loss-function}. For any $f \in \mathcal{F}$, we have:
	\begin{equation*}
		\begin{aligned}
			&(1-\tau_k) \underset{\mathcal{T} \sim \mathcal{D}}{\mathbb{E}}\left[\frac{\rho_{\min}^+(\mathcal{T})}{p_{\max}(\mathcal{T})} \widetilde{L}_{sup}(\mathcal{T}, f)\right] \leq (1-\tau_k) \underset{\mathcal{T} \sim \mathcal{D}}{\mathbb{E}}\left[\frac{\rho_{\min}^+(\mathcal{T})}{p_{\max}(\mathcal{T})} \widetilde{L}_{sup}^\mu(\mathcal{T}, f)\right] \\[1mm]
			&\leq \widetilde{L}_{sun}(f) - \tau_k \underset{c^+,c_i^- \sim \rho^{k+1}}{\mathbb{E}}\left[\ell_{|I^+|}(\Vec{0})|I^+\neq \emptyset \right].
		\end{aligned}
	\end{equation*}
\end{lemma}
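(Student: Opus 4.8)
The statement is the $k$-negative-sample analogue of Lemma~\ref{lma-adv-Lsup-leq-adv-Lun}, so the plan is to run the same template while carefully tracking the combinatorics of the tuple $(c^+,c_1^-,\dots,c_k^-)$. The first inequality needs nothing new: for every task $\mathcal{T}$ we have $\widetilde{L}_{sup}(\mathcal{T},f)=\inf_{W}\widetilde{L}_{sup}(\mathcal{T},Wf)\le\widetilde{L}_{sup}(\mathcal{T},W^\mu f)=\widetilde{L}_{sup}^\mu(\mathcal{T},f)$ by \eqref{eq-adv-Lsup-T}, and multiplying through by the nonnegative factor $(1-\tau_k)\,\rho_{\min}^+(\mathcal{T})/p_{\max}(\mathcal{T})$ and taking $\mathbb{E}_{\mathcal{T}\sim\mathcal{D}}$ preserves it. Hence all the work is in the second inequality.

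For the second inequality I would start from $\widetilde{L}_{sun}(f)\ge\widetilde{L}_{un}(f)$ (which holds by \eqref{eq-attack-obj-leq-sur-obj}) and then invoke Jensen's inequality on the convex loss $\ell$, exactly as in step $\sroman{1}$ of \eqref{ieq-A1-1}, to push the inner expectation $\mathbb{E}_{x^+\sim\mathcal{D}_{c^+},\,x_i^-\sim\mathcal{D}_{c_i^-}}$ through $\ell$ and replace $f(x^+),f(x_i^-)$ by the class means $\mu_{c^+},\mu_{c_i^-}$ in the $k$ coordinates. This yields
\[
\widetilde{L}_{sun}(f)\ \ge\ \underset{c^+,c_i^-\sim\rho^{k+1}}{\mathbb{E}}\ \underset{x\sim\mathcal{D}_{c^+}}{\mathbb{E}}\Big[\underset{x'\in\mathcal{U}(x)}{\sup}\ \ell\big(\{f(x')^T(\mu_{c^+}-\mu_{c_i^-})\}_{i=1}^k\big)\Big].
\]
Then I would split the outer expectation over the event $I^+=\{i\in[k]:c_i^-=c^+\}=\emptyset$ and its complement, which carry probabilities $1-\tau_k$ and $\tau_k$.

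On $\{I^+\neq\emptyset\}$, for $i\in I^+$ the $i$-th coordinate is $f(x')^T(\mu_{c^+}-\mu_{c^+})=0$ regardless of $x'$, so applying Assumption~\ref{ass-loss-function} with $I_2=I^+$, $I_1=[k]$ via the left inequality of \eqref{eq-loss-property-2} gives $\ell(\{v_i\}_{i\in[k]})\ge\ell(\{v_i\}_{i\in I^+})=\ell_{|I^+|}(\vec{0})$ pointwise, hence the inner $\mathbb{E}_x\sup_{x'}$ term is at least the constant $\ell_{|I^+|}(\vec{0})$; this contributes $\tau_k\,\mathbb{E}[\ell_{|I^+|}(\vec{0})\mid I^+\neq\emptyset]$. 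On $\{I^+=\emptyset\}$ every $c_i^-\neq c^+$, so by Definition~\ref{def-task-distribution} the conditional law of $(c^+,c_1^-,\dots,c_k^-)$ is precisely: draw $\mathcal{T}\sim\mathcal{D}$, then $c^+\sim\rho^+(\mathcal{T})$, with the $c_i^-$ supported on $\mathcal{T}\setminus\{c^+\}$. Using Assumption~\ref{ass-loss-function} in the restriction direction (left inequality of \eqref{eq-loss-property-1} with $I_1$ a set of one index per distinct negative class and $I_2=[k]$) collapses the possibly-repeated $k$ coordinates down to the $|\mathcal{T}|-1$ distinct ones $\{f(x')^T(\mu_{c^+}-\mu_{c'})\}_{c'\in\mathcal{T},\,c'\neq c^+}$, and writing $g=W^\mu f$ exactly as in \eqref{ieq-A1-2} identifies $\mathbb{E}_{c^+\sim\mathcal{D}_{\mathcal{T}}}\mathbb{E}_{x\sim\mathcal{D}_{c^+}}[\sup_{x'}\ell(\cdots)]$ with $\widetilde{L}_{sup}^\mu(\mathcal{T},f)$. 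Finally, since $\ell\ge0$ (true for hinge and logistic loss) and $\rho^+(\mathcal{T})(c)\ge\frac{\rho_{\min}^+(\mathcal{T})}{p_{\max}(\mathcal{T})}\mathcal{D}_{\mathcal{T}}(c)$ by Lemma~\ref{lma-rho-ge-DT}, replacing the weights $\rho^+(\mathcal{T})(\cdot)$ by the smaller weights $\frac{\rho_{\min}^+(\mathcal{T})}{p_{\max}(\mathcal{T})}\mathcal{D}_{\mathcal{T}}(\cdot)$ only decreases the value, so the $\{I^+=\emptyset\}$ branch is at least $\mathbb{E}_{\mathcal{T}\sim\mathcal{D}}\big[\tfrac{\rho_{\min}^+(\mathcal{T})}{p_{\max}(\mathcal{T})}\widetilde{L}_{sup}^\mu(\mathcal{T},f)\big]$. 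Recombining the two branches and rearranging gives the second inequality; chaining with the first completes the proof.

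I expect the main obstacle to be the measure-theoretic bookkeeping in the $\{I^+=\emptyset\}$ branch: verifying that the conditional law of $c^+$ given $\{I^+=\emptyset,\ Q=\mathcal{T}\}$ is $\rho^+(\mathcal{T})$ while the marginal of $\mathcal{T}$ is $\mathcal{D}$, and invoking Assumption~\ref{ass-loss-function} for its two distinct roles here (collapsing repeated negative coordinates, and isolating the zero coordinates indexed by $I^+$). The Jensen and infimum steps are routine copies of the arguments already used for Lemma~\ref{lma-adv-Lsup-leq-adv-Lun}.
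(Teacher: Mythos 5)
Your proposal is correct and follows essentially the same route as the paper's proof: Jensen's inequality to pass to the class means, a split on $\{I^+=\emptyset\}$ versus its complement with weights $1-\tau_k$ and $\tau_k$, Assumption \ref{ass-loss-function} used once to isolate the zero coordinates indexed by $I^+$ and once to collapse repeated negatives to the distinct classes of $Q$, the tower property to identify the conditional law of $c^+$ given $\{Q=\mathcal{T},\,I^+=\emptyset\}$ as $\rho^+(\mathcal{T})$, and Lemma \ref{lma-rho-ge-DT} to reweight to $\mathcal{D}_\mathcal{T}$. Your explicit remark that the reweighting step needs $\ell\ge 0$ is a detail the paper leaves implicit but which does hold for the hinge and logistic losses.
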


\begin{proof}[Proof of Lemma \ref{lma-rho-ge-DT}]
By the definition of $\rho^+(\mathcal{T})$ and $\rho_{\min}^+(\mathcal{T})$, we can find that:
\vskip -0.1in
\begin{equation*}
    \forall{c \in \mathcal{T}}, \rho_{\min}^+(\mathcal{T}) \leq \rho^+(\mathcal{T})(c).
\end{equation*}
\vskip -0.1in
So we have:
\begin{equation} \label{ieq-A7-1}
    \forall{c \in \mathcal{T}}, \frac{\rho^+(\mathcal{T})(c)}{\rho_{\min}^+(\mathcal{T})} \ge 1.
\end{equation}
By the definition of $p_{\max}(\mathcal{T})$, we have:
\begin{equation} \label{ieq-A7-2}
    \forall{c \in \mathcal{T}}, \frac{\mathcal{D}_\mathcal{T}(c)}{p_{\max}(\mathcal{T})} \leq 1.
\end{equation}
Combine \eqref{ieq-A7-1} and \eqref{ieq-A7-2}, we have:
\begin{equation*}
    \forall{c \in \mathcal{T}}, \rho^+(\mathcal{T})(c) \ge \frac{\rho_{\min}^+(\mathcal{T})}{p_{\max}(\mathcal{T})} \mathcal{D}_\mathcal{T}(c).
\end{equation*}
\end{proof}
\vskip -0.1in

\begin{proof}[Proof of Lemma \ref{lma-k-adv-Lsup-le-adv-Lsun}]
By the definition of $\widetilde{L}_{sun} (f)$, we have:
\vskip -0.1in
\begin{equation} \label{ieq-A8-1}
    \begin{aligned}
        \widetilde{L}_{sun} (f) &=  \underset{x, x^+ \sim \mathcal{D}_{c^+}^2; x_i^- \sim \mathcal{D}_{c_i^-}}{\underset{c^+, c_i^- \sim \rho^{k+1}}{\mathbb{E}}} \left[ \underset{x^\prime \in \mathcal{U}(x)}{\sup} \ell \left( \left\{ f(x^\prime)^T \left( f(x^+) - f(x_i^-) \right) \right\}_{i=1}^k \right) \right] \\[1mm]
        &= \underset{x \sim \mathcal{D}_{c^+}}{\underset{c^+, c_i^- \sim \rho^{k+1}}{\mathbb{E}}} \left\{ \underset{x_i^- \sim \mathcal{D}_{c_i^-}}{\underset{x^+ \sim \mathcal{D}_{c^+}}{\mathbb{E}}} \left[ \underset{x^\prime \in \mathcal{U}(x)}{\sup} \ell \left( \left\{ f(x^\prime)^T \left( f(x^+) - f(x_i^-) \right) \right\}_{i=1}^k \right) \right] \right\} \\[1mm]
        &\overset{\sroman{1}}{\ge} \underset{x \sim \mathcal{D}_{c^+}}{\underset{c^+, c_i^- \sim \rho^{k+1}}{\mathbb{E}}} \left\{ \underset{x^\prime \in \mathcal{U}(x)}{\sup} \left[ \underset{x_i^- \sim \mathcal{D}_{c_i^-}}{\underset{x^+ \sim \mathcal{D}_{c^+}}{\mathbb{E}}} \ell \left( \left\{ f(x^\prime)^T \left( f(x^+) - f(x_i^-) \right) \right\}_{i=1}^k \right) \right] \right\} \\[1mm]
        &\overset{\sroman{2}}{\ge} \underset{x \sim \mathcal{D}_{c^+}}{\underset{c^+, c_i^- \sim \rho^{k+1}}{\mathbb{E}}} \left[ \underset{x^\prime \in \mathcal{U}(x)}{\sup} \ell \left( \left\{ f(x^\prime)^T \left( \mu_{c^+} - \mu_{c_i^-} \right) \right\}_{i=1}^k \right) \right] \triangleq R_1,
    \end{aligned}
\end{equation}
\vskip -0.1in
where $\sroman{1},\sroman{2}$ is from the Jensen's inequality and convexity of $\ell$. Then we analyze lower bound of $R_1$:
\vskip -0.2in
\begin{equation} \label{ieq-A8-2}
    \begin{aligned}
        R_1 &\overset{\sroman{1}}{=} (1-\tau_k) \underset{x \sim \mathcal{D}_{c^+}}{\underset{c^+, c_i^- \sim \rho^{k+1}}{\mathbb{E}}} \left[ \underset{x^\prime \in \mathcal{U}(x)}{\sup} \ell \left( \left\{ f(x^\prime)^T \left( \mu_{c^+} - \mu_{c_i^-} \right) \right\}_{i=1}^k \right) \bigg| I^+ = \emptyset \right] \\[1mm]
        &\ \ \ \ \ \ \ \ \ \ \ \ \ \ \ \ \ \ \ \ + \tau_k \underset{x \sim \mathcal{D}_{c^+}}{\underset{c^+, c_i^- \sim \rho^{k+1}}{\mathbb{E}}} \left[ \underset{x^\prime \in \mathcal{U}(x)}{\sup} \ell \left( \left\{ f(x^\prime)^T \left( \mu_{c^+} - \mu_{c_i^-} \right) \right\}_{i=1}^k \right) \bigg| I^+ \neq \emptyset \right] \\[1mm]
        &\overset{\sroman{2}}{\ge} (1-\tau_k) \underset{x \sim \mathcal{D}_{c^+}}{\underset{c^+, c_i^- \sim \rho^{k+1}}{\mathbb{E}}} \left[ \underset{x^\prime \in \mathcal{U}(x)}{\sup} \ell \left( \left\{ f(x^\prime)^T \left( \mu_{c^+} - \mu_{c} \right) \right\}_{c \sim Q, c \neq c^+} \right) \bigg| I^+ = \emptyset \right] \\[1mm]
        &\ \ \ \ \ \ \ \ \ \ \ \ \ \ \ \ \ \ \ \ + \tau_k \underset{x \sim \mathcal{D}_{c^+}}{\underset{c^+, c_i^- \sim \rho^{k+1}}{\mathbb{E}}} \left[ \underset{x^\prime \in \mathcal{U}(x)}{\sup} \ell \left( \left\{ f(x^\prime)^T \left( \mu_{c^+} - \mu_{c_i^-} \right) \right\}_{i=1}^k \right) \bigg| I^+ \neq \emptyset \right] \\[1mm]
        &\overset{\sroman{3}}{\ge} (1-\tau_k) \underset{x \sim \mathcal{D}_{c^+}}{\underset{c^+, c_i^- \sim \rho^{k+1}}{\mathbb{E}}} \left[ \underset{x^\prime \in \mathcal{U}(x)}{\sup} \ell \left( \left\{ f(x^\prime)^T \left( \mu_{c^+} - \mu_{c} \right) \right\}_{c \sim Q, c \neq c^+} \right) \bigg| I^+ = \emptyset \right]  \\[1mm]
        &\ \ \ \ \ \ \ \ \ \ \ \ \ \ \ \ \ \ \ \ + \tau_k \underset{c^+, c_i^- \sim \rho^{k+1}}{\mathbb{E}} \left[ \ell_{|I^+|}(\Vec{0}) \bigg| I^+ \neq \emptyset \right],
    \end{aligned}
\end{equation}
where $\sroman{1}$ comes from the property of conditional expectation, $\sroman{2}$ is a result of the fact that $Q \subseteq [k]$ and $\ell$ satisfies Assumption \ref{ass-loss-function} and $\sroman{3}$ is from the fact that $[|I^+|] \subseteq [k]$ and $\ell$ satisfies Assumption \ref{ass-loss-function}. Let $R_2 = \underset{x \sim \mathcal{D}_{c^+}}{\underset{c^+, c_i^- \sim \rho^{k+1}}{\mathbb{E}}} \left[ \underset{x^\prime \in \mathcal{U}(x)}{\sup} \ell \left( \left\{ f(x^\prime)^T \left( \mu_{c^+} - \mu_{c} \right) \right\}_{c \sim Q, c \neq c^+} \right) \bigg| I^+ = \emptyset \right]$, we have:
\begin{equation} \label{ieq-A8-3}
    \begin{aligned}
        R_2 &= \underset{x \sim \mathcal{D}_{c^+}}{\underset{c^+, c_i^- \sim \rho^{k+1}}{\mathbb{E}}} \left[ \underset{x^\prime \in \mathcal{U}(x)}{\sup} \ell \left( \left\{ f(x^\prime)^T \left( \mu_{c^+} - \mu_{c} \right) \right\}_{c \sim Q, c \neq c^+} \right) \bigg| I^+ = \emptyset \right] \\[1mm]
        &\overset{\sroman{1}}{=} \underset{\mathcal{T} \sim \mathcal{D}}{E} \left\{ \underset{x \sim \mathcal{D}_{c^+}}{\underset{c^+, c_i^- \sim \rho^{k+1}}{\mathbb{E}}} \left[ \underset{x^\prime \in \mathcal{U}(x)}{\sup} \ell \left( \left\{ f(x^\prime)^T \left( \mu_{c^+} - \mu_{c} \right) \right\}_{c \sim Q, c \neq c^+} \right) \bigg| Q = \mathcal{T} , I^+ = \emptyset \right] \right\} \\[1mm]
        &\overset{\sroman{2}}{=} \underset{\mathcal{T} \sim \mathcal{D}}{E} \left\{ \underset{x \sim \mathcal{D}_{c^+}}{\underset{c^+ \sim \rho^+(\mathcal{T})}{\mathbb{E}}} \left[ \underset{x^\prime \in \mathcal{U}(x)}{\sup} \ell \left( \left\{ f(x^\prime)^T \left( \mu_{c^+} - \mu_{c} \right) \right\}_{c \sim \mathcal{T}, c \neq c^+} \right) \right] \right\},
    \end{aligned}
\end{equation}
\vskip -0.1in
where $\sroman{1}$ is from the tower property of expectation and $\sroman{2}$ is directly obtained by the definition of $\rho^+(\mathcal{T})$. Let $R_3 = \underset{x \sim \mathcal{D}_{c^+}}{\underset{c^+ \sim \rho^+(\mathcal{T})}{\mathbb{E}}} \left[ \underset{x^\prime \in \mathcal{U}(x)}{\sup} \ell \left( \left\{ f(x^\prime)^T \left( \mu_{c^+} - \mu_{c} \right) \right\}_{c \sim \mathcal{T}, c \neq c^+} \right) \right]$, we have:
\vskip -0.2in
\begin{equation} \label{ieq-A8-4}
    \begin{aligned}
        R_3 &= \underset{x \sim \mathcal{D}_{c^+}}{\underset{c^+ \sim \rho^+(\mathcal{T})}{\mathbb{E}}} \left[ \underset{x^\prime \in \mathcal{U}(x)}{\sup} \ell \left( \left\{ f(x^\prime)^T \left( \mu_{c^+} - \mu_{c} \right) \right\}_{c \sim \mathcal{T}, c \neq c^+} \right) \right] \\[1mm]
        &\overset{\sroman{1}}{\ge} \frac{\rho_{\min}^+(\mathcal{T})}{p_{\max}(\mathcal{T})}\ \  \underset{x \sim \mathcal{D}_{c^+}}{\underset{c^+ \sim \mathcal{D}_\mathcal{T}}{\mathbb{E}}} \left[ \underset{x^\prime \in \mathcal{U}(x)}{\sup} \ell \left( \left\{ f(x^\prime)^T \left( \mu_{c^+} - \mu_{c} \right) \right\}_{c \sim \mathcal{T}, c \neq c^+} \right) \right] \\[1mm]
        &\overset{\sroman{2}}{=} \frac{\rho_{\min}^+(\mathcal{T})}{p_{\max}(\mathcal{T})} \ \ \widetilde{L}_{sup}^\mu (\mathcal{T}, f) \overset{\sroman{3}}{\ge} \frac{\rho_{\min}^+(\mathcal{T})}{p_{\max}(\mathcal{T})} \ \ \widetilde{L}_{sup} (\mathcal{T}, f),
    \end{aligned}
\end{equation}
where $\sroman{1}$ is directly from Lemma \ref{lma-rho-ge-DT};$\sroman{2}$ and $\sroman{3}$ are from the definition of $\widetilde{L}_{sup}^\mu (\mathcal{T}, f)$ and $\widetilde{L}_{sup} (\mathcal{T}, f)$, respectively.
Combing \eqref{ieq-A8-1} \eqref{ieq-A8-2} \eqref{ieq-A8-3} and \eqref{ieq-A8-4} yields:
\begin{equation*}
    \begin{aligned}
        (1-\tau_k) \underset{\mathcal{T} \sim \mathcal{D}}{\mathbb{E}}\left[\frac{\rho_{\min}^+(\mathcal{T})}{p_{\max}(\mathcal{T})} \widetilde{L}_{sup}(\mathcal{T}, f)\right]
            &\leq (1-\tau_k) \underset{\mathcal{T} \sim \mathcal{D}}{\mathbb{E}}\left[\frac{\rho_{\min}^+(\mathcal{T})}{p_{\max}(\mathcal{T})} \widetilde{L}_{sup}^\mu(\mathcal{T}, f)\right] \\[1mm]
           &\leq \widetilde{L}_{sun}(f) - \tau_k \underset{c^+,c_i^- \sim \rho^{k+1}}{\mathbb{E}}\left[\ell_{|I^+|}(\Vec{0})|I^+\neq \emptyset \right].
    \end{aligned}
\end{equation*}
\end{proof}

Equipped with the above lemmas, now we can turn to the proof of Theorem \ref{thm-k-general-bound}.
\begin{proof}[Proof of Theorem \ref{thm-k-general-bound}]
	From Lemma \ref{lma-k-adv-Lun(fhat)-leq-adv-Lun(f)} we know that with probability at least $1-\delta$ over the choice of the training set $\mathcal{S}$, $\forall{f \in \mathcal{F}}$:
	$$\widetilde{L}_{sun}(\widehat{f}) \leq \widetilde{L}_{sun}(f) + AG_M.$$
	Combing this with Lemma \ref{lma-k-adv-Lsup-le-adv-Lsun} directly yields \eqref{eq-k-general-bound}.
\end{proof}

\subsection{Proof of Theorem \ref{thm-of-thm-k-general-bound}} \label{apd-prf-thm-of-thm-k-general-bound}
\begin{customthm}{{\ref{thm-of-thm-k-general-bound}}}
    Suppose $\mathcal{C}$ is finite. For any $c \in \mathcal{C}$, $\rho(c) > 0$, and $\ell$ satisfies Assumption \ref{ass-loss-function}. Then, with a probability of at least $1-\delta$ over the choice of the training set $\mathcal{S}$, $\forall{f \in \mathcal{F}}$:
    \begin{equation*}
        \widetilde{\mathcal{L}}_{sup}(\widehat{f}) \leq \alpha(\rho)\left(\widetilde{L}_{sun}(f) + AG_M\right) - \beta,
    \end{equation*}
    where $\alpha(\rho) = \frac{1}{1-\tau_k} \underset{T distinct}{\underset{|\mathcal{T}|\leq k+1}{\max}}\frac{p_{\max}(\mathcal{T})}{\rho_{\min}^+(\mathcal{T})}$ is a positive constant depending on $\rho$ and
    \begin{equation*}
        \beta = \alpha(\rho) \tau_k \underset{c^+,c_i^- \sim \rho^{k+1}}{\mathbb{E}}\left[\ell_{|I^+|}(\Vec{0})|I^+\neq \emptyset\right].
    \end{equation*}
\end{customthm}

\begin{proof}
Since $\mathcal{C}$ is finite and $\rho(c) > 0$ for any $c \in \mathcal{C}$, we can see that:
\begin{equation} \label{eq-A9-1}
    \frac{1}{\alpha (1-\tau_k)} = \underset{|\mathcal{T}| \leq k+1, \mathcal{T} \text{distinct}}{\min} \ \ \left\{ \frac{\rho_{\min}^+(\mathcal{T})}{p_{\max}(\mathcal{T})} \right\}  > 0.
\end{equation}
By Theorem \ref{thm-k-general-bound} and the definition of $\widetilde{\mathcal{L}}_{sup}(f)$, we have: $\forall{f \in \mathcal{F}}$, with probability at least $1-\delta$ over the choice of the training set $\mathcal{S}$, 
\begin{equation*}
    \begin{aligned}
        \frac{1}{\alpha (1-\tau_k)} \widetilde{\mathcal{L}}_{sup}(\widehat{f}) &\overset{\sroman{1}}{=} \underset{\mathcal{T} \sim \mathcal{D}}{\mathbb{E}}\left[ \frac{1}{\alpha (1-\tau_k)} \widetilde{L}_{sup}(\mathcal{T}, \widehat{f})\right] \overset{\sroman{2}}{\leq} \underset{\mathcal{T} \sim \mathcal{D}}{\mathbb{E}}\left[\frac{\rho_{\min}^+(\mathcal{T})}{p_{\max}(\mathcal{T})} \widetilde{L}_{sup}(\mathcal{T}, \widehat{f})\right] \\[1mm]
        &\overset{\sroman{3}}{\leq} \frac{1}{1-\tau_k}\left(\widetilde{L}_{sun}(f) + AG_M\right)-\frac{\tau_k}{1-\tau_k} \ \ \underset{c^+,c_i^- \sim \rho^{k+1}}{\mathbb{E}}\left[\ell_{|I^+|}(\Vec{0})|I^+\neq \emptyset\right],
    \end{aligned}
\end{equation*}
where $\sroman{1}$ is from the definition of $\widetilde{\mathcal{L}}_{sup}(\widehat{f})$;$\sroman{2}$ is from \eqref{eq-A9-1} and $\sroman{3}$ is from Theorem \ref{thm-k-general-bound}. Rearranging yields the result.
\end{proof}

\subsection{Proof of Lemma \ref{lma-unit-ball-covering-number-bound}} \label{apd-prf-lma-unit-ball-covering-number-bound}
\begin{customlemma}{{\ref{lma-unit-ball-covering-number-bound}}}
Let $\mathbb{B}_p(r)$ be the $p$-norm ball in $\mathbb{R}^d$ with radius $r$. The $\delta$-covering number of $\mathbb{B}_p(r)$ with respect to $\Vert \cdot \Vert_p$ thus obeys the following bound:
$$
\mathcal{N}(\delta;\mathbb{B}_p(r),\Vert \cdot \Vert_p) \leq \left( 1 + \frac{2r}{\delta} \right)^d,
$$
where $\mathcal{N}(\delta;\mathbb{B},\Vert \cdot \Vert)$ is the $\delta$-covering number of $\mathbb{B}$ with respect to the norm $\Vert \cdot \Vert$.
\end{customlemma}
\begin{proof}
Set $\normvec{\cdot},\normvec{\cdot}^\prime$ in Lemma \ref{lma-hds-lma5.7} to be $\normvec{\cdot}_p$, we have:
\begin{equation*}
    \begin{aligned}
        \mathcal{N}(\delta;\mathbb{B}_p(1),\normvec{\cdot}_p) &\leq \frac{vol \left( \frac{2}{\delta} \mathbb{B}_p(1) + \mathbb{B}_p(1) \right)}{vol \left( \mathbb{B}_p(1) \right)} = \frac{vol \left( \left(1+\frac{2}{\delta} \right) \mathbb{B}_p(1) \right)}{vol \left( \mathbb{B}_p(1) \right)} \\[1mm]
        &\overset{\sroman{1}}{\leq} \left( 1+\frac{2}{\delta} \right)^d \frac{vol \left( \mathbb{B}_p(1) \right)}{vol \left( \mathbb{B}_p(1) \right)} = \left( 1+\frac{2}{\delta} \right)^d,
    \end{aligned}
\end{equation*}
where $\sroman{1}$ is true because $\mathbb{B}_p(1) \subset \mathbb{R}^d$.
Now suppose that $\{x^1,\cdots,x^N\}$ is the minimal $\delta$-covering of $\mathbb{B}_p(1)$, then:
\begin{equation*}
    \forall{x \in \mathbb{B}_p(1)}, \exists{x^i \in \{x^1,\cdots,x^N\}} \ \ s.t.  \normvec{x - x^i}_p \leq \delta.
\end{equation*}
So we have:
\begin{equation*}
    \forall{rx \in \mathbb{B}_p(r)}, \exists{rx^i \in \{rx^1,\cdots,rx^N\}} \ \ s.t.  \normvec{rx - rx^i}_p \leq r\delta.
\end{equation*}
So $\{rx^1,\cdots,rx^N\}$ is a $r\delta$-covering of $\mathbb{B}_p(r)$, so we have:
\begin{equation*}
    \mathcal{N}(r\delta;\mathbb{B}_p(r),\normvec{\cdot}_p) \leq \mathcal{N}(\delta;\mathbb{B}_p(1),\normvec{\cdot}_p) \leq \left( 1+\frac{2}{\delta} \right)^d.
\end{equation*}
Letting $\frac{\delta}{r}$ take place of $\delta$, we have $\mathcal{N}(\delta;\mathbb{B}_p(r),\normvec{\cdot}_p) \leq \left( 1+\frac{2r}{\delta} \right)^d.$
\end{proof}
\subsection{Proof of Theorem \ref{thm-linear-Rs(H)-bound}} \label{apd-prf-thm-linear-Rs(H)-bound}
\begin{customthm}{{\ref{thm-linear-Rs(H)-bound}}}
Let $\mathcal{U}(x)=\left\{ x^\prime | \normvec{x^\prime - x}_r \leq \epsilon \right\}$ (i.e. consider the $\ell_r$ attack). We then have :
\begin{equation*}
    \begin{aligned}
        &\mathcal{R}_\mathcal{S}(\mathcal{H}) = O\left( \left[ PP^* + \epsilon R^* s(r^*,p,m) \right]\left[ m s(p^*,p,m) w^2 \sqrt{M} \right]  \right),
    \end{aligned}
\end{equation*}
where $s(p,q,n) \coloneqq n^{\max\left\{\frac{1}{p}-\frac{1}{q},\frac{1}{q}-\frac{1}{p} \right\}}$, $\frac{1}{p} + \frac{1}{p^*} = 1,\frac{1}{r} + \frac{1}{r^*} = 1$, and $R^*$ is defined similarly to \eqref{eq-def-P}.
\end{customthm}
Before giving the proof of the theorem, we introduce some lemmas will be used in our proof.
\begin{lemma} \label{lma-vec-norm}
For any $x \in \mathbb{R}^n$,$0 < p_2 \leq p_1$, we have: $\normvec{x}_{p_1} \leq \normvec{x}_{p_2} \leq n^{\frac{1}{p_2}-\frac{1}{p_1}}\normvec{x}_{p_1}.$
\end{lemma}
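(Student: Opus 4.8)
The plan is to establish the two inequalities separately; both are classical monotonicity/comparison facts for $\ell_p$-norms, so no deep idea is needed, only careful bookkeeping of exponents. Throughout I may assume $x \neq 0$ (the case $x = 0$ is trivial) and, for the second inequality, that $p_2 < p_1$ (if $p_1 = p_2$ it is an equality); the case $p_1 = \infty$ can be disposed of directly at the end, since then $\normvec{x}_{p_1} = \max_i |x_i|$.

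For the first inequality $\normvec{x}_{p_1} \le \normvec{x}_{p_2}$, I would use positive homogeneity of the norms to rescale $x$ so that $\normvec{x}_{p_2} = 1$. Then $\sum_{i=1}^n |x_i|^{p_2} = 1$, which forces $|x_i| \le 1$ for every $i$; since $p_1 \ge p_2$ and $t \mapsto t^{p_1}$ lies below $t \mapsto t^{p_2}$ on $[0,1]$, we get $|x_i|^{p_1} \le |x_i|^{p_2}$ coordinatewise. Summing over $i$ and taking the $p_1$-th root yields $\normvec{x}_{p_1} \le 1 = \normvec{x}_{p_2}$, and undoing the rescaling gives the claim.

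For the second inequality $\normvec{x}_{p_2} \le n^{1/p_2 - 1/p_1}\normvec{x}_{p_1}$, I would apply H\"older's inequality to the factorization $|x_i|^{p_2} = |x_i|^{p_2} \cdot 1$ with the conjugate pair of exponents $p_1/p_2$ and $p_1/(p_1 - p_2)$, obtaining $\sum_i |x_i|^{p_2} \le \big(\sum_i |x_i|^{p_1}\big)^{p_2/p_1}\, n^{(p_1 - p_2)/p_1}$. Raising both sides to the power $1/p_2$ and simplifying the exponent of $n$, namely $\frac{p_1 - p_2}{p_1 p_2} = \frac{1}{p_2} - \frac{1}{p_1}$, completes the proof. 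The only "obstacle" here is purely clerical: getting the H\"older conjugate exponent and the resulting power of $n$ right, and handling the degenerate endpoints; for $p_1 = \infty$ the first inequality reads $\max_i |x_i| \le \normvec{x}_{p_2}$ and the second reads $\normvec{x}_{p_2} \le n^{1/p_2}\max_i |x_i|$, both immediate.
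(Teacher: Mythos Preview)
Your proof is correct. Both inequalities are established cleanly, and your treatment of the endpoint $p_1=\infty$ is a nice touch that the paper leaves implicit.

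Your route differs from the paper's in both halves. For $\normvec{x}_{p_1}\le\normvec{x}_{p_2}$, the paper shows that $p\mapsto\bigl(\sum_i a_i^p\bigr)^{1/p}$ is non-increasing by taking logarithms and computing a derivative, then checking its sign via an explicit decomposition; your homogeneity-and-rescale argument (normalize so that $\normvec{x}_{p_2}=1$, then use $|x_i|\le 1\Rightarrow |x_i|^{p_1}\le |x_i|^{p_2}$) is shorter and avoids calculus entirely. For $\normvec{x}_{p_2}\le n^{1/p_2-1/p_1}\normvec{x}_{p_1}$, the paper invokes Jensen's inequality with the convex function $t\mapsto t^{p_1/p_2}$ and equal weights $1/n$, whereas you apply H\"older with the conjugate pair $(p_1/p_2,\,p_1/(p_1-p_2))$; the two are equivalent here (H\"older with one constant factor is Jensen in disguise), so this is a difference in packaging rather than substance. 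Overall your argument is the more elementary of the two and would serve equally well in the paper.
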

\begin{proof}[Proof of Lemma \ref{lma-vec-norm}]
Firstly, we prove $\normvec{x}_{p_1} \leq \normvec{x}_{p_2}$.
For any $x \in \mathbb{R}^n$, suppose $a_i = |x_i|$, lep $f(p) = \left( \sum_{i=1}^n a_i^p \right)^{\frac{1}{p}}$. In order to prove $\normvec{x}_{p_1} \leq \normvec{x}_{p_2}$, it suffices to prove that: $\forall{a_i \ge 0, i=1,2,\dots,n}, f(p)$ is non-increasing on $(0,1]$.
\begin{enumerate}
    \item if $a_i = 0, \forall{i}$, f is a constant function, so f is non-increasing.
    \item if $\exists{i}, a_i \neq 0$, suppose $\{a_i| a_i \neq 0\}$ has $K$ elements, without loss of generality, suppose $\{a_i| a_i \neq 0\} = \{a_1, a_2,\cdots, a_K \}$ and $a_1 = \underset{1\leq k \leq K}{\max} \{a_k\}$, then we have:
    \begin{equation} \label{eq-A11-1}
        0 < \left( \frac{a_k}{a_1} \right)^p \leq 1, \sum_{k=1}^K \left( \frac{a_k}{a_1} \right)^p \ge 1, ln \left( \sum_{k=1}^K \left( \frac{a_k}{a_1} \right)^p \right) \ge 0, k=1,2,\dots,K.
    \end{equation}
    We can write $f(p) = \left( \sum_{i=1}^n a_i^p \right)^{\frac{1}{p}} = a_1 \left( \sum_{k=1}^K \left( \frac{a_k}{a_1} \right)^p \right)^{\frac{1}{p}} = a_1 exp\{ \frac{1}{p} ln \left( \sum_{k=1}^K \left( \frac{a_k}{a_1} \right)^p \right) \}$, let $g(p) =  \frac{1}{p} ln \left( \sum_{k=1}^K \left( \frac{a_k}{a_1} \right)^p \right)$, by the monotone property of composite functions, to prove $f(p)$ is non-increasing, it suffices to prove $g(p)$ is non-increasing.Taking derivation of $g$ yields:
    \begin{equation*}
        g^\prime(p) = \frac{\sum_{k=1}^K \left[ \left( \frac{a_k}{a_1} \right)^p ln \left( \frac{a_k}{a_1} \right) \right]}{p \sum_{k=1}^K \left( \frac{a_k}{a_1} \right)^p} - \frac{ln \left( \sum_{k=1}^K \left( \frac{a_k}{a_1} \right)^p \right)}{p^2}.
    \end{equation*}
    From \eqref{eq-A11-1} we know that $\frac{\sum_{k=1}^K \left[ \left( \frac{a_k}{a_1} \right)^p ln \left( \frac{a_k}{a_1} \right) \right]}{p \sum_{k=1}^K \left( \frac{a_k}{a_1} \right)^p} \leq 0$ and $\frac{ln \left( \sum_{k=1}^K \left( \frac{a_k}{a_1} \right)^p \right)}{p^2} \ge 0$, so we have $g^\prime(p) \leq 0$, so $g$ is non-increasing, which means that $\normvec{x}_{p_1} \leq \normvec{x}_{p_2}, \forall{x \in \mathbb{R}^n}$.
\end{enumerate}
Nextly, we prove that $\normvec{x}_{p_2} \leq n^{\frac{1}{p_2}-\frac{1}{p_1}}\normvec{x}_{p_1}$.
By the definition of $\normvec{\cdot}_p$, we have: $\normvec{x}_{p_2}^{p_1} = \left( \sum_{i=1}^n |x_i|^{p_2} \right)^{\frac{p_1}{p_2}} = \left( \frac{1}{n} \sum_{i=1}^n |x_i|^{p_2} \right)^{\frac{p_1}{p_2}} \cdot n^{\frac{p_1}{p_2}}$.Since $p_2 \leq p_1$, i.e. $\frac{p_1}{p_2} \ge 1$, we know the function $h(t) = t^{\frac{p_1}{p_2}}$ is convex. By Jensen's Inequality we know that:
\begin{equation} \label{ieq-A11-2}
    \forall{\lambda_i\  s.t. \ \lambda_i \ge 0\  and\  \sum_{i=1}^n \lambda_i = 1},\forall{t_i \in \mathbb{R}}: h\left( \sum_{i=1}^n \lambda_i t_i \right) \leq \sum_{i=1}^n \lambda_i h(t_i).
\end{equation}
Set $\lambda_1 = \lambda_2 = \cdots = \lambda_n = \frac{1}{n}, t_i = |x_i|^{p_2}, i = 1,2,\dots,n$ in \eqref{ieq-A11-2}, we have:
\begin{equation*}
    \left( \frac{1}{n} \sum_{i=1}^n |x_i|^{p_2} \right)^{\frac{p_1}{p_2}} \leq \frac{1}{n} \sum_{i=1}^n \left( |x_i|^{p_2} \right)^{\frac{p_1}{p_2}} = \frac{1}{n} \sum_{i=1}^n |x_i|^{p_1}.
\end{equation*}
Multiplying both sides by $n^{\frac{p_1}{p_2}}$ yields $\left( \frac{1}{n} \sum_{i=1}^n |x_i|^{p_2} \right)^{\frac{p_1}{p_2}} \cdot n^{\frac{p_1}{p_2}} \leq \frac{1}{n} \sum_{i=1}^n |x_i|^{p_1} \cdot n^{\frac{p_1}{p_2}},$
i.e.
\begin{equation*}
    \normvec{x}_{p_2}^{p_1} \leq n^{\frac{p_1}{p_2} - 1} \cdot \normvec{x}_{p_1}^{p_1}.
\end{equation*}
Taking both sides the $p_1$th power yields $\normvec{x}_{p_2} \leq n^{\frac{1}{p_2}-\frac{1}{p_1}}\normvec{x}_{p_1}.$
\end{proof}
\begin{lemma} \label{lma-vec-norm-to-mat-norm}
Suppose that $\exists{A, B}\in \mathbb{R^+} \  s.t. \  A \normvec{x}_{p} \leq \normvec{x}_{q} \leq B \normvec{x}_{p} \  (p,q \ge 1)$ for any $x \in \mathbb{R}^n$, then $\forall{W \in \mathbb{R}^{m\times n}}$: $\normmm{W}_p \leq \frac{B}{A} \normmm{W}_q.$
\end{lemma}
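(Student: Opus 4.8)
The symbol $\normmm{W}_p$ here denotes the operator norm of $W$ induced by the $\ell_p$ vector norm, i.e. $\normmm{W}_p = \sup_{x \neq 0} \normvec{Wx}_p / \normvec{x}_p$. The plan is to prove the bound by a short chain of inequalities for an arbitrary fixed nonzero $x \in \mathbb{R}^n$, and then pass to the supremum over $x$.

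First I would apply the left half of the hypothesis to the image vector $Wx \in \mathbb{R}^m$: since $A\normvec{y}_p \leq \normvec{y}_q$ holds for the relevant vectors $y$ (the equivalence of the $\ell_p$- and $\ell_q$-norms is available in both the domain and codomain of $W$, cf.\ Lemma \ref{lma-vec-norm}; the constants $A, B$ in the hypothesis are understood to serve on both sides), we get $\normvec{Wx}_p \leq \frac{1}{A}\normvec{Wx}_q$. Next, by the definition of the operator norm $\normmm{W}_q$, $\normvec{Wx}_q \leq \normmm{W}_q \normvec{x}_q$. Finally, the right half of the hypothesis, applied to $x \in \mathbb{R}^n$, gives $\normvec{x}_q \leq B\normvec{x}_p$. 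Concatenating these three bounds yields
$$\normvec{Wx}_p \leq \frac{1}{A}\normvec{Wx}_q \leq \frac{1}{A}\normmm{W}_q\normvec{x}_q \leq \frac{B}{A}\normmm{W}_q\normvec{x}_p.$$

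Since $x \neq 0$ implies $\normvec{x}_p > 0$, I would divide through by $\normvec{x}_p$ and take the supremum over all nonzero $x \in \mathbb{R}^n$, obtaining $\normmm{W}_p \leq \frac{B}{A}\normmm{W}_q$, which is the claim. There is no substantive obstacle in this argument; the only point meriting care is that the hypothesis is stated for vectors in $\mathbb{R}^n$ yet must also be invoked for $Wx \in \mathbb{R}^m$. This is harmless, because the inequality between the $\ell_p$- and $\ell_q$-norms is dimension-agnostic in form (only the constants depend on dimension), so one simply takes $A, B$ to be valid on both $\mathbb{R}^n$ and $\mathbb{R}^m$, which is exactly how the lemma is applied in the proof of Theorem \ref{thm-linear-Rs(H)-bound}.
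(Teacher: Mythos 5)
Your proof is correct and follows essentially the same route as the paper's: both combine $\normvec{Wx}_p \le \frac{1}{A}\normvec{Wx}_q$ (the left half of the hypothesis applied to the image vector) with $\normvec{x}_q \le B\normvec{x}_p$ and then pass to the supremum — the paper merely rescales the constraint set of the maximum instead of dividing through by $\normvec{x}_p$, which is cosmetic. Your explicit observation that the left inequality must also be valid on $\mathbb{R}^m$ for $Wx$ is a point the paper's proof silently glosses over, and your resolution (take $A,B$ valid in both dimensions) is the right one.
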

\begin{proof}[Proof of Lemma \ref{lma-vec-norm-to-mat-norm}]
If we have $A \normvec{x}_{p} \leq \normvec{x}_{q} \leq B \normvec{x}_{p}$, then:
\begin{equation*}
    \begin{aligned}
        \normmm{W}_p &= \underset{\normvec{x}_p \leq 1}{\max} \normvec{W x}_p \overset{\sroman{1}}{\leq} \underset{\normvec{x}_p \leq B}{\max} \normvec{W x}_p \overset{\sroman{2}}{\leq} \underset{\normvec{x}_p \leq B}{\max} \frac{1}{A} \normvec{W x}_q = \underset{\normvec{x}_p \leq 1}{\max} \frac{B}{A} \normvec{W x}_q = \frac{B}{A} \normmm{W}_q,
    \end{aligned}
\end{equation*}
where $\sroman{1}$ is from the fact that $\normvec{x}_{q} \leq B \normvec{x}_{p}$, which means that $\{ x | \normvec{x}_p \leq 1 \} \subseteq \{ x | \normvec{x}_q \leq 1 \}$ and $\sroman{2}$ is from the condition that $A \normvec{x}_{p} \leq \normvec{x}_{q}$.
\end{proof}
\begin{lemma} \label{lma-norm-difference}
For any $p, q \ge 1$, then $\forall{W \in \mathbb{R}^{m\times n}}$: $\normmm{W}_q \leq n^{\max\{\frac{1}{p} - \frac{1}{q}, \frac{1}{q} - \frac{1}{p} \}} \normmm{W}_p.$
\end{lemma}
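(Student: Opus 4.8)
The plan is to reduce the operator-norm comparison to the vector-norm equivalence of Lemma \ref{lma-vec-norm} and then lift it with Lemma \ref{lma-vec-norm-to-mat-norm}. First I would observe that $\max\{\tfrac1p-\tfrac1q,\tfrac1q-\tfrac1p\}=\lvert\tfrac1p-\tfrac1q\rvert$, so the claimed exponent is symmetric in $p$ and $q$ and the inequality is trivially an equality (with exponent $0$) when $p=q$. It therefore suffices to handle the two cases $q\le p$ and $q\ge p$ separately.

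In the case $q\le p$, I would apply Lemma \ref{lma-vec-norm} with $(p_1,p_2)=(p,q)$ to obtain $\normvec{x}_p\le\normvec{x}_q\le n^{1/q-1/p}\normvec{x}_p$ for every $x\in\mathbb{R}^n$. Rewriting this chain as $n^{-(1/q-1/p)}\normvec{x}_q\le\normvec{x}_p\le\normvec{x}_q$ puts it in the form required by Lemma \ref{lma-vec-norm-to-mat-norm} with the roles of $p$ and $q$ interchanged, i.e.\ with $A=n^{-(1/q-1/p)}$ and $B=1$; that lemma then gives $\normmm{W}_q\le\frac{B}{A}\normmm{W}_p=n^{1/q-1/p}\normmm{W}_p$, and since $q\le p$ we have $1/q-1/p=\lvert\tfrac1p-\tfrac1q\rvert$.

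In the case $q\ge p$ the argument is symmetric: Lemma \ref{lma-vec-norm} with $(p_1,p_2)=(q,p)$ yields $\normvec{x}_q\le\normvec{x}_p\le n^{1/p-1/q}\normvec{x}_q$, which is the hypothesis of Lemma \ref{lma-vec-norm-to-mat-norm} (again with $p$ and $q$ swapped) with $A=1$ and $B=n^{1/p-1/q}$, so $\normmm{W}_q\le n^{1/p-1/q}\normmm{W}_p$, and here $1/p-1/q=\lvert\tfrac1p-\tfrac1q\rvert$. Combining the two cases gives $\normmm{W}_q\le n^{\max\{1/p-1/q,\,1/q-1/p\}}\normmm{W}_p$ for all $W\in\mathbb{R}^{m\times n}$.

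The proof is essentially bookkeeping, so there is no real obstacle; the only points that require attention are keeping the direction of the vector-norm inequalities consistent when feeding them into Lemma \ref{lma-vec-norm-to-mat-norm} — in particular applying that lemma with $p$ and $q$ exchanged, since it bounds $\normmm{\cdot}_p$ by $\normmm{\cdot}_q$ whereas we want the reverse — and checking in each case that the power of $n$ that emerges is the larger of the two exponents.
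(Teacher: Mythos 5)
Your proposal is correct and follows essentially the same route as the paper: split into the cases $q\le p$ and $q\ge p$, invoke Lemma \ref{lma-vec-norm} for the vector-norm equivalence, and lift it to operator norms via Lemma \ref{lma-vec-norm-to-mat-norm} with the roles of $p$ and $q$ exchanged. The bookkeeping of the constants $A$ and $B$ and of the resulting exponents of $n$ is handled correctly in both cases.
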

\begin{remark}
If we denote $n^{\max\{\frac{1}{p} - \frac{1}{q} , \frac{1}{q} - \frac{1}{p}\}}$ by $s(p,q,n)$, then we have: $\forall{W \in \mathbb{R}^{m\times n}}, \normmm{W}_q \leq s(p,q,n) \normmm{W}_p$.
\end{remark}
\begin{proof} [Proof of Lemma \ref{lma-norm-difference}]
We discuss it in two cases.
\begin{enumerate}
    \item If $p \leq q$, by Lemma \ref{lma-vec-norm}, we have: $\forall{x \in \mathbb{R}^n}$, $\normvec{x}_{q} \leq \normvec{x}_{p} \leq n^{\frac{1}{p}-\frac{1}{q}}\normvec{x}_{q}.$
    By Lemma \ref{lma-vec-norm-to-mat-norm}, we know that $\forall{W \in \mathbb{R}^{m \times n}}$, $\normmm{W}_q \leq n^{\frac{1}{p} - \frac{1}{q}} \normmm{W}_p.$
    \item If $p \ge q$, by Lemma \ref{lma-vec-norm}, we have: $\forall{x \in \mathbb{R}^n}$, $\normvec{x}_{p} \leq \normvec{x}_{q} \leq n^{\frac{1}{q}-\frac{1}{p}}\normvec{x}_{p},$
    i.e.
    \begin{equation*}
        n^{\frac{1}{p}-\frac{1}{q}} \normvec{x}_{q} \leq \normvec{x}_{p} \leq \normvec{x}_{q}.
    \end{equation*}
    By Lemma \ref{lma-vec-norm-to-mat-norm}, we know that $\forall{W \in \mathbb{R}^{m \times n}}$, $\normmm{W}_q \leq n^{\frac{1}{q} - \frac{1}{p}} \normmm{W}_p.$
\end{enumerate}
So we conclude that $\forall{W \in \mathbb{R}^{m \times n}}, \normmm{W}_q \leq n^{\max\{\frac{1}{p} - \frac{1}{q} , \frac{1}{q} - \frac{1}{p}\}} \normmm{W}_p$.
\end{proof}
Now we turn to bound the Rademacher complexity of $\mathcal{H}$, where
\begin{equation*}
    \begin{aligned}
        \mathcal{H} &= \left\{ h(x,x^+,x^-) = \underset{x^\prime \in \mathcal{U}(x)}{\min} \left( f(x^\prime)^T\left( f(x^+) - f(x^-) \right) \right) |f \in \mathcal{F} \right\}, \\[1mm]
        \mathcal{F} &= \{f: x \xrightarrow[]{} Wx | W \in \mathbb{R}^{n\times m},\normmm{W}_p \leq w\}.
    \end{aligned}
\end{equation*}
Let 
\begin{equation*}
    \begin{aligned}
        \mathcal{H}_0 &= \left\{ h(x,x^+,x^-) = f(x)^T\left( f(x^+) - f(x^-) \right)|f \in \mathcal{F} \right\}\\[1mm]
        &= \left\{ (x,x^+,x^-)\xrightarrow[]{} x^TW^TW(x^+-x^-) \big| W \in \mathbb{R}^{n \times m},\normmm{W}_p \leq w \right\}.
    \end{aligned}
\end{equation*}

\begin{proof} [Proof of Theorem \ref{thm-linear-Rs(H)-bound}]
Firstly, we drop the complicated and unwieldy min operate by directly solving the minimization problem.
\begin{equation*}
    \begin{aligned}
        h(x,x^+,x^-) &= \underset{\normvec{x^\prime - x}_r \leq \epsilon}{\min} f(x^\prime)^T \left( f(x^+) - f(x^-) \right) = \underset{\normvec{\delta}_r \leq \epsilon}{\min} \left( W(x+\delta) \right)^T \left( Wx^+ - Wx^- \right) \\[1mm]
        &= x^TW^TW(x^+-x^-) + \underset{\normvec{\delta}_r \leq \epsilon}{\min} \delta^T W^T W(x^+ - x^-) \\[1mm]
        &\overset{\sroman{1}}{=} x^TW^TW(x^+-x^-) - \epsilon \normvec{W^TW(x^+-x^-)}_{r^*},
    \end{aligned}
\end{equation*}
where $\sroman{1}$ is from Holder's Inequality and taking the value of $\delta$ that can get equality.
Define:
\begin{equation*}
    \begin{aligned}
        \mathcal{H}_1 &= \left\{ (x,x^+,x^-) \xrightarrow[]{} \normvec{W^TW(x^+-x^-)}_{r^*} \big| W \in \mathbb{R}^{n \times m},\normmm{W}_p \leq w \right\}, \\[1mm]
        \mathcal{H}_0^w &= \left\{ (x,x^+,x^-)\xrightarrow[]{} x^TA(x^+-x^-) \big| A \in \mathbb{R}^{m \times m},\normmm{A}_p \leq w \right\}, \\[1mm]
        \mathcal{H}_1^w &= \left\{ (x,x^+,x^-) \xrightarrow[]{} \normvec{A(x^+-x^-)}_{r^*} \big| A \in \mathbb{R}^{m \times m},\normmm{A}_p \leq w \right\}.
    \end{aligned}
\end{equation*}
Since $\forall{W \in \mathbb{R}^{n \times m}} \ s.t.\  \normmm{W}_p \leq w$, we have:
\begin{equation*}
    \normmm{W^TW}_p \overset{\sroman{1}}{\leq} \normmm{W^T}_p \cdot \normmm{W}_p = \normmm{W}_{p^*} \cdot \normmm{W}_p \overset{\sroman{2}}{\leq} s(p^*,p,m) \normmm{W}_p^2 \leq s(p^*,p,m) w^2,
\end{equation*}
where $\frac{1}{p} + \frac{1}{p^*} = 1$ and $\sroman{1}$ comes from the Submultiplicativity of matrix norm and $\sroman{2}$ is from Lemma \ref{lma-norm-difference}. So we have:
\begin{equation*}
    \left\{ W \Big| \normmm{W}_p \leq w \right\} \subseteq \left\{ W \Big| \normmm{W^TW}_p \leq s(p^*,p,m) w^2 \right\}.
\end{equation*}
And since $W^TW$ is positive semi-definite, it's easy to see that:
\begin{equation*}
    \left\{ W^TW \Big| \normmm{W^TW}_p \leq s(p^*,p,m) w^2 \right\} \subseteq \left\{ A \Big| \normmm{A}_p \leq s(p^*,p,m) w^2 \right\}.
\end{equation*}
So we have:
\begin{equation*}
    \begin{aligned}
        \mathcal{H}_0 &= \left\{ (x,x^+,x^-)\xrightarrow[]{} x^TW^TW(x^+-x^-) \big| W \in \mathbb{R}^{n \times m},\normmm{W}_p \leq w \right\} \\[1mm]
        &\subseteq \left\{ (x,x^+,x^-)\xrightarrow[]{} x^TW^TW(x^+-x^-) \big| W \in \mathbb{R}^{n \times m},\normmm{W^TW}_p \leq s(p^*,p,m) w^2 \right\} \\[1mm]
        &\subseteq \left\{ (x,x^+,x^-)\xrightarrow[]{} x^TA(x^+-x^-) \big| A \in \mathbb{R}^{m \times m},\normmm{A}_p \leq s(p^*,p,m) w^2 \right\} = \mathcal{H}_0^{s(p^*,p,m) w^2}.
    \end{aligned}
\end{equation*}
Similarly, $\mathcal{H}_1 \subseteq \mathcal{H}_1^{s(p^*,p,m) w^2}.$
Given the training set $\mathcal{S}=\{(x_i,x_i^+,x_i^-)\}_{i=1}^M$, we have:
\begin{equation} \label{ieq-dvd-Rs(H)}
    \begin{aligned}
        \mathcal{R}_\mathcal{S}(\mathcal{H}) &= \underset{\bm{\sigma}}{\mathbb{E}} \left\{ \underset{\normmm{W}_p \leq w}{\sup} \sum_{i=1}^M \bm{\sigma}_i \left[ x_i^TW^TW(x_i^+ - x_i^-) - \epsilon \normvec{W^TW(x_i^+ - x_i^-)}_{r^*}  \right] \right\} \\[1mm]
        &\overset{\sroman{1}}{\leq} \underset{\bm{\sigma}}{\mathbb{E}} \left\{ \underset{\normmm{W}_p \leq w}{\sup} \sum_{i=1}^M \bm{\sigma}_i \left[ x_i^TW^TW(x_i^+ - x_i^-) \right] \right\}\\[1mm]
        &\ \ \ \ \ \ \ \ \ \ \ \ \ \ \ \ \ \ \ \ \ \ \ \ \ + \epsilon \  \underset{\bm{\sigma}}{\mathbb{E}} \left\{ \underset{\normmm{W}_p \leq w}{\sup} \sum_{i=1}^M \left[ -\bm{\sigma}_i \normvec{W^TW(x_i^+ - x_i^-)}_{r^*} \right] \right\} \\[1mm]
        &\overset{\sroman{2}}{=} \underset{\bm{\sigma}}{\mathbb{E}} \left\{ \underset{\normmm{W}_p \leq w}{\sup} \sum_{i=1}^M \bm{\sigma}_i \left[ x_i^TW^TW(x_i^+ - x_i^-) \right] \right\}\\[1mm]
        &\ \ \ \ \ \ \ \ \ \ \ \ \ \ \ \ \ \ \ \ \ \ \ \ \ + \epsilon \  \underset{\bm{\sigma}}{\mathbb{E}} \left\{ \underset{\normmm{W}_p \leq w}{\sup} \sum_{i=1}^M \left[\bm{\sigma}_i \normvec{W^TW(x_i^+ - x_i^-)}_{r^*} \right] \right\} \\[1mm]
        &\overset{\sroman{3}}{=} \mathcal{R}_\mathcal{S}(\mathcal{H}_0) + \epsilon \  \mathcal{R}_\mathcal{S}(\mathcal{H}_1) \overset{\sroman{4}}{\leq} \mathcal{R}_\mathcal{S}(\mathcal{H}_0^{s(p^*,p,m) w^2}) + \epsilon \  \mathcal{R}_\mathcal{S}(\mathcal{H}_1^{s(p^*,p,m) w^2}),
    \end{aligned}
\end{equation}
where $\bm{\sigma}$ is a Random Vector whose elements are i.i.d. Rademacher Random Variables and $\sroman{1}$ comes from the subadditivity of sup function;$\sroman{2}$ is from the fact that $-\bm{\sigma}_i$ has the same distribution as $\bm{\sigma}_i$, $\forall{i = 1,2,\dots,M}$; $\sroman{3}$ is by the definition of $\mathcal{R}_\mathcal{S}(\mathcal{H}_0)$ and $\mathcal{R}_\mathcal{S}(\mathcal{H}_1)$ and $\sroman{4}$ is from the monotone property of Rademacher Complexity and the fact that $\mathcal{H}_0 \subseteq \mathcal{H}_0^{s(p^*,p,m) w^2}$ and $\mathcal{H}_1 \subseteq \mathcal{H}_1^{s(p^*,p,m) w^2}$.
Secondly, we upper bound the Rademacher Complexity of $\mathcal{H}_0^w$ and $\mathcal{H}_1^w$.
\textbf{For} \bm{$\mathcal{R}_\mathcal{S}(\mathcal{H}_0^w)$}:
Given the training set $\mathcal{S}=\{(x_i,x_i^+,x_i^-)\}_{i=1}^M = \{(z_i)\}_{i=1}^M$, define the $\ell_2$-norm for a function in $\mathcal{H}_0^w$ as:
\begin{equation*}
    \forall{h \in \mathcal{H}_0^w}, \  \normvec{h}_2 \coloneqq \sqrt{\sum_{i=1}^M \left[ h(z_i) \right]^2},
\end{equation*}
Define $\mathcal{H}_0^w(\mathcal{S}) = \left\{\left(h(z_1),h(z_2),\dots,h(z_M)\right) \Big| h \in \mathcal{H}_0^w \right\}$, we have that for any $h \in \mathcal{H}_0^w$ and $v_h = \left(h(z_1),h(z_2),\dots,h(z_M)\right)$ be the corresponding vector in $\mathcal{H}_0^w(\mathcal{S})$, we have:
\begin{equation*}
    \normvec{h}_2 = \normvec{v_h}_2.
\end{equation*}
So we know that any $\delta$-covering of $\mathcal{H}_0^w$ ($\left\{ h^1, h^2, \cdots, h^N \right\}$) with respect to $\ell_2$ norm in the functional space, corresponds to a $\delta$-covering of $\mathcal{H}_0^w(\mathcal{S})$ with respect to the $\ell_2$ norm in the Euclidean Space, i.e.
\begin{equation*}
    \left\{ \left( \begin{array}{c}  h^1(z_1)  \\  h^1(z_2) \\  \vdots \\  h^1(z_M) \end{array} \right) ,
    \left( \begin{array}{c}  h^2(z_1)  \\  h^2(z_2) \\  \vdots \\  h^2(z_M) \end{array} \right), \cdots, 
    \left( \begin{array}{c}  h^N(z_1)  \\  h^N(z_2) \\  \vdots \\  h^N(z_M) \end{array} \right) \right\}.
\end{equation*}
So we have: 
\begin{equation} \label{eq-A11-3}
    \mathcal{N}\left(\delta; \mathcal{H}_0^w(\mathcal{S}), \normvec{\cdot}_2 \right) = \mathcal{N}\left(\delta; \mathcal{H}_0^w, \normvec{\cdot}_2 \right).
\end{equation}
By the definition of Rademacher Complexity, we know that $\mathcal{R}_\mathcal{S}(\mathcal{H}_0^w)$ is just the expectation of the Rademacher Process with respect to $\mathcal{H}_0^w(\mathcal{S})$, which is $\mathbb{E}[\underset{\theta \in \mathcal{H}_0^w(\mathcal{S})}{\sup} X_\theta]$.
To use Lemma \ref{lma-dudley-int}, we must show that Rademacher Process is a sub-Gaussian Process with respect to some metric $\rho_X$. Denote the Euclidean metric by $\rho_2$, we have: for Rademacher Process $\{X_\theta, \theta \in \mathbb{T}\}$, $\forall{\theta, \widetilde{\theta} \in \mathbb{T} \ and \ \lambda \in \mathbb{R}}$:
\begin{equation*}
    \begin{aligned}
        \mathbb{E} \left[ e^{\lambda (X_\theta-X_{\widetilde{\theta}})} \right] &\overset{\sroman{1}}{=} \mathbb{E} \left[ e^{\lambda \left(\left<\bm{\sigma},\theta\right>-\left<\bm{\sigma},\widetilde{\theta}\right>\right)} \right] = \mathbb{E} \left[ e^{\lambda \left(\left<\bm{\sigma},\theta-\widetilde{\theta}\right>\right)} \right] \overset{\sroman{2}}{\leq} \prod_{i=1}^M \underset{\bm{\sigma}_i}{\mathbb{E}} \left[ e^{\lambda \bm{\sigma}_i (\theta_i - \widetilde{\theta}_i)} \right] \\[1mm]
        &\overset{\sroman{3}}{\leq} \prod_{i=1}^M e^{\frac{ \lambda^2 (\theta_i - \widetilde{\theta}_i)^2}{2}} = e^{\frac{\lambda^2}{2} \sum_{i=1}^M (\theta_i - \widetilde{\theta}_i)^2 } = e^{\frac{\lambda^2}{2} \rho_X^2(\theta,\widetilde{\theta}) },
    \end{aligned}
\end{equation*}
where $\bm{\sigma}$ is a Random Vector whose elements are i.i.d. Rademacher Random Variables and $\sroman{1}$ is from the definition of Rademacher Process; $\sroman{2}$ is from the expectation property of i.i.d. random variables and $\sroman{3}$ is from Example 2.3 in \cite{wainwright2019high}. So we proved that the Rademacher Process is a sub-Gaussian Process with respect to the Euclidean metric $\rho_2$.
So by Lemma \ref{lma-dudley-int} and \eqref{ieq-dudley-remark}, we know that: $\forall{\delta \in (0,D]}$,
\begin{equation*}
    \begin{aligned}
        \mathcal{R}_\mathcal{S}(\mathcal{H}_0^w) \!=\! \mathbb{E}[\underset{\theta \in \mathcal{H}_0^w(\mathcal{S})}{\sup} X_\theta] \!\leq\! \mathbb{E}\left[ \underset{\theta,\widetilde{\theta} \in \mathcal{H}_0^w(\mathcal{S})}{\sup} \!(X_\theta \!-\! X_{\widetilde{\theta}})\! \right] \!\leq\! 2 \mathbb{E} \left[ \underset{\underset{\rho_X(\gamma,\gamma^\prime)\leq \delta}{\gamma,\gamma^\prime \in \mathcal{H}_0^w(\mathcal{S})}}{\sup}\!(X_\gamma \!-\! X_{\gamma^\prime})\! \right] \!+\! 32 \mathcal{J}(\delta/4; D),
    \end{aligned}
\end{equation*}
where
\begin{equation} \label{ieq-A11-4}
    \begin{aligned}
        D &= \underset{\theta, \theta^\prime \in \mathcal{H}_0^w(\mathcal{S})}{\sup} \normvec{\theta - \theta^\prime}_2  \leq 2 \underset{\theta \in \mathcal{H}_0^w(\mathcal{S})}{\sup} \normvec{\theta}_2 = 2 \underset{h \in \mathcal{H}_0^w}{\sup} \normvec{h}_2 = 2 \underset{h \in \mathcal{H}_0^w}{\sup} \sqrt{\sum_{i=1}^M \left[ h(z_i) \right]^2} \\[1mm]
        &\overset{\sroman{1}}{\leq} 2 \sqrt{M} \underset{1 \leq i \leq M}{\underset{\normmm{A}_p \leq w}{\sup}} | x_i^T A (x_i^+ - x_i^-) |  \overset{\sroman{2}}{\leq} 2 \sqrt{M} \underset{1 \leq i \leq M}{\underset{\normmm{A}_p \leq w}{\sup}} \ \normvec{x_i}_{p^*} \normvec{A (x_i^+ - x_i^-)}_p \\[1mm]
        &\overset{\sroman{3}}{\leq} 2 \sqrt{M} \underset{1 \leq i \leq M}{\underset{\normmm{A}_p \leq w}{\sup}} \ \normvec{x_i}_{p^*} \normmm{A}_p \normvec{x_i^+ - x_i^-}_p \overset{\sroman{4}}{\leq} 4 \sqrt{M} P^* P w,
    \end{aligned}
\end{equation}
and $\mathcal{J}(a;b) = \int_a^b \sqrt{ ln \mathcal{N}\left(u; \mathcal{H}_0^w(\mathcal{S}), \normvec{\cdot}_2 \right)} du = \int_a^b \sqrt{ln \mathcal{N}\left(u; \mathcal{H}_0^w, \normvec{\cdot}_2 \right)} du$. Where $\sroman{1}$ is from the definition of $f$; $\sroman{2}$ is from the Holder's Inequality; $\sroman{3}$ is a result of properties of matrix norm and $\sroman{4}$ is from the definition of $P^*$ and $P$.
Similar to the discussion of upper bound for $D$, for all $h_1, h_2 \in \mathcal{H}_0^w$, we have:
\begin{equation} \label{ieq-A11-5}
    \begin{aligned}
        \normvec{h_1 - h_2}_2 &= \sqrt{\sum_{i=1}^M \left[ h_1(z_i) - h_2(z_i) \right]^2} \leq \sqrt{M} \underset{1 \leq i \leq M}{\sup} | h_1(z_i) - h_2(z_i) | \\[1mm]
        &= \sqrt{M} \underset{1 \leq i \leq M}{\sup} | x_i^T (A_1 - A_2) (x_i^+ - x_i^-) | \\[1mm]
        &\overset{\sroman{1}}{\leq} \sqrt{M} \underset{1 \leq i \leq M}{\sup} \normvec{x_i}_{p^*} \ \normmm{A_1 - A_2}_p \normvec{x_i^+ - x_i^-}_p \overset{\sroman{2}}{\leq} 2 \sqrt{M} P^* P \normmm{A_1 - A_2}_p,
    \end{aligned}
\end{equation}
where $A_1,A_2$ are the matrices corresponding to $h_1,h_2$, respectively and $\sroman{1}$ is from the same argument as $\sroman{2},\sroman{3}$ in \eqref{ieq-A11-4} and $\sroman{2}$ is from the definition of $P^*$ and $P$.
Suppose $N_A  = \left\{ A^1,\cdots, A^N \right\}$ is a $\frac{\delta}{2 P P^* \sqrt{M}}$-covering of $S_A = \left\{ A \Big| \normmm{A}_p \leq w \right\}$ with respect to $\normmm{\cdot}_p$ , i.e.:
\begin{equation*}
    \forall{A \in S_A}\ ,\exists{A^j \in N_A}, s.t. \ \normmm{A-A^j}_p \leq \frac{\delta}{2 P P^* \sqrt{M}}.
\end{equation*}
Combine this with \eqref{ieq-A11-5}, let $A$ be the matrix corresponding to $h$ and $A^j$ be the matrix corresponding to $h^j$ and let $N_h = \left\{ h^1, h^2, \cdots, h^N \right\}$, we have:
\begin{equation*}
    \forall{h \in \mathcal{H}_0^w}\ ,\exists{h^j \in N_h}, s.t. \ \normvec{h-h^j}_2 \leq 2 \sqrt{M} P^* P \  \frac{\delta}{2 P P^* \sqrt{M}} = \delta.
\end{equation*}
So $N_h$ is a $\delta$-covering of $\mathcal{H}_0^w$ with respect to $\normvec{\cdot}_2$ , so we have:
\begin{equation*}
    \mathcal{N}(\delta; \mathcal{H}_0^w, \normvec{\cdot}_2) \leq \mathcal{N}(\frac{\delta}{2 P P^* \sqrt{M}}; S_A, \normmm{\cdot}_p).
\end{equation*}
By Lemma \ref{lma-unit-ball-covering-number-bound}, we know that:
\begin{equation} \label{ieq-A11-6}
    \mathcal{N}(\delta; \mathcal{H}_0^w, \normvec{\cdot}_2) \leq \mathcal{N}(\frac{\delta}{2 P P^* \sqrt{M}}; S_A, \normmm{\cdot}_p) \leq \left( 1 + \frac{4 P P^* w \sqrt{M}}{\delta} \right)^{m^2}.
\end{equation}
So we have:
\begin{equation*}
    \begin{aligned}
        \mathcal{J}(0;D) &= \int_0^D \sqrt{ ln \  \mathcal{N}\left(u; \mathcal{H}_0^w, \normvec{\cdot}_2 \right)} du \overset{\sroman{1}}{\leq} \int_0^D \sqrt{m^2 ln \left( 1 + \frac{4 P P^* w \sqrt{M}}{u} \right)} du \\[1mm]
        &\overset{\sroman{2}}{\leq} m \int_0^D \sqrt{\frac{4 P P^* w \sqrt{M}}{u}} du = 2m\sqrt{PP^*w} \sqrt[4]{M} \int_0^D u^{-\frac{1}{2}} du \\[1mm]
        &= 4m\sqrt{PP^*wD} \sqrt[4]{M} \overset{\sroman{3}}{\leq} 8m PP^* w \sqrt{M},
    \end{aligned}
\end{equation*}
where $\sroman{1}$ is from \eqref{ieq-A11-6}; $\sroman{2}$ is because that for any $x \ge 0$, $ln(1+x) \leq x$ and $\sroman{3}$ is from \eqref{ieq-A11-4}.
So take $\delta \xrightarrow[]{} 0^+$, we have:
\begin{equation*}
    \begin{aligned}
        \mathcal{R}_\mathcal{S}(\mathcal{H}_0^w) \leq 32 \mathcal{J}(0; D) \leq 256m PP^* w \sqrt{M}.
    \end{aligned}
\end{equation*}
\textbf{For} \bm{$\mathcal{R}_\mathcal{S}(\mathcal{H}_1^w)$}:
The same as \bm{$\mathcal{R}_\mathcal{S}(\mathcal{H}_0^w)$}, we consider $\ell_2$ norm for a function $h \in \mathcal{H}_1^w$ and define $\mathcal{H}_1^w(\mathcal{S}) = \left\{\left(h(z_1),h(z_2),\dots,h(z_M)\right) \Big| h \in \mathcal{H}_1^w \right\}$, by similar argument in \eqref{eq-A11-3}, we have:
\begin{equation*}
    \mathcal{N}\left(\delta; \mathcal{H}_1^w(\mathcal{S}), \normvec{\cdot}_2 \right) = \mathcal{N}\left(\delta; \mathcal{H}_1^w, \normvec{\cdot}_2 \right).
\end{equation*}
By the definition of Rademacher Complexity, we know that $\mathcal{R}_\mathcal{S}(\mathcal{H}_1^w)$ is just the expectation of the Rademacher Process with respect to $\mathcal{H}_1^w(\mathcal{S})$, which is $\mathbb{E}[\underset{\theta \in \mathcal{H}_1^w(\mathcal{S})}{\sup} X_\theta]$.
By Lemma \ref{lma-dudley-int} and \eqref{ieq-dudley-remark}, we know that: $\forall{\delta \in (0,D]}$,
\vskip -0.15in
\begin{equation*}
    \begin{aligned}
        \mathcal{R}_\mathcal{S}(\mathcal{H}_1^w) \!&=\! \mathbb{E}[\underset{\theta \in \mathcal{H}_1^w(\mathcal{S})}{\sup} \!X_\theta] \!\leq\! \mathbb{E}\!\left[ \underset{\theta,\widetilde{\theta} \in \mathcal{H}_1^w(\mathcal{S})}{\sup}\!\!(X_\theta - X_{\widetilde{\theta}})\! \right] \!\leq\! 2 \mathbb{E}\! \left[ \underset{\underset{\rho_X(\gamma,\gamma^\prime)\leq \delta}{\gamma,\gamma^\prime \in \mathcal{H}_1^w(\mathcal{S})}}{\sup}\!\!(X_\gamma - X_{\gamma^\prime})\! \right] \!+\! 32 \mathcal{J}(\delta/4; D),
    \end{aligned}
\end{equation*}
where
\vskip -0.1in
\begin{equation} \label{ieq-A11-8}
    \begin{aligned}
        D &= \underset{\theta, \theta^\prime \in \mathcal{H}_1^w(\mathcal{S})}{\sup} \normvec{\theta - \theta^\prime}_2  \leq 2 \underset{\theta \in \mathcal{H}_1^w(\mathcal{S})}{\sup} \normvec{\theta}_2 = 2 \underset{h \in \mathcal{H}_1^w}{\sup} \normvec{h}_2 = 2 \underset{h \in \mathcal{H}_1^w}{\sup} \sqrt{\sum_{i=1}^M \left[ h(z_i) \right]^2} \\[1mm]
        &\overset{\sroman{1}}{\leq} 2 \sqrt{M} \underset{1 \leq i \leq M}{\underset{\normmm{A}_p \leq w}{\sup}} \normvec{A(x_i^+ - x_i^-)}_{r^*} \overset{\sroman{2}}{\leq} 2 \sqrt{M} \underset{1 \leq i \leq M}{\underset{\normmm{A}_p \leq w}{\sup}} \  \normmm{A}_{r^*} \normvec{x_i^+ - x_i^-}_{r^*} \\[1mm]
        &\overset{\sroman{3}}{\leq} 4 \sqrt{M} R^* \underset{\normmm{A}_p \leq w}{\sup} \normmm{A}_{r^*} \overset{\sroman{4}}{\leq} 4 \sqrt{M} R^* \underset{\normmm{A}_p \leq w}{\sup} s(r*,p,m) \normmm{A}_p \leq 4w R^* s(r^*,p,m) \sqrt{M},
    \end{aligned}
\end{equation}
and $\mathcal{J}(a;b) = \int_a^b \sqrt{ ln \mathcal{N}\left(u; \mathcal{H}_1^w(\mathcal{S}), \normvec{\cdot}_2 \right)} du = \int_a^b \sqrt{ln \mathcal{N}\left(u; \mathcal{H}_1^w, \normvec{\cdot}_2 \right)} du$. Where $\sroman{1}$ is from the definition of $f$; $\sroman{2}$ is a result of the properties of matrix norm; $\sroman{3}$ is from the definition of $R^*$ and $\sroman{4}$ comes from Lemma \ref{lma-norm-difference}.
Similar to the discussion of upper bound for $D$, for all $h_1, h_2 \in \mathcal{H}_1^w$, we have:
\vskip -0.15in
\begin{equation} \label{ieq-A11-9}
    \begin{aligned}
        \normvec{h_1 - h_2}_2 &= \sqrt{\sum_{i=1}^M \left[ h_1(z_i) - h_2(z_i) \right]^2} \leq \sqrt{M} \underset{1 \leq i \leq M}{\sup} | h_1(z_i) - h_2(z_i) | \\[1mm]
        &= \sqrt{M} \underset{1 \leq i \leq M}{\sup} \normvec{(A_1 - A_2) (x_i^+ - x_i^-)}_{r^*} \overset{\sroman{1}}{\leq} \sqrt{M} \underset{1 \leq i \leq M}{\sup}\ \normmm{A_1 - A_2}_{r^*} \normvec{x_i^+ - x_i^-}_{r^*} \\[1mm]
        &\overset{\sroman{2}}{\leq} 2 \sqrt{M} R^* \normmm{A_1 - A_2}_{r^*} \overset{\sroman{3}}{\leq} 2 R^* s(r*,p,m) \sqrt{M} \ \normmm{A_1 - A_2}_p,
    \end{aligned}
\end{equation}
where $A_1,A_2$ are the matrices corresponding to $h_1,h_2$, respectively and $\sroman{1}$ from the properties of matrix norm and $\sroman{2}$ is from the definition of $R^*$ and $\sroman{3}$ comes from Lemma \ref{lma-norm-difference}.
Suppose $N_A  = \left\{ A^1,\cdots, A^N \right\}$ is a $\frac{\delta}{2 R^* s(r*,p,m) \sqrt{M}}$-covering of $S_A = \left\{ A \Big| \normmm{A}_p \leq w \right\}$ with respect to $\normmm{\cdot}_p$ , i.e.:
\begin{equation*}
    \forall{A \in S_A}\ ,\exists{A^j \in N_A}, s.t. \ \normmm{A-A^j}_p \leq \frac{\delta}{2 R^* s(r*,p,m) \sqrt{M}}.
\end{equation*}
Combine this with \eqref{ieq-A11-9}, let $A$ be the matrix corresponding to $h$ and $A^j$ be the matrix corresponding to $h^j$ and let $N_h = \left\{ h^1, h^2, \cdots, h^N \right\}$, we have:
\begin{equation*}
    \forall{h \in \mathcal{H}_1^w}\ ,\exists{h^j \in N_h}, s.t. \ \normvec{h-h^j}_2 \leq 2 R^* s(r*,p,m) \sqrt{M} \  \frac{\delta}{2 R^* s(r*,p,m) \sqrt{M}} = \delta.
\end{equation*}
So $N_h$ is a $\delta$-covering of $\mathcal{H}_1^w$ with respect to $\normvec{\cdot}_2$ , so we have:
\begin{equation*}
    \mathcal{N}(\delta; \mathcal{H}_1^w, \normvec{\cdot}_2) \leq \mathcal{N}(\frac{\delta}{2 R^* s(r*,p,m) \sqrt{M}}; S_A, \normmm{\cdot}_p).
\end{equation*}
By Lemma \ref{lma-unit-ball-covering-number-bound}, we know that:
\begin{equation} \label{ieq-A11-11}
    \mathcal{N}(\delta; \mathcal{H}_1^w, \normvec{\cdot}_2) \leq \mathcal{N}(\frac{\delta}{2 R^* s(r*,p,m) \sqrt{M}}; S_A, \normmm{\cdot}_p) \leq \left( 1 + \frac{4 R^* s(r*,p,m) w \sqrt{M}}{\delta} \right)^{m^2}.
\end{equation}
So we have:
\begin{equation*}
    \begin{aligned}
        \mathcal{J}(0;D) &= \int_0^D \sqrt{ ln \  \mathcal{N}\left(u; \mathcal{H}_1^w, \normvec{\cdot}_2 \right)} du \overset{\sroman{1}}{\leq} \int_0^D \sqrt{m^2 ln \left( 1 + \frac{4 R^* s(r*,p,m) w \sqrt{M}}{u} \right)} du \\[1mm]
        &\overset{\sroman{2}}{\leq} m \int_0^D \sqrt{\frac{4 R^* s(r*,p,m) w \sqrt{M}}{u}} du = 2m\sqrt{R^* s(r*,p,m)w} \sqrt[4]{M} \int_0^D u^{-\frac{1}{2}} du \\[1mm]
        &= 4m\sqrt{R^* s(r*,p,m) wD} \sqrt[4]{M} \overset{\sroman{3}}{\leq} 8m R^* s(r*,p,m) w \sqrt{M},
    \end{aligned}
\end{equation*}
where $\sroman{1}$ is from \eqref{ieq-A11-11}; $\sroman{2}$ is because that for any $x \ge 0$, $ln(1+x) \leq x$ and $\sroman{3}$ is from \eqref{ieq-A11-8}.
So take $\delta \xrightarrow[]{} 0^+$, we have:
\begin{equation*}
    \begin{aligned}
        \mathcal{R}_\mathcal{S}(\mathcal{H}_1^w) \leq 32 \mathcal{J}(0; D) \leq 256m R^* s(r*,p,m) w \sqrt{M}.
    \end{aligned}
\end{equation*}
Combine upper bounds of $\mathcal{R}_\mathcal{S}(\mathcal{H}_0^w)$ and $\mathcal{R}_\mathcal{S}(\mathcal{H}_1^w)$ with \eqref{ieq-dvd-Rs(H)}, we have:
\begin{equation*}
    \begin{aligned}
        \mathcal{R}_\mathcal{S}(\mathcal{H}) &\leq \mathcal{R}_\mathcal{S}(\mathcal{H}_0^{s(p^*,p,m) w^2}) + \epsilon \  \mathcal{R}_\mathcal{S}(\mathcal{H}_1^{s(p^*,p,m) w^2}) \\[1mm]
        &\leq 256m PP^* s(p^*,p,m) w^2 \sqrt{M} + \epsilon \  256m R^* s(r*,p,m) s(p^*,p,m) w^2 \sqrt{M} \\[1mm]
        &= 256 m \ s(p^*,p,m) w^2 \sqrt{M} \left( PP^* + \epsilon R^* s(r*,p,m) \right) \\[1mm]
        &= O\left( \left( PP^* + \epsilon R^* s(r*,p,m) \right) \left( m \ s(p^*,p,m) w^2 \sqrt{M} \right) \right).
    \end{aligned}
\end{equation*}
\end{proof}

\subsection{Proof of Theorem \ref{thm-nn-Rs(H)-bound-Fnorm}} \label{apd-prf-thm-nn-Rs(H)-bound-Fnorm}
\begin{customthm}{{\ref{thm-nn-Rs(H)-bound-Fnorm}}}
Let $\mathcal{U}(x)=\left\{ x^\prime | \normvec{x^\prime - x}_p \leq \epsilon \right\}$ (i.e. consider the $\ell_p$ attack), $\sigma(0)=0$ with Lipschitz constant $L$ and let $\mathcal{F}=\left\{W_d \sigma(W_{d-1} \sigma(\cdots \sigma(W_1 x)))\big| \normmm{W_l}_F\!\!\leq\!\! M_l^F, l=\!1,\dots, d\! \right\}$. We then have:
\begin{equation*}
    \mathcal{R}_\mathcal{S}(\mathcal{H}) = O\left( \sqrt{\sum_{l=1}^d h_l h_{l-1}} K \sqrt{d} \sqrt{M} \right),
\end{equation*}
where $K = 2 B_{X,\epsilon}^F \cdot \left( B_{X^+}^F + B_{X^-}^F \right)$, where
\begin{equation*}
    B_{X,\epsilon}^F \!=\! L^{d-1}\! \prod_{l=1}^d M_l^F \max\left\{ 1, m^{\frac{1}{2}-\frac{1}{p}} \right\}\!\left( \normvec{X}_{p, \infty} \!+\! \epsilon \right), \  B_X^F \!=\! L^{d-1}\! \prod_{l=1}^d M_l^F \max\left\{ 1, m^{\frac{1}{2}-\frac{1}{p}} \right\}\! \normvec{X}_{p, \infty}.
\end{equation*}
\end{customthm}
Before giving the proof, we firstly introduce some useful lemmas.
\begin{lemma} \label{lma-perturbation-norm-bound}
If $x_i^* \in \mathcal{U}(x_i) = \left\{ x_i^\prime \big| \normvec{x_i - x_i^\prime}_p \leq \epsilon \right\}$, then for $\frac{1}{r^*} + \frac{1}{r} = 1$, we have:
\begin{equation*}
    \normvec{x_i^*}_{r^*} \leq \max \left\{ 1, m^{1-\frac{1}{r}-\frac{1}{p}} \right\} \left( \normvec{X}_{p,\infty} + \epsilon \right).
\end{equation*}
\end{lemma}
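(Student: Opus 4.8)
The plan is elementary: reduce everything to the triangle inequality for $\normvec{\cdot}_p$ together with the norm comparison of Lemma \ref{lma-vec-norm}. Write $x_i^* = x_i + \delta$ where $\normvec{\delta}_p \leq \epsilon$ by the definition of $\mathcal{U}(x_i)$. Since $x_i$ is the $i$-th row of $X$, we have $\normvec{x_i}_p \leq \normvec{X}_{p,\infty}$, so the triangle inequality gives $\normvec{x_i^*}_p \leq \normvec{x_i}_p + \normvec{\delta}_p \leq \normvec{X}_{p,\infty} + \epsilon$.

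Next I would convert this $\ell_p$ bound into an $\ell_{r^*}$ bound. Note $\frac{1}{r^*} = 1 - \frac{1}{r}$, so the target exponent is $1 - \frac{1}{r} - \frac{1}{p} = \frac{1}{r^*} - \frac{1}{p}$. Split into two cases according to whether $r^* \geq p$ or $r^* < p$. If $r^* \geq p$ (equivalently $\frac{1}{r^*} \leq \frac{1}{p}$), Lemma \ref{lma-vec-norm} (applied in $\mathbb{R}^m$ with $p_2 = p \leq p_1 = r^*$) yields $\normvec{x_i^*}_{r^*} \leq \normvec{x_i^*}_p$, and since the exponent $\frac{1}{r^*} - \frac{1}{p} \leq 0$ we have $\max\{1, m^{\frac{1}{r^*}-\frac{1}{p}}\} = 1$; combining with the previous paragraph gives $\normvec{x_i^*}_{r^*} \leq \normvec{X}_{p,\infty} + \epsilon = \max\{1, m^{1-\frac{1}{r}-\frac{1}{p}}\}(\normvec{X}_{p,\infty} + \epsilon)$. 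If instead $r^* < p$ (equivalently $\frac{1}{r^*} > \frac{1}{p}$), Lemma \ref{lma-vec-norm} with $p_2 = r^* < p_1 = p$ gives $\normvec{x_i^*}_{r^*} \leq m^{\frac{1}{r^*}-\frac{1}{p}}\normvec{x_i^*}_p$, and here $m^{\frac{1}{r^*}-\frac{1}{p}} > 1$, so $\max\{1, m^{\frac{1}{r^*}-\frac{1}{p}}\} = m^{\frac{1}{r^*}-\frac{1}{p}}$; thus $\normvec{x_i^*}_{r^*} \leq m^{\frac{1}{r^*}-\frac{1}{p}}(\normvec{X}_{p,\infty} + \epsilon) = \max\{1, m^{1-\frac{1}{r}-\frac{1}{p}}\}(\normvec{X}_{p,\infty} + \epsilon)$.

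In both cases we obtain exactly the claimed bound, which completes the argument. There is no real obstacle here; the only thing that requires (minor) care is matching the exponent $1 - \frac{1}{r} - \frac{1}{p}$ in the statement with $\frac{1}{r^*} - \frac{1}{p}$ and checking that the $\max$ correctly selects the right factor in each of the two regimes for $r^*$ versus $p$.
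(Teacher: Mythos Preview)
Your proposal is correct and follows essentially the same route as the paper: bound $\normvec{x_i^*}_p$ by $\normvec{X}_{p,\infty}+\epsilon$ via the triangle inequality, then convert between $\ell_p$ and $\ell_{r^*}$ norms by a two-case comparison. The only cosmetic difference is that in the regime $r^*<p$ the paper applies H\"older's inequality directly to obtain the factor $m^{1-\frac{1}{r}-\frac{1}{p}}$, whereas you invoke the second inequality of Lemma~\ref{lma-vec-norm}; since that lemma encodes exactly the same bound, the two arguments are equivalent.
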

\begin{proof} [Proof of Lemma \ref{lma-perturbation-norm-bound}]
We divide it into two cases.
\begin{enumerate}
    \item If $p \ge r^*$, then by Holder's Inequality with $\frac{1}{r^*} = \frac{1}{p} + \frac{1}{s}$, we have:
    \begin{equation*}
        \normvec{x_i^*}_{r^*} \leq \sup \  \normvec{\bm{1}}_s \cdot \normvec{x_i^*}_p = \normvec{\bm{1}}_s \cdot \normvec{x_i^*}_p = m^{\frac{1}{s}}\normvec{x_i^*}_p = m^{1-\frac{1}{r}-\frac{1}{p}} \normvec{x_i^*}_p,
    \end{equation*}
    where the equality holds when all the entries are equal.
    \item If $p < r^*$, by Lemma \ref{lma-vec-norm}, we have $\normvec{x_i^*}_{r^*} \leq \normvec{x_i^*}_p,$
    where the equality holds when one of the entries of $x_i^*$ equals to one and the others equal to zero. 
\end{enumerate}
Then we have:
\begin{equation*}
    \begin{aligned}
        \normvec{x_i^*}_{r^*} &\leq \max \left\{ 1, m^{1-\frac{1}{r}-\frac{1}{p}} \right\} \ \normvec{x_i^*}_p \leq \max \left\{ 1, m^{1-\frac{1}{r}-\frac{1}{p}} \right\} \left( \normvec{x_i}_p + \normvec{x_i - x_i^*}_p \right) \\[1mm]
        &\leq \max \left\{ 1, m^{1-\frac{1}{r}-\frac{1}{p}} \right\} \left( \normvec{X}_{p,\infty} + \epsilon \right).
    \end{aligned}
\end{equation*}
\end{proof}

\begin{lemma} \label{lma-v2norm-mFnorm-property}
Let $A \in \mathbb{R}^{m \times n}, b \in \mathbb{R}^n$, then we have: $\normvec{A\cdot b}_2 \leq \normmm{A}_F \cdot \normvec{b}_2.$
\end{lemma}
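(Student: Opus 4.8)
The plan is to reduce the estimate to a row-by-row application of the Cauchy--Schwarz inequality. First I would denote by $a_1^\top, \dots, a_m^\top \in \mathbb{R}^n$ the rows of $A$, so that the $i$-th coordinate of the vector $A b$ is the inner product $a_i^\top b = \langle a_i, b\rangle$ for each $i = 1, \dots, m$.

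Next I would bound each coordinate: by the Cauchy--Schwarz inequality, $|\langle a_i, b\rangle| \leq \normvec{a_i}_2 \normvec{b}_2$ for every $i$. Squaring, summing over $i$, and pulling out the common factor $\normvec{b}_2^2$ then gives
\[
    \normvec{A b}_2^2 = \sum_{i=1}^m \langle a_i, b\rangle^2 \leq \normvec{b}_2^2 \sum_{i=1}^m \normvec{a_i}_2^2 .
\]
Finally I would identify the remaining sum with the Frobenius norm: since $\sum_{i=1}^m \normvec{a_i}_2^2 = \sum_{i=1}^m \sum_{j=1}^n A_{ij}^2 = \normmm{A}_F^2$ by definition, we obtain $\normvec{A b}_2^2 \leq \normmm{A}_F^2 \normvec{b}_2^2$, and taking square roots yields the claim.

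This argument is entirely elementary and I do not anticipate any genuine obstacle; the only point requiring a little care is the bookkeeping that ties the Frobenius norm $\normmm{A}_F$ (defined as the square root of the sum of squares of all entries of $A$) to the sum of the squared Euclidean norms of the rows of $A$, which is exactly what the coordinatewise Cauchy--Schwarz step produces. An alternative, equally short route would be to invoke the submultiplicativity $\normvec{A b}_2 \leq \normmm{A}_2 \normvec{b}_2$ together with the standard inequality $\normmm{A}_2 \leq \normmm{A}_F$ between the spectral and Frobenius norms, but the direct Cauchy--Schwarz proof keeps the write-up self-contained.
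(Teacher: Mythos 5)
Your proof is correct and is essentially identical to the paper's: the paper also bounds each coordinate $A_i b$ of $Ab$ by $\normvec{A_i}_2 \normvec{b}_2$ (citing H\"older's inequality, which for exponent $2$ is exactly your Cauchy--Schwarz step), sums the squares, and identifies $\sqrt{\sum_i \normvec{A_i}_2^2}$ with $\normmm{A}_F$. No gaps.
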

\begin{proof} [Proof of Lemma \ref{lma-v2norm-mFnorm-property}]
Let $A_i$ be the rows of $A, i=1,2,\dots,m$, we have:
\begin{equation*}
    \normvec{A\cdot b}_2 = \sqrt{\sum_{i=1}^m\left( A_i b \right)^2} \overset{\sroman{1}}{\leq} \sqrt{\sum_{i=1}^m \normvec{A_i}_2^2 \cdot \normvec{b}_2^2} = \sqrt{\sum_{i=1}^m \normvec{A_i}_2^2} \cdot \sqrt{\normvec{b}_2^2} = \normmm{A}_F \cdot \normvec{b}_2,
\end{equation*}
where $\sroman{1}$ is from Holder's Inequality.
\end{proof}

\begin{lemma} \label{lma-sigma-2norm-Lip}
Suppose $\sigma$ is a $L$-Lipschitz function, then the elementwise vector map corresponding to $\sigma$ is also $L$-Lipschitz with respect to $\normvec{\cdot}_2$.
\end{lemma}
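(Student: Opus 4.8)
The plan is to reduce the vector statement to the scalar Lipschitz property applied coordinatewise. First I would fix two arbitrary vectors $u, v \in \mathbb{R}^n$ and denote by $\sigma(u)$ (resp. $\sigma(v)$) the vector whose $i$-th entry is $\sigma(u_i)$ (resp. $\sigma(v_i)$). Expanding the squared Euclidean norm of the difference gives
\begin{equation*}
    \normvec{\sigma(u) - \sigma(v)}_2^2 = \sum_{i=1}^n \left( \sigma(u_i) - \sigma(v_i) \right)^2.
\end{equation*}
Next I would invoke the scalar hypothesis that $\sigma$ is $L$-Lipschitz, i.e. $|\sigma(u_i) - \sigma(v_i)| \le L|u_i - v_i|$ for each $i$, square it, and sum over $i$ to obtain
\begin{equation*}
    \sum_{i=1}^n \left( \sigma(u_i) - \sigma(v_i) \right)^2 \le L^2 \sum_{i=1}^n \left( u_i - v_i \right)^2 = L^2 \normvec{u - v}_2^2.
\end{equation*}
Finally, since both sides are nonnegative, taking square roots yields $\normvec{\sigma(u) - \sigma(v)}_2 \le L \normvec{u - v}_2$, which is exactly the claimed $L$-Lipschitz bound for the elementwise map with respect to $\normvec{\cdot}_2$.

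There is no real obstacle here: the argument is a one-line coordinatewise estimate, and the only point requiring (minimal) care is the monotonicity of the square root used to pass from the squared inequality to the unsquared one. I would therefore present the proof essentially as the two displays above together with the concluding sentence.
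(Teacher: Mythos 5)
Your proof is correct and is essentially identical to the paper's: both apply the scalar Lipschitz bound coordinatewise and sum, the only cosmetic difference being that you work with the squared norms and take a square root at the end while the paper carries the square root throughout.
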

\begin{proof} [Proof of Lemma \ref{lma-sigma-2norm-Lip}]
\begin{equation*}
    \begin{aligned}
        \normvec{\sigma(x)-\sigma(y)}_2 &= \sqrt{\sum_{i=1}^n \left( \sigma(x)_i - \sigma(y)_i \right)^2} = \sqrt{\sum_{i=1}^n \left( \sigma(x_i) - \sigma(y_i) \right)^2} \\[1mm]
        &\overset{\sroman{1}}{\leq} \sqrt{\sum_{i=1}^n L^2 (x_i - y_i)^2} = L \sqrt{\sum_{i=1}^n (x_i - y_i)^2} = L \cdot \normvec{x - y}_2,
    \end{aligned}
\end{equation*}
where $\sroman{1}$ is because $\sigma$ is $L$-Lipschitz.
\end{proof}
Now we can turn to the proof of Theorem \ref{thm-nn-Rs(H)-bound-Fnorm}.
\begin{proof} [Proof of Theorem \ref{thm-nn-Rs(H)-bound-Fnorm}]
In this case, let $\mathcal{U}(x) = \left\{ x^\prime \big| \normvec{x^\prime - x}_p \leq \epsilon \right\}$, we have:
\begin{equation*}
    \mathcal{F}=\left\{ x \xrightarrow[]{} W_d \sigma(W_{d-1} \sigma(\cdots \sigma(W_1 x)))\  \big| \  \normmm{W_l}_F \leq M_l^F, l=1,\dots, d  \right\},
\end{equation*}
\begin{equation*}
    \mathcal{H} = \left\{ h(x,x^+,x^-) = \underset{x^\prime \in \mathcal{U}(x)}{\min} \left( f(x^\prime)^T\left( f(x^+) - f(x^-) \right) \right) |f \in \mathcal{F} \right\}.
\end{equation*}
Let $S_l^F = \left\{ W_l \in \mathbb{R}^{h_l \times h_{l-1}} \big| \normmm{W_l}_F \leq M_l^F \right\}, l=1,2,\dots,d$. Let $\mathcal{C}_l^F$ be the $\delta_l$-covering of $S_l^F$ and define:
\begin{equation*}
    \mathcal{F}^c = \left\{ f^c: x \xrightarrow[]{} W_d^c  \sigma \left( W_{d-1}^c \sigma \left( \cdots \sigma (W_1^c x) \right) \right) \Big| W_l^c \in \mathcal{C}_l^F, l=1,2,\dots,d \right\} \subseteq \mathcal{F},
\end{equation*}
\begin{equation*}
    \mathcal{H}^c = \left\{ h^c(x,x^+,x^-) = \underset{x^\prime \in \mathcal{U}(x)}{\min} \left( f(x^\prime)^T\left( f(x^+) - f(x^-) \right) \right) |f \in \mathcal{F}^c \right\} \subseteq \mathcal{H}.
\end{equation*}
Similar to the proof of Theorem \ref{thm-linear-Rs(H)-bound}, we know that the Rademacher Process is a sub-Gaussian Process with respect to the Euclidean metric, which induces the $\ell_2$ norm.

Given the training set $\mathcal{S}=\{(x_i,x_i^+,x_i^-)\}_{i=1}^M \triangleq \{z_i\}_{i=1}^M$, define the $\ell_2$-norm for a function in $\mathcal{H}$ as:
\begin{equation*}
    \forall{h \in \mathcal{H}}, \  \normvec{h}_2 \coloneqq \sqrt{\sum_{i=1}^M \left[ h(z_i) \right]^2}.
\end{equation*}
Define $\mathcal{H}(\mathcal{S}) = \left\{\left(h(z_1),h(z_2),\dots,h(z_M)\right) \Big| h \in \mathcal{H} \right\}$, we have that for any $h \in \mathcal{H}$ and $v_h = \left(h(z_1),h(z_2),\dots,h(z_M)\right)$ be the corresponding vector in $\mathcal{H}(\mathcal{S})$, we have $\normvec{h}_2 = \normvec{v_h}_2.$
So we know that any $\delta$-covering of $\mathcal{H}$ ($\left\{ h^1, h^2, \cdots, h^N \right\}$) with respect to $\ell_2$ norm in the functional space, corresponds to a $\delta$-covering of $\mathcal{H}(\mathcal{S})$ with respect to the $\ell_2$ norm in the Euclidean Space, i.e.
\begin{equation*}
    \left\{ \left( \begin{array}{c}  h^1(z_1)  \\  h^1(z_2) \\  \vdots \\  h^1(z_M) \end{array} \right) ,
    \left( \begin{array}{c}  h^2(z_1)  \\  h^2(z_2) \\  \vdots \\  h^2(z_M) \end{array} \right), \cdots, 
    \left( \begin{array}{c}  h^N(z_1)  \\  h^N(z_2) \\  \vdots \\  h^N(z_M) \end{array} \right) \right\}.
\end{equation*}
So we have: 
\begin{equation*}
    \mathcal{N}\left(\delta; \mathcal{H}(\mathcal{S}), \normvec{\cdot}_2 \right) = \mathcal{N}\left(\delta; \mathcal{H}, \normvec{\cdot}_2 \right).
\end{equation*}
By the definition of Rademacher Complexity, we know that $\mathcal{R}_\mathcal{S}(\mathcal{H})$ is just the expectation of the Rademacher Process with respect to $\mathcal{H}(\mathcal{S})$, which is $\mathbb{E}[\underset{\theta \in \mathcal{H}(\mathcal{S})}{\sup} X_\theta]$.

So by Lemma \ref{lma-dudley-int} and \eqref{ieq-dudley-remark}, we know that: $\forall{\delta \in (0,D]}$:
\begin{equation*}
    \begin{aligned}
        \mathcal{R}_\mathcal{S}(\mathcal{H}) \!=\! \mathbb{E}[\underset{\theta \in \mathcal{H}(\mathcal{S})}{\sup} X_\theta] \leq \mathbb{E}\left[ \underset{\theta,\widetilde{\theta} \in \mathcal{H}(\mathcal{S})}{\sup} \!\!(X_\theta \!-\! X_{\widetilde{\theta}}) \right] \leq 2 \mathbb{E} \left[ \underset{\underset{\normvec{\gamma^\prime-\gamma}_2\leq \delta}{\gamma,\gamma^\prime \in \mathcal{H}(\mathcal{S})}}{\sup}\!\!(X_\gamma \!-\! X_{\gamma^\prime}) \right] + 32 \mathcal{J}(\delta/4; D),
    \end{aligned}
\end{equation*}
where
\begin{equation} \label{ieq-A12-2}
    D \!=\! \underset{\theta, \theta^\prime \in \mathcal{H}(\mathcal{S})}{\sup} \!\normvec{\theta - \theta^\prime}_2  \!\leq\! 2 \underset{\theta \in \mathcal{H}(\mathcal{S})}{\sup} \normvec{\theta}_2 \!=\! 2 \underset{h \in \mathcal{H}}{\sup} \normvec{h}_2 \!=\! 2 \underset{h \in \mathcal{H}}{\sup} \sqrt{\sum_{i=1}^M \left[ h(z_i) \right]^2} \leq 2 \sqrt{M} \underset{1 \leq i \leq M}{\underset{h \in \mathcal{H}}{\sup}} |h(z_i)|,
\end{equation}
and $\mathcal{J}(a;b) = \int_a^b \sqrt{ ln \mathcal{N}\left(u; \mathcal{H}(\mathcal{S}), \normvec{\cdot}_2 \right)} du = \int_a^b \sqrt{ln \mathcal{N}\left(u; \mathcal{H}, \normvec{\cdot}_2 \right)} du$.

For any $f \in \mathcal{F}, x \in \mathcal{X}$, let $x^l$ be the output of $x$ passing through the first $l-1$ layers, we have:
\begin{equation*}
    \begin{aligned}
        \normvec{f(x)}_2 \!&=\! \normvec{W_d \sigma (W_{d-1} x^{d-1})}_2 \!\overset{\sroman{1}}{\leq}\! \normmm{W_d}_F \!\cdot\! \normvec{\sigma (W_{d-1} x^{d-1})}_2 \!\overset{\sroman{2}}{=}\! \normmm{W_d}_F \!\cdot\! \normvec{\sigma (W_{d-1} x^{d-1}) \!-\! \sigma (\bm{0})}_2 \\[1mm]
        &\overset{\sroman{3}}{\leq} L M_d^F \normvec{W_{d-1} x^{d-1}}_2 \leq \cdots \leq L^{d-1} \prod_{l=2}^d M_l^F \ \normvec{W_1 x}_2 \leq L^{d-1} \prod_{l=1}^d M_l^F \ \normvec{x}_2 \\[1mm]
        &\overset{\sroman{4}}{\leq} L^{d-1} \prod_{l=1}^d M_l^F \  \max \left\{ 1, m^{\frac{1}{2}-\frac{1}{p}} \right\} \normvec{x}_p \leq L^{d-1} \prod_{l=1}^d M_l^F \  \max \left\{ 1, m^{\frac{1}{2}-\frac{1}{p}} \right\} \normvec{X}_{p,\infty},
    \end{aligned}
\end{equation*}
where $\sroman{1}$ is from Lemma \ref{lma-v2norm-mFnorm-property}; $\sroman{2}$ is from the fact that $\sigma(0)=0$; $\sroman{3}$ comes from the assumption that $\sigma$ is $L$-Lipschitz and $\normmm{W_d}_F \leq M_d^F$ and $\sroman{4}$ is attained by setting $r = r^* =2$ in the proof of Lemma \ref{lma-perturbation-norm-bound}.

To simplify the notations, we define:
\begin{equation*}
    B_{X,\epsilon}^F \!=\! L^{d-1}\! \prod_{l=1}^d \!M_l^F \max\!\left\{ 1, m^{\frac{1}{2}-\frac{1}{p}} \right\}\!\left( \normvec{X}_{p, \infty} \!+\! \epsilon \right), B_X^F \!=\! L^{d-1} \!\prod_{l=1}^d M_l^F \max\left\{ 1, m^{\frac{1}{2}-\frac{1}{p}} \right\}\! \normvec{X}_{p, \infty}.
\end{equation*}
So we have:
\begin{equation} \label{ieq-f(x)-bound}
    \forall{x \in \mathcal{X}, f \in \mathcal{F}}, \normvec{f(x)}_2 \leq B_X^F.
\end{equation}
Similarly, we have:
\begin{equation} \label{ieq-f(x')-bound}
    \forall{x \in \mathcal{X}, f \in \mathcal{F}}, \  \forall{\ \normvec{x^\prime - x}_p \leq \epsilon}, \normvec{f(x^\prime)}_2 \leq B_{X,\epsilon}^F.
\end{equation}
For any $h \in \mathcal{H}, z \in \mathcal{X}^3$, let $x^* = \underset{\normvec{x^\prime - x}_p \leq \epsilon}{\arg\min} f(x^\prime)^T \left( f(x^+) - f(x^-) \right)$ and let $x^l$ be the output of $x^*$ passing through the first $l-1$ layers, we have:
\begin{equation*}
    \begin{aligned}
        |h(z)| &= | \underset{\normvec{x^\prime - x}_p \leq \epsilon}{\inf} f(x^\prime)^T \left( f(x^+) - f(x^-) \right) | = |f(x^*)^T \left( f(x^+) - f(x^-) \right)| \\[1mm]
        &\leq \normvec{f(x^*)}_2 \cdot \normvec{f(x^+) - f(x^-)}_2 \overset{\sroman{1}}{\leq} B_{X,\epsilon}^F \cdot (B_{X^+}^F + B_{X^-}^F),
    \end{aligned}
\end{equation*}
where $\sroman{1}$ is from \eqref{ieq-f(x)-bound} and \eqref{ieq-f(x')-bound}. So we have:
\begin{equation} \label{ieq-D-bound}
    D \overset{\sroman{1}}{\leq} 2 \sqrt{M} \underset{1 \leq i \leq M}{\underset{h \in \mathcal{H}}{\sup}} |h(z_i)| \leq 2 \sqrt{M} B_{X,\epsilon}^F \cdot (B_{X^+}^F + B_{X^-}^F) \triangleq \sqrt{M} K,
\end{equation}
where $\sroman{1}$ is from \eqref{ieq-A12-2}. Now, we need to find the smallest distance between $\mathcal{H}$ and $\mathcal{H}^c$, i.e.
\begin{equation*}
    \underset{h \in \mathcal{H}}{\sup} \underset{h^c \in \mathcal{H}^c}{\inf} \normvec{h - h^c}_2.
\end{equation*}
By the discussion in \eqref{ieq-A12-2}, we have $\normvec{h-h^c}_2 \leq \sqrt{M} \underset{1 \leq i \leq M}{\max} | h(z_i) - h^c(z_i) |$. For any $z_i = (x_i, x_i^+, x_i^-), i=1,2,\dots,M$, given $h$ and $h^c$ such that $\normmm{W_l - W_l^c}_F \leq \delta_l, l=1,2,\dots,d$, we have:
\begin{equation*}
    | h(z_i) - h^c(z_i) | = \big| \underset{\normvec{x_i^\prime - x_i}_p \leq \epsilon}{\inf} f(x_i^\prime)^T \left( f(x_i^+) - f(x_i^-) \right) - \underset{\normvec{x_i^\prime - x_i}_p \leq \epsilon}{\inf} f^c(x_i^\prime)^T \left( f^c(x_i^+) - f^c(x_i^-) \right) \big|.
\end{equation*}
Let $x_i^* = \underset{\normvec{x_i^\prime - x_i}_p \leq \epsilon}{\arg\inf} f(x_i^\prime)^T \left( f(x_i^+) - f(x_i^-) \right)$ and $x_i^c = \underset{\normvec{x_i^\prime - x_i}_p \leq \epsilon}{\arg\inf} f^c(x_i^\prime)^T \left( f^c(x_i^+) - f^c(x_i^-) \right)$, and let
\begin{equation*}
    y_i=\left\{
        \begin{array}{rcl}
        x_i^c       &      & {f(x_i^*)^T \left( f(x_i^+) - f(x_i^-) \right) \ge f^c(x_i^c)^T \left( f^c(x_i^+) - f^c(x_i^-) \right)}     \\ 
        x_i^*     &      & {otherwise}
        \end{array} \right..
\end{equation*}
Then we have:
\begin{equation} \label{ieq-A12-3}
    \begin{aligned}
        | h(z_i) - h^c(z_i) | &= | f(x_i^*)^T \left( f(x_i^+) - f(x_i^-) \right) - f^c(x_i^c)^T \left( f^c(x_i^+) - f^c(x_i^-) \right) | \\[1mm]
        &\overset{\sroman{1}}{\leq} | f(y_i)^T \left( f(x_i^+) - f(x_i^-) \right) - f^c(y_i)^T \left( f^c(x_i^+) - f^c(x_i^-) \right) | \\[1mm]
        &= | f(y_i)^T \left( f(x_i^+) - f(x_i^-) \right) - f^c(y_i)^T \left( f(x_i^+) - f(x_i^-) \right) \\[1mm]
        &\ \ \ \ \ \ \ \ \ \ \ \ \ \ \ \ \ + f^c(y_i)^T \left( f(x_i^+) - f(x_i^-) \right)  - f^c(y_i)^T \left( f^c(x_i^+) - f^c(x_i^-) \right) | \\[1mm]
        &\overset{\sroman{2}}{\leq} | \left( f(y_i) - f^c(y_i) \right)^T \left( f(x_i^+) - f(x_i^-) \right) | \\[1mm]
        &\ \ \ \ \ \ \ \ \ \ \ \ \ \ \ \ \ + | f^c(y_i)^T \left( f(x_i^+) - f^c(x_i^+) \right) | + | f^c(y_i)^T \left( f(x_i^-) - f^c(x_i^-) \right) | \\[1mm]
        &\overset{\sroman{3}}{\leq} (B_{X^+}^F + B_{X^-}^F) \normvec{f(y_i) - f^c(y_i)}_2+B_{X,\epsilon}^F \normvec{f(x_i^+)-f^c(x_i^+)}_2 \\[1mm]
        &\ \ \ \ \ \ \ \ \ \ \ \ \ \ \ \ \ + B_{X,\epsilon}^F \normvec{f(x_i^-) - f^c(x_i^-)}_2,
    \end{aligned}
\end{equation}
where $\sroman{1}$ is easily verified by the definition of $y_i$; $\sroman{2}$ is from the triangle inequality and $\sroman{3}$ is from \eqref{ieq-f(x)-bound} and \eqref{ieq-f(x')-bound}.

Define $g_b^a(\cdot)$ as:
\begin{equation*}
    g_b^a(y) = W_b \sigma \left( W_{b-1} \sigma\left( \cdots W_{a+1} \sigma \left( W_a^c \cdots \sigma (W_1^c y) \right) \right) \right).
\end{equation*}
Then we have:
\begin{equation*}
    \begin{aligned}
        \normvec{f(y_i) - f^c(y_i)}_2 &= \normvec{g_d^0(y_i) - g_d^d(y_i)}_2 \\[1mm]
        &= \normvec{g_d^0(y_i) - g_d^1(y_i) + g_d^1(y_i) - g_d^2(y_i) + \cdots + g_d^{d-1}(y_i) - g_d^d(y_i)}_2 \\[1mm]
        &\overset{\sroman{1}}{\leq} \normvec{g_d^0(y_i) - g_d^1(y_i)}_2 + \cdots + \normvec{g_d^{d-1}(y_i) - g_d^d(y_i)}_2,
    \end{aligned}
\end{equation*}
where $\sroman{1}$ is from the triangle inequality.

Then we calculate $\normvec{g_d^{l-1}(y_i) - g_d^l(y_i)}_2, l=1,2,\dots,d$:
\begin{equation} \label{ieq-gdln1-gdl-bound}
    \begin{aligned}
        \normvec{g_d^{l-1}(y_i) - g_d^l(y_i)}_2 &= \normvec{W_d \sigma \left( g_{d-1}^{l-1}(y_i) \right) - W_d \sigma \left( g_{d-1}^l(y_i) \right)  }_2 \\[1mm]
        &\overset{\sroman{1}}{\leq} \normmm{W_d}_F \cdot \normvec{\sigma \left( g_{d-1}^{l-1}(y_i) \right) - \sigma \left( g_{d-1}^l(y_i) \right)}_2 \\[1mm]
        &\overset{\sroman{2}}{\leq} L M_d^F \normvec{g_{d-1}^{l-1}(y_i) - g_{d-1}^l(y_i)}_2 \leq \cdots \\[1mm]
        &\overset{\sroman{3}}{\leq} L^{d-l}\cdot \prod_{j=l+1}^d M_j^F \cdot \normvec{W_l \sigma \left( g_{l-1}^{l-1}(y_i) \right) - W_l^c \sigma \left( g_{l-1}^{l-1}(y_i) \right) }_2 \\[1mm]
        &= L^{d-l}\cdot \prod_{j=l+1}^d M_j^F \cdot \normvec{\left( W_l - W_l^c \right) \sigma \left( g_{l-1}^{l-1}(y_i) \right) }_2 \\[1mm]
        &\overset{\sroman{4}}{\leq} L^{d-l}\cdot \prod_{j=l+1}^d M_j^F \cdot \delta_l \cdot \normvec{\sigma \left( g_{l-1}^{l-1}(y_i) \right)}_2,
    \end{aligned}
\end{equation}
where $\sroman{1}$ is from Lemma \ref{lma-v2norm-mFnorm-property}; $\sroman{2}$ comes from the assumption that $\sigma$ is $L$-Lipschitz and $\normmm{W_d}_F \leq M_d^F$; $\sroman{3}$ is from the definition of $g_b^a(\cdot)$ and $\sroman{4}$ is from Lemma \ref{lma-v2norm-mFnorm-property} and the choice of $h^c$ when $h$ is fixed, which means that $\normmm{W_l - W_l^c}_F \leq \delta_l$.

Next, we upper bound $\normvec{\sigma \left( g_{l-1}^{l-1}(y_i) \right)}_2$:
\begin{equation} \label{ieq-sigmagln1ln1-bound}
    \begin{aligned}
        \normvec{\sigma \left( g_{l-1}^{l-1}(y_i) \right)}_2 &= \normvec{\sigma \left( g_{l-1}^{l-1}(y_i) \right) - \sigma(\bm{0})}_2 \overset{\sroman{1}}{\leq} L \cdot \normvec{g_{l-1}^{l-1}(y_i)}_2 = L \cdot \normvec{W_{l-1}^c \sigma\left( g_{l-2}^{l-2}(y_i) \right)}_2 \\[1mm]
        &\overset{\sroman{2}}{\leq} L \cdot \normmm{W_{l-1}^c}_F \cdot \normvec{\sigma\left( g_{l-2}^{l-2}(y_i) \right)}_2 \overset{\sroman{3}}{\leq} L \ M_{l-1}^F \normvec{\sigma\left( g_{l-2}^{l-2}(y_i) \right)}_2 \\[1mm]
        &\leq \cdots \leq L^{l-1} \cdot \prod_{j=1}^{l-1} M_j^F \cdot \normvec{y_i}_2,
    \end{aligned}
\end{equation}
where $\sroman{1}$ is because $\sigma$ is $L$-Lipschitz; $\sroman{2}$ is from Lemma \ref{lma-v2norm-mFnorm-property} and $\sroman{3}$ is because $\normmm{W_{l-1}^c}_F \leq M_{l-1}^F$.

From \eqref{ieq-gdln1-gdl-bound} and \eqref{ieq-sigmagln1ln1-bound} we have:
\begin{equation*}
    \begin{aligned}
        \normvec{g_d^{l-1}(y_i)-g_d^l(y_i)}_2 &\leq L^{d-1} \frac{\prod_{j=1}^d M_j^F}{M_l^F} \delta_l \normvec{y_i}_2 \\[1mm]
        &\overset{\sroman{1}}{\leq} L^{d-1} \frac{\prod_{j=1}^d M_j^F}{M_l^F}\  \delta_l \  \max\left\{ 1,m^{\frac{1}{2}-\frac{1}{p}} \right\} \left( \normvec{X}_{p,\infty} + \epsilon \right) = B_{X,\epsilon}^F \  \frac{\delta_l}{M_l^F},
    \end{aligned}
\end{equation*}
where $\sroman{1}$ is from Lemma \ref{lma-perturbation-norm-bound}.

Similarly:
\begin{equation*}
    \begin{aligned}
        \normvec{g_d^{l-1}(x_i^+)-g_d^l(x_i^+)}_2 &\leq L^{d-1} \frac{\prod_{j=1}^d M_j^F}{M_l^F} \delta_l \normvec{x_i^+}_2 \\[1mm]
        &\leq L^{d-1} \frac{\prod_{j=1}^d M_j^F}{M_l^F}\  \delta_l \  \max\left\{ 1,m^{\frac{1}{2}-\frac{1}{p}} \right\} \normvec{X^+}_{p,\infty} = B_{X^+}^F \  \frac{\delta_l}{M_l^F},
    \end{aligned}
\end{equation*}
\begin{equation*}
    \begin{aligned}
        \normvec{g_d^{l-1}(x_i^-)-g_d^l(x_i^-)}_2 &\leq L^{d-1} \frac{\prod_{j=1}^d M_j^F}{M_l^F} \delta_l \normvec{x_i^-}_2 \\[1mm]
        &\leq L^{d-1} \frac{\prod_{j=1}^d M_j^F}{M_l^F}\  \delta_l \  \max\left\{ 1,m^{\frac{1}{2}-\frac{1}{p}} \right\} \normvec{X^-}_{p,\infty} = B_{X^-}^F \  \frac{\delta_l}{M_l^F}.
    \end{aligned}
\end{equation*}
Combining the above with \eqref{ieq-A12-3} yields:
\begin{equation*}
    \begin{aligned}
        | h(z_i) - h^c(z_i) | &\leq \left( B_{X^+}^F + B_{X^-}^F \right) \left( \normvec{g_d^0(y_i) - g_d^1(y_i)}_2 + \cdots + \normvec{g_d^{d-1}(y_i) - g_d^d(y_i)}_2 \right) \\[1mm]
        &\ \ \ \ \ \ \ \ + B_{X,\epsilon}^F \left( \normvec{g_d^0(x_i^+) - g_d^1(x_i^+)}_2 + \cdots + \normvec{g_d^{d-1}(x_i^+) - g_d^d(x_i^+)}_2 \right) \\[1mm]
        &\ \ \ \ \ \ \ \ + B_{X,\epsilon}^F \left( \normvec{g_d^0(x_i^-) - g_d^1(x_i^-)}_2 + \cdots + \normvec{g_d^{d-1}(x_i^-) - g_d^d(x_i^-)}_2 \right) \\[1mm]
        &=(B_{X^+}^F + B_{X^-}^F) B_{X,\epsilon}^F \sum_{l=1}^d \frac{\delta_l}{M_l^F} + B_{X,\epsilon}^F B_{X^+}^F \sum_{l=1}^d \frac{\delta_l}{M_l^F} + B_{X,\epsilon}^F B_{X^-}^F \sum_{l=1}^d \frac{\delta_l}{M_l^F} \\[1mm]
        &= 2 B_{X,\epsilon}^F (B_{X^+}^F + B_{X^-}^F) \sum_{l=1}^d \frac{\delta_l}{M_l^F} = K \sum_{l=1}^d \frac{\delta_l}{M_l^F}.
    \end{aligned}
\end{equation*}
So $\normvec{h-h^c}_2 = \sqrt{M} \underset{1 \leq i \leq M}{\max} | h(z_i) - h^c(z_i) | \leq \sqrt{M} \sum_{l=1}^d \frac{K \delta_l}{M_l^F}$. Let $\delta_l = \frac{M_l^F \delta}{d K \sqrt{M}}$, we have:
\begin{equation*}
    \normvec{h - h^c}_2 \leq \sqrt{M} \sum_{l=1}^d \frac{K}{M_l^F} \cdot \frac{M_l^F \delta}{d K \sqrt{M}} \leq \delta.
\end{equation*}
Then: $\forall{h \in \mathcal{H}}, \exists{h^c \in \mathcal{H}^c}\  s.t.\  \normvec{h - h^c}_2 \leq \delta$, which means that $\underset{h \in \mathcal{H}}{\sup} \underset{h^c \in \mathcal{H}^c}{\inf} \normvec{h - h^c}_2 \leq \delta$ when choosing $\delta_l = \frac{M_l^F \delta}{d K \sqrt{M}}$.

So $\mathcal{H}^c$ is a $\delta$-covering of $\mathcal{H}$, and $\mathcal{N}(\delta; \mathcal{H}, \normvec{\cdot}_2) \leq | \mathcal{H}^c | = \prod_{l=1}^d | \mathcal{C}_l^F |$. By Lemma \ref{lma-unit-ball-covering-number-bound} we know that $| \mathcal{C}_l^F | = \mathcal{N}(\frac{M_l^F \delta}{d K \sqrt{M}}; S_l^F, \normmm{\cdot}_F) \leq \left( 1 + \frac{2 d K \sqrt{M}}{\delta} \right)^{h_l \times h_{l-1}}$.
So we have:
\begin{equation} \label{ieq-A12-H-covering-bound}
    \mathcal{N}(\delta; \mathcal{H}, \normvec{\cdot}_2) \leq | \mathcal{H}^c | = \prod_{l=1}^d | \mathcal{C}_l^F | \leq \left( 1 + \frac{2 d K \sqrt{M}}{\delta} \right)^{\sum_{l=1}^d h_l \cdot h_{l-1}}.
\end{equation}
So we can conclude that:
\begin{equation*}
    \begin{aligned}
        \mathcal{J}(0;D) &= \int_0^D \sqrt{ ln \  \mathcal{N}\left(u; \mathcal{H}, \normvec{\cdot}_2 \right)} du \overset{\sroman{1}}{\leq} \int_0^D \sqrt{\left( \sum_{l=1}^d h_l \cdot h_{l-1} \right) ln \left( 1 + \frac{2 d K \sqrt{M}}{u} \right)} du \\[1mm]
        &\overset{\sroman{2}}{\leq} \sqrt{\sum_{l=1}^d h_l \cdot h_{l-1}} \int_0^D \sqrt{\frac{2 d K \sqrt{M}}{u}} du = \sqrt{2 d K  \sum_{l=1}^d h_l \cdot h_{l-1}} \sqrt[4]{M} \int_0^D u^{-\frac{1}{2}} du \\[1mm]
        &= 2 \sqrt{2 d K D \sum_{l=1}^d h_l \cdot h_{l-1}} \sqrt[4]{M} \overset{\sroman{3}}{\leq} 2\sqrt{2} \sqrt{\sum_{l=1}^d h_l \cdot h_{l-1}} \sqrt{d} K \sqrt{M},
    \end{aligned}
\end{equation*}
where $\sroman{1}$ is from \eqref{ieq-A12-H-covering-bound}; $\sroman{2}$ comes from the fact that $ln (1+x) \leq x, \forall{x \ge 0}$ and $\sroman{3}$ comes from \eqref{ieq-D-bound}.

Since we shows that $\mathcal{R}_\mathcal{S}(\mathcal{H}) \leq 2 \mathbb{E} \left[ \underset{\underset{\normvec{\gamma^\prime-\gamma}_2\leq \delta}{\gamma,\gamma^\prime \in \mathcal{H}(\mathcal{S})}}{\sup}(X_\gamma - X_{\gamma^\prime}) \right] + 32 \mathcal{J}(\delta/4; D)$ before, take $\delta \xrightarrow[]{} 0^+$, we have:
\begin{equation*}
    \mathcal{R}_\mathcal{S}(\mathcal{H}) \leq 32 \mathcal{J}(0; D) \leq 64 \sqrt{2} \sqrt{\sum_{l=1}^d h_l \cdot h_{l-1}} \sqrt{d} K \sqrt{M} = O\left( \sqrt{\sum_{l=1}^d h_l \cdot h_{l-1}} \sqrt{d} K \sqrt{M} \right).
\end{equation*}
\end{proof}

\subsection{Proof of Theorem \ref{thm-nn-Rs(H)-bound-1,infnorm}} \label{apd-prf-thm-nn-Rs(H)-bound-1,infnorm}
\begin{customthm}{{\ref{thm-nn-Rs(H)-bound-1,infnorm}}}
Let $\mathcal{U}(x)=\left\{ x^\prime | \normvec{x^\prime - x}_p \leq \epsilon \right\}$ (i.e. consider the $\ell_p$ attack), $\sigma(0)=0$ with Lipschitz constant $L$; moreover, let $\mathcal{F}=\left\{\!W_d \sigma(\!W_{d-1} \sigma(\!\cdots\!\sigma(\!W_1 x)))\big|\normvec{W_l}_{1, \infty}\!\!\leq\!\!M_l^{1, \infty}, l\!\!=\!\!1\!,\dots,\!d\! \right\}$. We then have:
\begin{equation*}
    \mathcal{R}_\mathcal{S}(\mathcal{H}) = O\left( \sqrt{\sum_{l=1}^d h_l h_{l-1}} \sqrt{d K_0 K_1} \sqrt{M} \right),
\end{equation*}
where
\begin{equation*}
    \begin{aligned}
        K_0 &= 2 B_{X,\epsilon}^{1, \infty} \cdot \left( B_{X^+}^\prime + B_{X^-}^\prime \right), \ K_1 = \frac{K_0}{2} + B_{X,\epsilon}^\prime \cdot \left( B_{X^+}^{1, \infty} + B_{X^-}^{1, \infty} \right),
    \end{aligned}
\end{equation*}
where
\begin{equation*}
    \begin{aligned}
       B_{X,\epsilon}^\prime &= L^{d-1} \prod_{l=1}^d h_l M_l^{1, \infty} \ m^{1-\frac{1}{p}}\left( \normvec{X}_{p, \infty} + \epsilon \right), \ 
       B_X^\prime = L^{d-1} \prod_{l=1}^d h_l M_l^{1, \infty} \ m^{1-\frac{1}{p}} \normvec{X}_{p, \infty}, \\[1mm]
       B_{X,\epsilon}^{1, \infty} &= L^{d-1} \prod_{l=1}^d M_l^{1, \infty} \ \left( \normvec{X}_{p, \infty} + \epsilon \right), \ 
       B_X^{1, \infty} = L^{d-1} \prod_{l=1}^d M_l^{1, \infty} \ \normvec{X}_{p, \infty}.
    \end{aligned}
\end{equation*}
\end{customthm}

Before giving the proof, we firstly introduce some useful lemmas.
\begin{lemma} \label{lma-vinfnorm-m1infnorm-property}
Let $A \in \mathbb{R}^{m \times n}, b \in \mathbb{R}^n$, then we have: $\normvec{A\cdot b}_\infty \leq \normvec{A}_{1,\infty} \cdot \normvec{b}_\infty.$
\end{lemma}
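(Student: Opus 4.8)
The plan is to reduce the statement to a coordinatewise estimate obtained from H\"older's inequality applied to each row of $A$. Writing $A_1, \dots, A_m$ for the rows of $A$, recall the paper's convention that $\normvec{A}_{1,\infty}$ denotes the $\ell_\infty$-norm over $i$ of $\normvec{A_i}_1$, that is, $\normvec{A}_{1,\infty} = \max_{1 \le i \le m} \sum_{j=1}^n |A_{ij}|$. So it suffices to control each entry of the vector $Ab$ by $\normvec{A}_{1,\infty}\normvec{b}_\infty$ and then take the maximum over the entries.

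First I would fix an index $i \in \{1,\dots,m\}$ and expand the $i$-th coordinate of $Ab$ as $(Ab)_i = \sum_{j=1}^n A_{ij} b_j$. Applying the triangle inequality and then bounding $|b_j| \le \normvec{b}_\infty$ for every $j$ yields $|(Ab)_i| \le \sum_{j=1}^n |A_{ij}|\,|b_j| \le \normvec{b}_\infty \sum_{j=1}^n |A_{ij}| = \normvec{b}_\infty \normvec{A_i}_1$; equivalently, this is the H\"older estimate $|A_i b| \le \normvec{A_i}_1 \normvec{b}_\infty$ for the dual pair $(1,\infty)$.

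Next I would bound $\normvec{A_i}_1 \le \max_{1 \le i' \le m} \normvec{A_{i'}}_1 = \normvec{A}_{1,\infty}$, which holds uniformly in $i$, and finally take the maximum over $i$ on the left: $\normvec{Ab}_\infty = \max_{1 \le i \le m} |(Ab)_i| \le \normvec{b}_\infty \normvec{A}_{1,\infty}$, which is the claimed inequality.

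I do not expect a genuine obstacle here: this lemma is the $(1,\infty)$-norm analogue of Lemma \ref{lma-v2norm-mFnorm-property} and follows from elementary norm inequalities. The only point requiring care is matching the paper's indexing convention for the mixed norm $\normvec{\cdot}_{1,\infty}$ (rows measured in $\ell_1$, then aggregated in $\ell_\infty$), so that the chain of inequalities is arranged in the correct order when the lemma is later invoked inside the proof of Theorem \ref{thm-nn-Rs(H)-bound-1,infnorm}.
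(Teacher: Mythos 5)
Your proposal is correct and follows exactly the same route as the paper's proof: expand $(Ab)_i = A_i b$, apply H\"older's inequality for the dual pair $(1,\infty)$ to each row, and take the maximum over $i$, using that $\normvec{A}_{1,\infty} = \max_i \normvec{A_i}_1$. Nothing is missing.
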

\begin{proof} [Proof of Lemma \ref{lma-vinfnorm-m1infnorm-property}]
Let $A_i$ be the rows of $A, i=1,2,\dots,m$, we have:
\begin{equation*}
    \normvec{A \cdot b}_\infty = \underset{1 \leq i \leq m}{\max} |A_i b| \overset{\sroman{1}}{\leq} \underset{1 \leq i \leq m}{\max} \left( \normvec{A_i}_1 \cdot \normvec{b}_\infty \right) = \normvec{A}_{1,\infty} \cdot \normvec{b}_\infty,
\end{equation*}
where $\sroman{1}$ is from the Holder's Inequality.
\end{proof}

\begin{lemma} \label{lma-v1norm-m1infnorm-property}
Let $A \in \mathbb{R}^{m \times n}, b \in \mathbb{R}^n$, then we have:
\begin{equation*}
    \normvec{A\cdot b}_1 \leq \normvec{A}_{\infty,1} \cdot \normvec{b}_1 \leq m \normvec{A}_{1,\infty} \cdot \normvec{b}_1.
\end{equation*}
\end{lemma}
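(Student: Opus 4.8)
The plan is to prove the two inequalities separately: the first by a row-wise application of Hölder's inequality (entirely parallel to the argument for Lemma \ref{lma-vinfnorm-m1infnorm-property}, but with the dual pair $(\infty,1)$ swapped to $(1,\infty)$ on the row norms), and the second by the elementary comparison $\normvec{\cdot}_\infty \le \normvec{\cdot}_1$ applied to each row.

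For the first inequality, I would write $A$ in terms of its rows $A_1,\dots,A_m \in \mathbb{R}^n$, so that $\normvec{A b}_1 = \sum_{i=1}^m |A_i b|$. Applying Hölder's inequality to each summand with the conjugate exponents $\infty$ and $1$ gives $|A_i b| \le \normvec{A_i}_\infty \normvec{b}_1$, and summing over $i$ yields $\normvec{A b}_1 \le \bigl(\sum_{i=1}^m \normvec{A_i}_\infty\bigr)\normvec{b}_1$. By the definition of the group norm $\normvec{\cdot}_{\infty,1}$ (the $\ell_1$-norm of the vector of $\ell_\infty$-norms of the rows), the parenthesized quantity is exactly $\normvec{A}_{\infty,1}$, which establishes $\normvec{A b}_1 \le \normvec{A}_{\infty,1}\normvec{b}_1$.

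For the second inequality, I would invoke Lemma \ref{lma-vec-norm} with $p_2 = 1$ and $p_1 = \infty$ to obtain $\normvec{A_i}_\infty \le \normvec{A_i}_1$ for each row $i$. Hence $\normvec{A}_{\infty,1} = \sum_{i=1}^m \normvec{A_i}_\infty \le \sum_{i=1}^m \normvec{A_i}_1 \le m \max_{1 \le i \le m} \normvec{A_i}_1 = m\,\normvec{A}_{1,\infty}$, which completes the chain.

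I do not expect any real obstacle here; the proof is a short two-step estimate. The only point that requires care is bookkeeping of the non-standard two-subscript group-norm notation $\normvec{\cdot}_{a,b}$ — making sure in each step that the inner exponent is applied to the rows and the outer exponent to the resulting vector of row norms — so that the roles of $\normvec{\cdot}_{\infty,1}$ and $\normvec{\cdot}_{1,\infty}$ are not accidentally interchanged.
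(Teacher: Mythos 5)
Your proposal is correct and follows essentially the same route as the paper's proof: a row-wise Hölder estimate with the $(\infty,1)$ conjugate pair for the first inequality, and the elementary comparisons $\normvec{\cdot}_\infty \le \normvec{\cdot}_1$ (entrywise on rows) together with $\normvec{\cdot}_1 \le m\normvec{\cdot}_\infty$ (on the vector of row norms) for the second. Your reading of the group-norm notation $\normvec{\cdot}_{a,b}$ matches the paper's convention, so no issues.
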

\begin{proof} [Proof of Lemma \ref{lma-v1norm-m1infnorm-property}]
Let $A_i$ be the rows of $A, i=1,2,\dots,m$, we have:
\begin{equation*}
    \normvec{A \cdot b}_1 = \sum_{i=1}^m |A_i b| \overset{\sroman{1}}{\leq} \sum_{i=1}^m \left( \normvec{A_i}_\infty \cdot \normvec{b}_1 \right) = \normvec{A}_{\infty,1} \cdot \normvec{b}_1,
\end{equation*}
where $\sroman{1}$ is from the Holder's Inequality. And we have:
\begin{equation*}
    \begin{aligned}
        \normvec{A}_{\infty,1} &= \normvec{\left( \normvec{A_1}_\infty, \cdots, \normvec{A_m}_\infty \right)}_1 \overset{\sroman{1}}{\leq} \normvec{\left( \normvec{A_1}_1, \cdots, \normvec{A_m}_1 \right)}_1 \\[1mm]
        &\overset{\sroman{2}}{\leq} m \normvec{\left( \normvec{A_1}_1, \cdots, \normvec{A_m}_1 \right)}_\infty = m \normvec{A}_{1,\infty},
    \end{aligned}
\end{equation*}
where $\sroman{1}$ is from the fact that $\normvec{x}_\infty \leq \normvec{x}_1$ and $\sroman{2}$ is the from the fact that for all $x\in \mathcal{R}^m, \normvec{x}_1 \leq m \normvec{x}_\infty$.
So we have: $\normvec{A\cdot b}_1 \leq \normvec{A}_{\infty,1} \cdot \normvec{b}_1 \leq m \normvec{A}_{1,\infty} \cdot \normvec{b}_1$.
\end{proof}

\begin{lemma} \label{lma-sigma-infnorm-Lip}
Suppose $\sigma$ is a $L$-Lipschitz function, then the elementwise vector map corresponding to $\sigma$ is also $L$-Lipschitz with respect to $\normvec{\cdot}_\infty$.
\end{lemma}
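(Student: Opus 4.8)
The plan is to mirror the proof of Lemma \ref{lma-sigma-2norm-Lip} verbatim, simply swapping the Euclidean norm for the sup-norm; no new idea is needed. Let $\sigma:\mathbb{R}\to\mathbb{R}$ be $L$-Lipschitz, and for $x\in\mathbb{R}^n$ let $\sigma(x)$ denote the vector with entries $\sigma(x_i)$. Fix arbitrary $x,y\in\mathbb{R}^n$; the goal is to show $\normvec{\sigma(x)-\sigma(y)}_\infty \le L\normvec{x-y}_\infty$.

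First I would unfold the definition of the $\ell_\infty$-norm of the difference, writing $\normvec{\sigma(x)-\sigma(y)}_\infty = \max_{1\le i\le n}|\sigma(x_i)-\sigma(y_i)|$. Then I would apply the scalar Lipschitz inequality $|\sigma(x_i)-\sigma(y_i)|\le L|x_i-y_i|$ coordinatewise, and pull the nonnegative factor $L$ out of the maximum, giving $\max_{1\le i\le n}|\sigma(x_i)-\sigma(y_i)| \le L\max_{1\le i\le n}|x_i-y_i| = L\normvec{x-y}_\infty$. Chaining the two steps yields the claim.

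The only point that is not purely mechanical is the observation that $\max_i L a_i = L\max_i a_i$ for $a_i\ge 0$, which is where $L$ is extracted; everything else is immediate from the definitions. In particular there is no real obstacle, and this is the expected "easy" lemma. It will be used downstream, together with Lemmas \ref{lma-vinfnorm-m1infnorm-property} and \ref{lma-v1norm-m1infnorm-property}, to propagate $\ell_\infty$- and $\ell_{1,\infty}$-type bounds layer by layer through the network in the proof of Theorem \ref{thm-nn-Rs(H)-bound-1,infnorm}.
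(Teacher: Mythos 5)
Your proof is correct and coincides with the paper's own argument: both unfold the $\ell_\infty$-norm as a coordinatewise maximum, apply the scalar Lipschitz bound $|\sigma(x_i)-\sigma(y_i)|\le L|x_i-y_i|$ entrywise, and extract the constant $L$ from the maximum. No issues.
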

\begin{proof} [Proof of Lemma \ref{lma-sigma-infnorm-Lip}]
\begin{equation*}
    \begin{aligned}
        \normvec{\sigma(x)-\sigma(y)}_\infty &= \underset{1 \leq i \leq n}{\max} | \sigma(x)_i - \sigma(y)_i | = \underset{1 \leq i \leq n}{\max} | \sigma(x_i) - \sigma(y_i) | \\[1mm]
        &\overset{\sroman{1}}{\leq} \underset{1 \leq i \leq n}{\max} L | x_i - y_i | = L \cdot \normvec{x - y}_\infty,
    \end{aligned}
\end{equation*}
where $\sroman{1}$ is because $\sigma$ is $L$-Lipschitz.
\end{proof}
Now we can turn to the proof of Theorem \ref{thm-nn-Rs(H)-bound-1,infnorm}.
\begin{proof} [Proof of Theorem \ref{thm-nn-Rs(H)-bound-1,infnorm}]
In this case, let $\mathcal{U}(x) = \left\{ x^\prime \big| \normvec{x^\prime - x}_p \leq \epsilon \right\}$, we have:
\begin{equation*}
    \mathcal{F}=\left\{ x \xrightarrow[]{} W_d \sigma(W_{d-1} \sigma(\cdots \sigma(W_1 x)))\  \big| \  \normvec{W_l}_\F \leq M_l^\F, l=1,\dots, d  \right\},
\end{equation*}
\begin{equation*}
    \mathcal{H} = \left\{ h(x,x^+,x^-) = \underset{x^\prime \in \mathcal{U}(x)}{\min} \left( f(x^\prime)^T\left( f(x^+) - f(x^-) \right) \right) |f \in \mathcal{F} \right\}.
\end{equation*}
Let $S_l^\F = \left\{ W_l \in \mathbb{R}^{h_l \times h_{l-1}} \big| \normvec{W_l}_\F \leq M_l^\F \right\}, l=1,2,\dots,d$. Let $\mathcal{C}_l^\F$ be the $\delta_l$-covering of $S_l^\F$ and define:
\begin{equation*}
    \mathcal{F}^c = \left\{ f^c: x \xrightarrow[]{} W_d^c  \sigma \left( W_{d-1}^c \sigma \left( \cdots \sigma (W_1^c x) \right) \right) \Big| W_l^c \in \mathcal{C}_l^\F, l=1,2,\dots,d \right\} \subseteq \mathcal{F},
\end{equation*}
\begin{equation*}
    \mathcal{H}^c = \left\{ h^c(x,x^+,x^-) = \underset{x^\prime \in \mathcal{U}(x)}{\min} \left( f(x^\prime)^T\left( f(x^+) - f(x^-) \right) \right) |f \in \mathcal{F}^c \right\} \subseteq \mathcal{H}.
\end{equation*}
Similar to the proof of Theorem \ref{thm-linear-Rs(H)-bound}, we know that the Rademacher Process is a sub-Gaussian Process with respect to the Euclidean metric, which induces the $\ell_2$ norm.

Similar to the proof of Theorem \ref{thm-nn-Rs(H)-bound-Fnorm}, given the training set $\mathcal{S}=\{(x_i,x_i^+,x_i^-)\}_{i=1}^M \triangleq \{z_i\}_{i=1}^M$, define the $\ell_2$-norm for a function in $\mathcal{H}$ as:
\begin{equation*}
    \forall{h \in \mathcal{H}}, \  \normvec{h}_2 \coloneqq \sqrt{\sum_{i=1}^M \left[ h(z_i) \right]^2}.
\end{equation*}
Define $\mathcal{H}(\mathcal{S}) = \left\{\left(h(z_1),h(z_2),\dots,h(z_M)\right) \Big| h \in \mathcal{H} \right\}$, with the same argument as in proof of Theorem \ref{thm-nn-Rs(H)-bound-Fnorm}, we have:
\begin{equation*}
    \mathcal{N}\left(\delta; \mathcal{H}(\mathcal{S}), \normvec{\cdot}_2 \right) = \mathcal{N}\left(\delta; \mathcal{H}, \normvec{\cdot}_2 \right),
\end{equation*}
and
\begin{equation*}
    \mathcal{R}_\mathcal{S}(\mathcal{H}) = \mathbb{E}[\underset{\theta \in \mathcal{H}(\mathcal{S})}{\sup} X_\theta].
\end{equation*}
So by Lemma \ref{lma-dudley-int} and \eqref{ieq-dudley-remark}, we know that: $\forall{\delta \in (0,D]}$:
\begin{equation*}
    \begin{aligned}
        \mathcal{R}_\mathcal{S}(\mathcal{H}) \!=\! \mathbb{E}[\underset{\theta \in \mathcal{H}(\mathcal{S})}{\sup} X_\theta] \!\leq\! \mathbb{E}\left[ \underset{\theta,\widetilde{\theta} \in \mathcal{H}(\mathcal{S})}{\sup} (X_\theta \!-\! X_{\widetilde{\theta}}) \right] \leq 2 \mathbb{E} \left[ \underset{\underset{\normvec{\gamma^\prime-\gamma}_2\leq \delta}{\gamma,\gamma^\prime \in \mathcal{H}(\mathcal{S})}}{\sup}(X_\gamma \!-\! X_{\gamma^\prime}) \right] \!+\! 32 \mathcal{J}(\delta/4; D),
    \end{aligned}
\end{equation*}
where
\begin{equation} \label{ieq-A13-2}
    D \!=\!\! \underset{\theta, \theta^\prime \in \mathcal{H}(\mathcal{S})}{\sup} \normvec{\theta - \theta^\prime}_2  \!\leq\! 2 \underset{\theta \in \mathcal{H}(\mathcal{S})}{\sup} \normvec{\theta}_2 \!=\! 2 \underset{h \in \mathcal{H}}{\sup} \normvec{h}_2 \!=\! 2 \underset{h \in \mathcal{H}}{\sup} \sqrt{\sum_{i=1}^M \left[ h(z_i) \right]^2} \leq 2 \sqrt{M} \underset{1 \leq i \leq M}{\underset{h \in \mathcal{H}}{\sup}} |h(z_i)|,
\end{equation}
and $\mathcal{J}(a;b) = \int_a^b \sqrt{ ln \mathcal{N}\left(u; \mathcal{H}(\mathcal{S}), \normvec{\cdot}_2 \right)} du = \int_a^b \sqrt{ln \mathcal{N}\left(u; \mathcal{H}, \normvec{\cdot}_2 \right)} du$

Then for any $f \in \mathcal{F}, x \in \mathcal{X}$, let $x^l$ be the output of $x$ passing through the first $l-1$ layers, we have:
\begin{equation*}
    \begin{aligned}
        \normvec{f(x)}_\infty &= \normvec{W_d \sigma (W_{d-1} x^{d-1})}_\infty \overset{\sroman{1}}{\leq} \normvec{W_d}_\F \cdot \normvec{\sigma (W_{d-1} x^{d-1})}_\infty \\[1mm]
        &\overset{\sroman{2}}{=} \normvec{W_d}_\F \cdot \normvec{\sigma (W_{d-1} x^{d-1}) - \sigma (\bm{0})}_\infty \overset{\sroman{3}}{\leq} L M_d^\F \normvec{W_{d-1} x^{d-1}}_\infty \leq \cdots \\[1mm]
        &\leq \!L^{d-1}\! \prod_{l=2}^d\! M_l^\F \normvec{W_1 x}_\infty \!\leq\! L^{d-1} \!\prod_{l=1}^d M_l^\F \normvec{x}_\infty \!\overset{\sroman{4}}{\leq}\! L^{d-1}\! \prod_{l=1}^d\! M_l^\F \max \left\{ 1, m^{-p} \right\} \normvec{x}_p \\[1mm]
        &= L^{d-1} \prod_{l=1}^d M_l^\F \ \normvec{x}_p \leq L^{d-1} \prod_{l=1}^d M_l^\F \  \normvec{X}_{p,\infty},
    \end{aligned}
\end{equation*}
where $\sroman{1}$ is from Lemma \ref{lma-vinfnorm-m1infnorm-property}; $\sroman{2}$ is from the fact that $\sigma(0)=0$; $\sroman{3}$ comes from the assumption that $\sigma$ is $L$-Lipschitz and $\normmm{W_d}_\F \leq M_d^\F$ and $\sroman{4}$ is attained by setting $r = 1, r^* = \infty$ in the proof of Lemma \ref{lma-perturbation-norm-bound}.

To simplify the notations, we define:
\begin{equation*}
    B_X^\F = L^{d-1} \cdot \prod_{l=1}^d M_l^\F \cdot \normvec{X}_{p, \infty}, B_X^\prime = L^{d-1} \cdot \left( \prod_{l=1}^d h_l \cdot M_l^\F \right) m^{1-\frac{1}{p}} \cdot \normvec{X}_{p, \infty}
\end{equation*}
\begin{equation*}
    B_{X, \epsilon}^\F = L^{d-1} \cdot \prod_{l=1}^d M_l^\F \cdot \left( \normvec{X}_{p, \infty} + \epsilon \right), B_{X,\epsilon}^\prime = L^{d-1} \cdot \left( \prod_{l=1}^d h_l \cdot M_l^\F \right) m^{1-\frac{1}{p}} \cdot \left( \normvec{X}_{p, \infty} + \epsilon \right).
\end{equation*}
So:
\begin{equation*}
    \forall{x \in \mathcal{X}, f \in \mathcal{F}}, \normvec{f(x)}_\infty \leq B_X^\F.
\end{equation*}
Similarly, we have:
\begin{equation} \label{ieq-A13-f(x')-linf-bound}
    \forall{x \in \mathcal{X}, f \in \mathcal{F}}, \  \forall{\ \normvec{x^\prime - x}_p \leq \epsilon}, \normvec{f(x^\prime)}_\infty \leq B_{X,\epsilon}^\F.
\end{equation}
Similarly,
\begin{equation*}
    \begin{aligned}
        \normvec{f(x)}_1 &= \normvec{W_d \sigma (W_{d-1} x^{d-1})}_1 \overset{\sroman{1}}{\leq} h_d \cdot \normvec{W_d}_\F \cdot \normvec{\sigma (W_{d-1} x^{d-1})}_1 \\[1mm]
        &\overset{\sroman{2}}{=} h_d \cdot \normvec{W_d}_\F \cdot \normvec{\sigma (W_{d-1} x^{d-1}) - \sigma (\bm{0})}_1 \overset{\sroman{3}}{\leq} h_d \cdot L \cdot M_d^\F \normvec{W_{d-1} x^{d-1}}_1 \leq \cdots \\[1mm]
        &\leq L^{d-1} \prod_{l=2}^d h_l M_l^\F \ \normvec{W_1 x}_1 \leq L^{d-1} \prod_{l=1}^d h_l M_l^\F \ \normvec{x}_1 \\[1mm]
        &\overset{\sroman{4}}{\leq} L^{d-1} \left( \prod_{l=1}^d h_l M_l^\F \right) \  \max \left\{ 1, m^{1-\frac{1}{p}} \right\} \normvec{x}_p \\[1mm]
        &= L^{d-1} \left( \prod_{l=1}^d h_l M_l^\F \right) m^{1-\frac{1}{p}} \normvec{x}_p \leq L^{d-1} \left( \prod_{l=1}^d h_l M_l^\F \right) m^{1-\frac{1}{p}} \normvec{X}_{p,\infty},
    \end{aligned}
\end{equation*}
where $\sroman{1}$ is from Lemma \ref{lma-v1norm-m1infnorm-property}; $\sroman{2}$ is from the fact that $\sigma(0)=0$; $\sroman{3}$ comes from the assumption that $\sigma$ is $L$-Lipschitz and $\normmm{W_d}_\F \leq M_d^\F$ and $\sroman{4}$ is attained by setting $r = \infty, r^* = 1$ in the proof of Lemma \ref{lma-perturbation-norm-bound}.

So we know that:
\begin{equation} \label{ieq-A13-f(x)-l1-bound}
    \forall{x \in \mathcal{X}, f \in \mathcal{F}}, \normvec{f(x)}_1 \leq B_X^\prime.
\end{equation}
Similarly, we have:
\begin{equation} \label{ieq-A13-f(x')-l1-bound}
    \forall{x \in \mathcal{X}, f \in \mathcal{F}}, \  \forall{\ \normvec{x^\prime - x}_p \leq \epsilon}, \normvec{f(x^\prime)}_1 \leq B_{X,\epsilon}^\prime.
\end{equation}
For any $h \in \mathcal{H}, z \in \mathcal{X}^3$, let $x^* = \underset{\normvec{x^\prime - x}_p \leq \epsilon}{\arg\min} f(x^\prime)^T \left( f(x^+) - f(x^-) \right)$ and let $x^l$ be the output of $x^*$ passing through the first $l-1$ layers, we have:
\begin{equation*}
    \begin{aligned}
        |h(z)| &= | \underset{\normvec{x^\prime - x}_p \leq \epsilon}{\inf} f(x^\prime)^T \left( f(x^+) - f(x^-) \right) | = |f(x^*)^T \left( f(x^+) - f(x^-) \right)| \\[1mm]
        &\leq \normvec{f(x^*)}_\infty \cdot \normvec{f(x^+) - f(x^-)}_1 \overset{\sroman{1}}{\leq} B_{X,\epsilon}^\F \cdot (B_{X^+}^\prime + B_{X^-}^\prime),
    \end{aligned}
\end{equation*}
where $\sroman{1}$ is from \eqref{ieq-A13-f(x')-linf-bound} and \eqref{ieq-A13-f(x)-l1-bound}. So we get:
\begin{equation} \label{ieq-A13-D-bound}
    D \overset{\sroman{1}}{\leq} 2 \sqrt{M} \underset{1 \leq i \leq M}{\underset{h \in \mathcal{H}}{\sup}} |h(z_i)| \leq 2 \sqrt{M} B_{X,\epsilon}^\F \cdot (B_{X^+}^\prime + B_{X^-}^\prime) \triangleq \sqrt{M} K_0,
\end{equation}
where $\sroman{1}$ is from \eqref{ieq-A13-2}. Now, we need to find the smallest distance between $\mathcal{H}$ and $\mathcal{H}^c$, i.e.
\begin{equation*}
    \underset{h \in \mathcal{H}}{\sup} \underset{h^c \in \mathcal{H}^c}{\inf} \normvec{h - h^c}_2.
\end{equation*}
By the discussion in \eqref{ieq-A13-2}, we have $\normvec{h-h^c}_2 \leq \sqrt{M} \underset{1 \leq i \leq M}{\max} | h(z_i) - h^c(z_i) |$. For any $z_i = (x_i, x_i^+, x_i^-), i=1,2,\dots,M$, given $h$ and $h^c$ such that $\normvec{W_l - W_l^c}_\F \leq \delta_l, l=1,2,\dots,d$, we have:
\begin{equation*}
    | h(z_i) - h^c(z_i) | = \big| \underset{\normvec{x_i^\prime - x_i}_p \leq \epsilon}{\inf} f(x_i^\prime)^T \left( f(x_i^+) - f(x_i^-) \right) - \underset{\normvec{x_i^\prime - x_i}_p \leq \epsilon}{\inf} f^c(x_i^\prime)^T \left( f^c(x_i^+) - f^c(x_i^-) \right) \big|.
\end{equation*}
Let $x_i^* = \underset{\normvec{x_i^\prime - x_i}_p \leq \epsilon}{\arg\inf} f(x_i^\prime)^T \left( f(x_i^+) - f(x_i^-) \right)$ and $x_i^c = \underset{\normvec{x_i^\prime - x_i}_p \leq \epsilon}{\arg\inf} f^c(x_i^\prime)^T \left( f^c(x_i^+) - f^c(x_i^-) \right)$, and let
\begin{equation*}
    y_i=\left\{
        \begin{array}{rcl}
        x_i^c       &      & {f(x_i^*)^T \left( f(x_i^+) - f(x_i^-) \right) \ge f^c(x_i^c)^T \left( f^c(x_i^+) - f^c(x_i^-) \right)}     \\ 
        x_i^*     &      & {otherwise}
        \end{array} \right..
\end{equation*}
Then we have:
\begin{equation} \label{ieq-A13-3}
    \begin{aligned}
        | h(z_i) - h^c(z_i) | &= | f(x_i^*)^T \left( f(x_i^+) - f(x_i^-) \right) - f^c(x_i^c)^T \left( f^c(x_i^+) - f^c(x_i^-) \right) | \\[1mm]
        &\overset{\sroman{1}}{\leq} | f(y_i)^T \left( f(x_i^+) - f(x_i^-) \right) - f^c(y_i)^T \left( f^c(x_i^+) - f^c(x_i^-) \right) | \\[1mm]
        &= | f(y_i)^T \left( f(x_i^+) - f(x_i^-) \right) - f^c(y_i)^T \left( f(x_i^+) - f(x_i^-) \right) \\[1mm]
        &\ \ \ \ \ \ \ \ \ \ \ \ \ \ \ \ \ + f^c(y_i)^T \left( f(x_i^+) - f(x_i^-) \right)  - f^c(y_i)^T \left( f^c(x_i^+) - f^c(x_i^-) \right) | \\[1mm]
        &\overset{\sroman{2}}{\leq} | \left( f(y_i) - f^c(y_i) \right)^T \left( f(x_i^+) - f(x_i^-) \right) | \\[1mm]
        &\ \ \ \ \ \ \ \ \ \ \ \ \ \ \ \ \ + | f^c(y_i)^T \left( f(x_i^+) - f^c(x_i^+) \right) | + | f^c(y_i)^T \left( f(x_i^-) - f^c(x_i^-) \right) | \\[1mm]
        &\overset{\sroman{3}}{\leq} (B_{X^+}^\prime + B_{X^-}^\prime) \normvec{f(y_i) - f^c(y_i)}_\infty + B_{X,\epsilon}^\prime \normvec{f(x_i^+)-f^c(x_i^+)}_\infty\\[1mm]
        &\ \ \ \ \ \ \ \ \ \ \ \ \ \ \ \ \  + B_{X,\epsilon}^\prime \normvec{f(x_i^-) - f^c(x_i^-)}_\infty,
    \end{aligned}
\end{equation}
where $\sroman{1}$ is easily verified by the definition of $y_i$; $\sroman{2}$ is from the triangle inequality and $\sroman{3}$ is from \eqref{ieq-A13-f(x)-l1-bound} and \eqref{ieq-A13-f(x')-l1-bound}.
Again we define $g_b^a(\cdot)$ as:
\begin{equation*}
    g_b^a(y) = W_b \sigma \left( W_{b-1} \sigma\left( \cdots W_{a+1} \sigma \left( W_a^c \cdots \sigma (W_1^c y) \right) \right) \right).
\end{equation*}
Then:
\begin{equation*}
    \begin{aligned}
        \normvec{f(y_i) - f^c(y_i)}_\infty &= \normvec{g_d^0(y_i) - g_d^d(y_i)}_\infty \\[1mm]
        &= \normvec{g_d^0(y_i) - g_d^1(y_i) + g_d^1(y_i) - g_d^2(y_i) + \cdots + g_d^{d-1}(y_i) - g_d^d(y_i)}_\infty \\[1mm]
        &\overset{\sroman{1}}{\leq} \normvec{g_d^0(y_i) - g_d^1(y_i)}_\infty + \cdots + \normvec{g_d^{d-1}(y_i) - g_d^d(y_i)}_\infty,
    \end{aligned}
\end{equation*}
where $\sroman{1}$ is from the triangle inequality.

Then we calculate $\normvec{g_d^{l-1}(y_i) - g_d^l(y_i)}_\infty, l=1,2,\dots,d$:
\begin{equation} \label{ieq-A13-gdln1-gdl-bound}
    \begin{aligned}
        \normvec{g_d^{l-1}(y_i) - g_d^l(y_i)}_\infty &= \normvec{W_d \sigma \left( g_{d-1}^{l-1}(y_i) \right) - W_d \sigma \left( g_{d-1}^l(y_i) \right)  }_\infty \\[1mm]
        &\overset{\sroman{1}}{\leq} \normvec{W_d}_\F \cdot \normvec{\sigma \left( g_{d-1}^{l-1}(y_i) \right) - \sigma \left( g_{d-1}^l(y_i) \right)}_\infty \\[1mm]
        &\overset{\sroman{2}}{\leq} L M_d^\F \normvec{g_{d-1}^{l-1}(y_i) - g_{d-1}^l(y_i)}_\infty \leq \cdots \\[1mm]
        &\overset{\sroman{3}}{\leq} L^{d-l}\cdot \prod_{j=l+1}^d M_j^\F \cdot \normvec{W_l \sigma \left( g_{l-1}^{l-1}(y_i) \right) - W_l^c \sigma \left( g_{l-1}^{l-1}(y_i) \right) }_\infty \\[1mm]
        &= L^{d-l}\cdot \prod_{j=l+1}^d M_j^\F \cdot \normvec{\left( W_l - W_l^c \right) \sigma \left( g_{l-1}^{l-1}(y_i) \right) }_\infty \\[1mm]
        &\overset{\sroman{4}}{\leq} L^{d-l}\cdot \prod_{j=l+1}^d M_j^\F \cdot \delta_l \cdot \normvec{\sigma \left( g_{l-1}^{l-1}(y_i) \right)}_\infty,
    \end{aligned}
\end{equation}
where $\sroman{1}$ is from Lemma \ref{lma-vinfnorm-m1infnorm-property}; $\sroman{2}$ comes from the assumption that $\sigma$ is $L$-Lipschitz and $\normvec{W_d}_\F \leq M_d^\F$; $\sroman{3}$ is from the definition of $g_b^a(\cdot)$ and $\sroman{4}$ is from Lemma \ref{lma-vinfnorm-m1infnorm-property} and the choice of $h^c$ when $h$ is fixed, which means that $\normvec{W_l - W_l^c}_\F \leq \delta_l$.

Next, we upper bound $\normvec{\sigma \left( g_{l-1}^{l-1}(y_i) \right)}_\infty$:
\begin{equation} \label{ieq-A13-sigmagln1ln1-bound}
    \begin{aligned}
        \normvec{\sigma \left( g_{l-1}^{l-1}(y_i) \right)}_\infty &= \normvec{\sigma \left( g_{l-1}^{l-1}(y_i) \right) - \sigma(\bm{0})}_\infty \overset{\sroman{1}}{\leq} L \cdot \normvec{g_{l-1}^{l-1}(y_i)}_\infty \\[1mm]
        &= L \cdot \normvec{W_{l-1}^c \sigma\left( g_{l-2}^{l-2}(y_i) \right)}_\infty \overset{\sroman{2}}{\leq} L \cdot \normvec{W_{l-1}^c}_\F \cdot \normvec{\sigma\left( g_{l-2}^{l-2}(y_i) \right)}_\infty \\[1mm]
        &\overset{\sroman{3}}{\leq} L \ M_{l-1}^\F \normvec{\sigma\left( g_{l-2}^{l-2}(y_i) \right)}_\infty \leq \cdots \leq L^{l-1} \cdot \prod_{j=1}^{l-1} M_j^\F \cdot \normvec{y_i}_\infty,
    \end{aligned}
\end{equation}
where $\sroman{1}$ is because $\sigma$ is $L$-Lipschitz; $\sroman{2}$ is from Lemma \ref{lma-vinfnorm-m1infnorm-property} and $\sroman{3}$ is because $\normvec{W_{l-1}^c}_\F \leq M_{l-1}^\F$.

\eqref{ieq-A13-gdln1-gdl-bound} and \eqref{ieq-A13-sigmagln1ln1-bound} show that:
\begin{equation*}
    \begin{aligned}
        \normvec{g_d^{l-1}(y_i)-g_d^l(y_i)}_\infty &\leq L^{d-1} \frac{\prod_{j=1}^d M_j^\F}{M_l^\F} \delta_l \normvec{y_i}_\infty \\[1mm]
        &\overset{\sroman{1}}{\leq} L^{d-1} \frac{\prod_{j=1}^d M_j^\F}{M_l^\F}\  \delta_l \  \left( \normvec{X}_{p,\infty} + \epsilon \right) = B_{X,\epsilon}^\F \  \frac{\delta_l}{M_l^\F},
    \end{aligned}
\end{equation*}
where $\sroman{1}$ is from Lemma \ref{lma-perturbation-norm-bound}.

Similarly, we have:
\begin{equation*}
    \begin{aligned}
        \normvec{g_d^{l-1}(x_i^+)-g_d^l(x_i^+)}_\infty &\leq L^{d-1} \frac{\prod_{j=1}^d M_j^\F}{M_l^\F} \delta_l \normvec{x_i^+}_\infty \\[1mm]
        &\leq L^{d-1} \frac{\prod_{j=1}^d M_j^\F}{M_l^\F}\  \delta_l \  \normvec{X^+}_{p,\infty} = B_{X^+}^\F \  \frac{\delta_l}{M_l^\F},
    \end{aligned}
\end{equation*}
\begin{equation*}
    \begin{aligned}
        \normvec{g_d^{l-1}(x_i^-)-g_d^l(x_i^-)}_\infty &\leq L^{d-1} \frac{\prod_{j=1}^d M_j^\F}{M_l^\F} \delta_l \normvec{x_i^-}_\infty \\[1mm]
        &\leq L^{d-1} \frac{\prod_{j=1}^d M_j^\F}{M_l^\F}\  \delta_l \  \normvec{X^-}_{p,\infty} = B_{X^-}^\F \  \frac{\delta_l}{M_l^\F}.
    \end{aligned}
\end{equation*}
Combine the above with \eqref{ieq-A13-3}:
\begin{equation*}
    \begin{aligned}
        | h(z_i) - h^c(z_i) | &\leq \left( B_{X^+}^\prime + B_{X^-}^\prime \right) \left( \normvec{g_d^0(y_i) - g_d^1(y_i)}_\infty + \cdots + \normvec{g_d^{d-1}(y_i) - g_d^d(y_i)}_\infty \right) \\[1mm]
        &\ \ \ \ \ \ \ \ + B_{X,\epsilon}^\prime \left( \normvec{g_d^0(x_i^+) - g_d^1(x_i^+)}_\infty + \cdots + \normvec{g_d^{d-1}(x_i^+) - g_d^d(x_i^+)}_\infty \right) \\[1mm]
        &\ \ \ \ \ \ \ \ + B_{X,\epsilon}^\prime \left( \normvec{g_d^0(x_i^-) - g_d^1(x_i^-)}_\infty + \cdots + \normvec{g_d^{d-1}(x_i^-) - g_d^d(x_i^-)}_\infty \right) \\[1mm]
        &=(B_{X^+}^\prime + B_{X^-}^\prime) B_{X,\epsilon}^\F \sum_{l=1}^d \frac{\delta_l}{M_l^\F} + B_{X,\epsilon}^\prime B_{X^+}^\F \sum_{l=1}^d \frac{\delta_l}{M_l^\F} + B_{X,\epsilon}^\prime B_{X^-}^\F \sum_{l=1}^d \frac{\delta_l}{M_l^\F} \\[1mm]
        &= \left[ B_{X,\epsilon}^\F (B_{X^+}^\prime + B_{X^-}^\prime) + B_{X,\epsilon}^\prime (B_{X^+}^\F + B_{X^-}^\F) \right] \sum_{l=1}^d \frac{\delta_l}{M_l^\F} \triangleq K_1 \sum_{l=1}^d \frac{\delta_l}{M_l^\F}.
    \end{aligned}
\end{equation*}
So $\normvec{h-h^c}_2 = \sqrt{M} \underset{1 \leq i \leq M}{\max} | h(z_i) - h^c(z_i) | \leq \sqrt{M} \sum_{l=1}^d \frac{K_1 \delta_l}{M_l^\F}$. Let $\delta_l = \frac{M_l^\F \delta}{d K_1 \sqrt{M}}$, then:
\begin{equation*}
    \normvec{h - h^c}_2 \leq \sqrt{M} \sum_{l=1}^d \frac{K_1}{M_l^\F} \cdot \frac{M_l^\F \delta}{d K_1 \sqrt{M}} \leq \delta,
\end{equation*}
which means that: $\forall{h \in \mathcal{H}}, \exists{h^c \in \mathcal{H}^c}\  s.t.\  \normvec{h - h^c}_2 \leq \delta$, so we have $\underset{h \in \mathcal{H}}{\sup} \underset{h^c \in \mathcal{H}^c}{\inf} \normvec{h - h^c}_2 \leq \delta$ when choosing $\delta_l = \frac{M_l^\F \delta}{d K_1 \sqrt{M}}$.

So $\mathcal{H}^c$ is a $\delta$-covering of $\mathcal{H}$, and $\mathcal{N}(\delta; \mathcal{H}, \normvec{\cdot}_2) \leq | \mathcal{H}^c | = \prod_{l=1}^d | \mathcal{C}_l^\F |$. By Lemma \ref{lma-unit-ball-covering-number-bound} we know that: $| \mathcal{C}_l^\F | = \mathcal{N}(\frac{M_l^\F \delta}{d K_1 \sqrt{M}}; S_l^\F, \normvec{\cdot}_\F) \leq \left( 1 + \frac{2 d K_1 \sqrt{M}}{\delta} \right)^{h_l \times h_{l-1}}$.

This means:
\begin{equation} \label{ieq-A13-H-covering-bound}
    \mathcal{N}(\delta; \mathcal{H}, \normvec{\cdot}_2) \leq | \mathcal{H}^c | = \prod_{l=1}^d | \mathcal{C}_l^\F | \leq \left( 1 + \frac{2 d K_1 \sqrt{M}}{\delta} \right)^{\sum_{l=1}^d h_l \cdot h_{l-1}}.
\end{equation}
So we can conclude that:
\begin{equation*}
    \begin{aligned}
        \mathcal{J}(0;D) &= \int_0^D \sqrt{ ln \  \mathcal{N}\left(u; \mathcal{H}, \normvec{\cdot}_2 \right)} du \overset{\sroman{1}}{\leq} \int_0^D \sqrt{\left( \sum_{l=1}^d h_l \cdot h_{l-1} \right) ln \left( 1 + \frac{2 d K_1 \sqrt{M}}{u} \right)} du \\[1mm]
        &\overset{\sroman{2}}{\leq} \sqrt{\sum_{l=1}^d h_l \cdot h_{l-1}} \int_0^D \sqrt{\frac{2 d K_1 \sqrt{M}}{u}} du = \sqrt{2 d K_1  \sum_{l=1}^d h_l \cdot h_{l-1}} \sqrt[4]{M} \int_0^D u^{-\frac{1}{2}} du \\[1mm]
        &= 2 \sqrt{2 d K_1 D \sum_{l=1}^d h_l \cdot h_{l-1}} \sqrt[4]{M} \overset{\sroman{3}}{\leq} 2\sqrt{2} \sqrt{\sum_{l=1}^d h_l \cdot h_{l-1}} \sqrt{d K_0 K_1} \sqrt{M},
    \end{aligned}
\end{equation*}
where $\sroman{1}$ is from \eqref{ieq-A13-H-covering-bound}; $\sroman{2}$ comes from the fact that $ln (1+x) \leq x, \forall{x \ge 0}$ and $\sroman{3}$ comes from \eqref{ieq-A13-D-bound}.

Since we shows that $\mathcal{R}_\mathcal{S}(\mathcal{H}) \leq 2 \mathbb{E} \left[ \underset{\underset{\normvec{\gamma^\prime-\gamma}_2\leq \delta}{\gamma,\gamma^\prime \in \mathcal{H}(\mathcal{S})}}{\sup}(X_\gamma - X_{\gamma^\prime}) \right] + 32 \mathcal{J}(\delta/4; D)$ before, take $\delta \xrightarrow[]{} 0^+$, we have:
\begin{equation*}
    \mathcal{R}_\mathcal{S}(\mathcal{H}) \!\leq\! 32 \mathcal{J}(0; D) \!\leq\! 64 \sqrt{2} \sqrt{\sum_{l=1}^d h_l \cdot h_{l-1}} \sqrt{d K_0 K_1} \sqrt{M} \!=\! O\!\!\left(\! \sqrt{\sum_{l=1}^d h_l \cdot h_{l-1}} \sqrt{d K_0 K_1} \sqrt{M} \right)\!.
\end{equation*}
\end{proof}

\section{Extra Experimental Results} \label{apd-exp-results}
In this section, we present our experimental results for \textbf{CIFAR-100}. The basic settings are the same as $\S$\ref{sec-exp}.
\subsection{Improvement from Regularizer}

\begin{figure}
    \centering    
    \subfigure[Influence on clean accuracy] {    
        \includegraphics[scale=0.4]{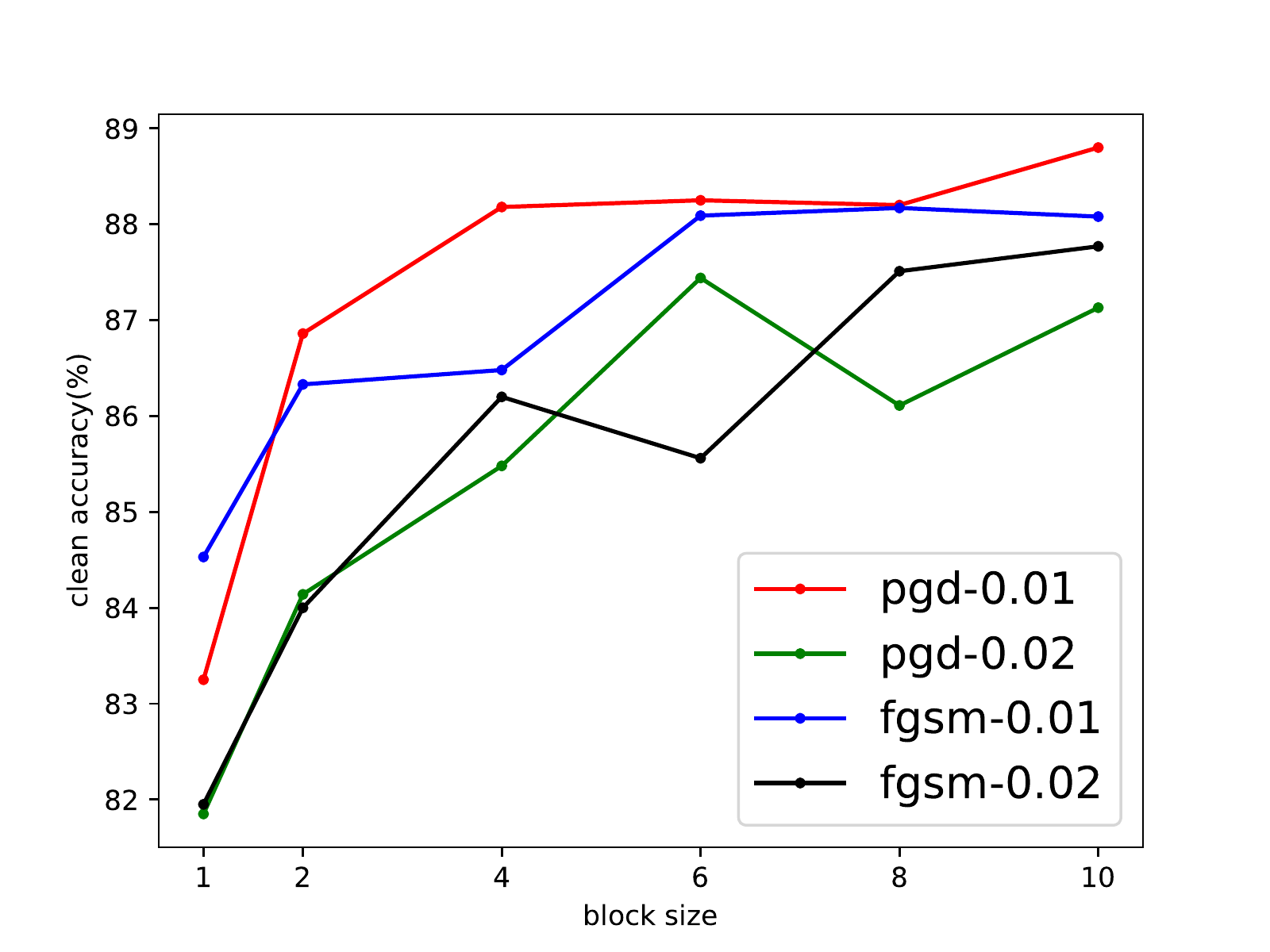}  
    }     
    \subfigure[Influence on adversarial accuracy] { 
        \includegraphics[scale=0.4]{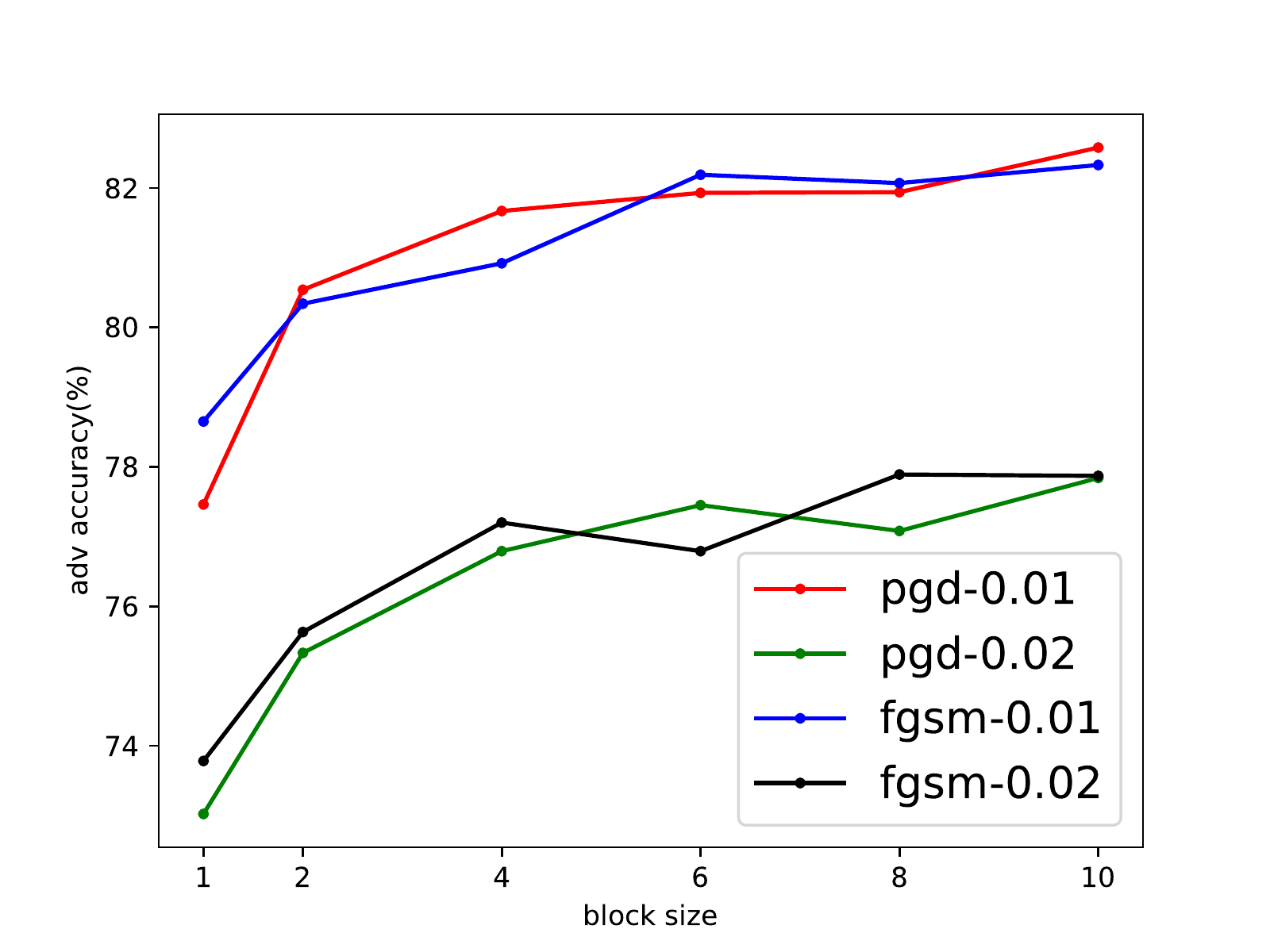}     
    }
    \caption{The effect of block size on the accuracy. In the figure, we show the clean accuracy and the adversarial accuracy of the mean classifier under PGD and FGSM attack with $\epsilon=0.01$ and $\epsilon=0.02$. The block size is choosen from $\{1, 2, 4, 6, 8, 10\}$. \textbf{(a) The influence on the clean accuracy;} \textbf{(b) The influence on the adversarial accuracy.}} \label{fig-100-b}
\end{figure}

\begin{table}
\centering
\setlength{\tabcolsep}{4mm}{
\begin{tabular}{cccccccc}
\hline \hline
\multirow{2}{*}{\textsc{Attack}} & \multirow{2}{*}{$\epsilon$} & \multirow{2}{*}{\textsc{Type}} & \multicolumn{5}{c}{$\lambda$}         \\ \cline{4-8} 
                        &                          &                       & $0$ & $0.002$ & $0.005$ & $0.01$ & $0.02$  \\ \hline
\multirow{4}{*}{PGD}    & \multirow{2}{*}{0.01}       & \textsc{Clean}     & $84.07$ & $\bm{85.65}$  & $85.48$  & $85.52$ & $58.53$ \\
                        &                          & \textsc{Adv}          & $78.49$  & $\bm{79.55}$ & $79.39$  & $79.41$ & $79.44$ \\ \cline{2-8} 
                        & \multirow{2}{*}{0.02}       & \textsc{Clean}     & $81.87$  & $81.91$ & $81.93$  & $\bm{82.08}$& $81.97$ \\
                        &                          & \textsc{Adv}          & $73.01$  &$73.05$ & $73.07$ & $\bm{73.16}$& $73.07$ \\ \hline
\multirow{4}{*}{FGSM}   & \multirow{2}{*}{0.01}       & \textsc{Clean}     & $84.61$  & $84.53$ & $\bm{84.95}$ & $84.58$& $84.55$ \\ 
                        &                          & \textsc{Adv}          & $78.70$  & $78.64$ & $\bm{79.18}$ & $78.70$& $78.63$ \\ \cline{2-8} 
                        & \multirow{2}{*}{0.02}       & \textsc{Clean}     & $80.47$ & $80.48$ & $80.39$ & $80.44$& $\bm{82.09}$ \\
                        &                          & \textsc{Adv}          & $72.45$ & $72.45$ & $72.36$ & $72.41$& $\bm{73.87}$ \\ \hline \hline
\end{tabular}}
\caption{Results of experiments on the regularizer on data set \textbf{CIFAR-100}. In this table, we list the clean accuracy (\textsc{Clean}) and adversarial accuracy (\textsc{Adv}) of the mean classifier under the PGD and FGSM attack with $\epsilon=0.01$ and $\epsilon=0.02$. $\lambda$ is chosen from $\{0, 0.002, 0.005, 0.01, 0.02\}$, and $\lambda=0$ indicates no regularizer.} \label{tab_100_reg}
\end{table}

Table \ref{tab_100_reg} shows that the $F$-norm regularizer helps to improve the adversarial robustness of the model, which agrees with our Theorem \ref{thm-nn-Rs(H)-bound-Fnorm}.

\subsection{Effect of Block Size}
Figure \ref{fig-100-b} records the influence of block size on the clean accuracy and the adversarial accuracy of the model, from which we can see that a larger block size will yield better adversarial accuracy.

\bibliography{refs}

\end{document}